\newcommand{\R}[1]{\text{Regret}(#1)}
\newcommand{\gap}[1]{\text{gap}_{#1}}
\newcommand{\image}[0]{\text{Im}}
\newcommand{\algname}[0]{\text{ReLEX}\xspace}
\newcommand{\la}{\left\langle}
\newcommand{\ra}{\right\rangle}
\let\hat\widehat
\let\tilde\widetilde
\newcommand{\db}{\mathbf{d}}
\newcommand{\vb}{\mathbf{v}}
\newcommand{\xb}{\mathbf{x}}
\newcommand{\yb}{\mathbf{y}}
\newcommand{\zb}{\mathbf{z}}
\newcommand{\Ab}{\mathbf{A}}
\newcommand{\Bb}{\mathbf{B}}
\newcommand{\Cb}{\mathbf{C}}
\newcommand{\Db}{\mathbf{D}}
\newcommand{\Ib}{\mathbf{I}}
\newcommand{\Kb}{\mathbf{K}}
\newcommand{\Mb}{\mathbf{M}}
\newcommand{\Qb}{\mathbf{Q}}
\newcommand{\Ub}{\mathbf{U}}
\newcommand{\Xb}{\mathbf{X}}
\newcommand{\cA}{\mathcal{A}}
\newcommand{\cD}{\mathcal{D}}
\newcommand{\cE}{\mathcal{E}}
\newcommand{\cF}{\mathcal{F}}
\newcommand{\cH}{\mathcal{H}}
\newcommand{\cM}{\mathcal{M}}
\newcommand{\cN}{\mathcal{N}}
\newcommand{\cO}{\mathcal{O}}
\newcommand{\cS}{{\mathcal{S}}}
\newcommand{\cZ}{\mathcal{Z}}
\newcommand{\EE}{\mathbb{E}}
\newcommand{\PP}{\mathbb{P}}
\newcommand{\RR}{\mathbb{R}}
\newcommand{\bepsilon}{\bm{\epsilon}}
\newcommand{\btheta}{\bm{\theta}}
\newcommand{\bphi}{\bm{\phi}}
\newcommand{\bpsi}{\bm{\psi}}
\newcommand{\bLambda}{\bm{\Lambda}}
\newcommand{\bPhi}{\bm{\Phi}}
\newcommand{\bPsi}{\bm{\Psi}}
\newcommand{\argmin}{\mathop{\mathrm{argmin}}}
\newcommand{\argmax}{\mathop{\mathrm{argmax}}}
\newcommand{\tr}{\mathop{\mathrm{tr}}}
\DeclareMathOperator{\ind}{\mathds{1}}  
\newcommand*{\zero}{{\bm 0}}
\newcommand{\diag}{{\rm diag}}
\newtheoremstyle{mytheoremstyle} 
    {\topsep}                    
    {\topsep}                    
    {\normalfont}                   
    {}                           
    {\bfseries}                   
    {.}                          
    {.5em}                       
    {}  
\theoremstyle{mytheoremstyle}
\newcommand{\BlackBox}{\rule{1.5ex}{1.5ex}}  
\def\QED{~\rule[-1pt]{5pt}{5pt}\par\medskip}
\newenvironment{proof}{\par\noindent{\bf Proof\ }}{\hfill\BlackBox\\[2mm]}
\newtheorem{theorem}{Theorem}
\newtheorem{lemma}[theorem]{Lemma}
\newtheorem{remark}[theorem]{Remark}
\newtheorem{definition}[theorem]{Definition}
\newtheorem{assumption}[theorem]{Assumption}
\numberwithin{equation}{section}
\numberwithin{theorem}{section}
\title{Provably Efficient Representation Selection in Low-rank Markov Decision Processes: From Online to Offline RL}
\author[1]{\href{mailto:<weightzero@ucla.edu>?Subject=Your UAI 2023 paper}{Weitong~Zhang}{}}
\author[1]{\href{mailto:<jiafanhe19@ucla.edu>?Subject=Your UAI 2023 paper}{Jiafan~He}{}}
\author[1]{\href{mailto:<drzhou@cs.ucla.edu>?Subject=Your UAI 2023 paper}{Dongruo~Zhou}{}}
\author[2,3]{\href{mailto:<amyzhang@fb.com>?Subject=Your UAI 2023 paper}{Amy~Zhang}{}}
\author[1]{\href{mailto:<qgu@cs.ucla.edu>?Subject=Your UAI 2023 paper}{Quanquan~Gu}{}}
\affil[1]{%
    Department of Computer Science\\
    University of California, Los Angeles\\
    California, USA
}
\affil[2]{%
    Department of Electrical and Computer Engineering\\
    University of Texas at Austin\\
    Texas, USA
}
\affil[3]{%
    Facebook AI Research
  }
\begin{document}
\maketitle
\begin{abstract}
The success of deep reinforcement learning (DRL) lies in its ability to learn a representation that is well-suited for the exploration and exploitation task. To understand how the choice of representation can improve the efficiency of reinforcement learning (RL), we study representation selection for a class of low-rank Markov Decision Processes (MDPs) where the transition kernel can be represented in a bilinear form. We propose an efficient algorithm, called \algname, for representation learning in both online and offline RL. Specifically, we show that the online version of \algname, called \algname-UCB, always performs no worse than the state-of-the-art algorithm without representation selection, and achieves a strictly better constant regret if the representation function class has a "coverage" property over the entire state-action space. For the offline counterpart, \algname-LCB, we show that the algorithm can find the optimal policy if the representation class can cover the state-action space and achieves gap-dependent sample complexity. This is the first result with constant sample complexity for representation learning in offline RL.
\end{abstract}

\section{Introduction}
Reinforcement Learning (RL) has achieved impressive results in game-playing \citep{mnih2013playing}, robotics \citep{kober2013reinforcement}, and many other tasks. However, most current RL tasks are challenging due to large state-action spaces that make traditional tabular methods intractable. Instead, function approximation methods can be applied to tackle this challenge. In this scheme, the state-action pairs are compressed to provide some compact \emph{representations} that leverage the underlying structure in the MDP, and therefore allow the algorithm to generalize to unseen states.

In modern approaches, deep neural networks are often used as feature extractors to generate these representations. Since different feature extractors powered by different pretrained neural networks can be used, multiple valid representations are generated to encode the same state-action pair. However, how to select the \emph{best} representation for different scenarios is not well addressed in the literature. Nonetheless, this task is crucial in many applications such as robotics, where a robot is usually equipped with different types of sensors working through different physical phenomena \citep{de2018integrating}, like accelerometers, magnetic sensors, or laser sensors. These sensors estimate the current state of the robot and provide a representation of the current state as the output. However, the accuracy and robustness of these sensors vary in different states. Thus an intelligent system should utilize the most accurate and robust sensor in different states to achieve the best performance.

For online reinforcement learning, existing works on representation learning \citep{jiang2017contextual, agarwal2020flambe, modi2021model, uehara2021representation, sun2019model, du2021bilinear} often assumed that the transition dynamic can be represented as a linear function of an unknown representation, and they proposed algorithms to learn a single representation with provable sample complexity guarantees. They do not consider the possibility of using different representations for different scenarios (i.e., state-action pairs). On the other hand, for offline reinforcement learning, representation learning is much less studied, with only a few notable exceptions \citep{uehara2021representation, zhang2022making}. Nevertheless, neither of these works considers selecting different representations for different scenarios.

Based on the above motivation, we are interested in the following research question:

\begin{center}
\emph{Can selecting a good representation improve sample efficiency in (online and offline) RL?} \
\end{center}

In this paper, we answer the above question affirmatively for a class of low-rank Markov Decision Processes (MDPs) named bilinear MDP \citep{yang2020reinforcement}, where the transition kernel $\PP(s' | s, a)$ can be written as a bilinear form of a known feature map $\bphi(s, a)$, unknown matrix $\Mb^*$, and known feature map $\bpsi(s')$. Our goal is to select the best representation $\bphi(s,a)$ from a finite representation class $\Phi$ for different $(s,a)$ such that the resulting RL algorithm outperforms that using a single representation for all state-action pairs. For both online and offline reinforcement learning, we propose an algorithm called \algname, which can select the best representation in a representation function class in different scenarios. The key idea behind the representation selection is to choose the representation which gives the smallest optimistic Q-value function\footnote{For offline RL, our algorithm chooses the representation which gives the largest pessimistic Q-value function.}. Our contributions are summarized as follows:

\begin{itemize}[leftmargin=*,nosep]
\item In the context of online reinforcement learning, we propose a novel algorithm named \algname-UCB, which capitalizes on the benefits of representation selection. Our results show that \algname-UCB performs as well as the state-of-the-art algorithms that do not select representations, and attains a strictly superior regret bound when the representation function class has good coverage for all state-action pairs under the optimal policy.
\item For offline reinforcement learning, we introduce \algname-LCB as a counterpart to \algname-UCB for the online setting. We demonstrate that \algname-LCB is capable of identifying the optimal policy with gap-dependent sample complexity of the offline data. Furthermore, when the representation function class satisfies certain coverage assumptions under the behavior policy, our algorithm enjoys a constant sample complexity, which represents a novel contribution to this line of research.
\item To validate the effectiveness of representation selection and the superiority of our algorithms, we conduct empirical studies on various MDPs with different representation functions. Our experimental results demonstrate that both \algname-UCB and \algname-LCB outperform any single representation function in the respective settings, thus confirming the power of representation selection and the advantages of our proposed algorithms.
\end{itemize}

\noindent\textbf{Notation.} Scalars and constants are denoted by lower and upper case letters, respectively. Vectors are denoted by lower case boldface letters $\xb$, and matrices by upper case boldface letters $\Ab$. We denote by $[k]$ the set $\{1, 2, \cdots, k\}$ for positive integers $k$. For two non-negative sequence $\{a_n\}, \{b_n\}$, $a_n = \cO(b_n)$ means that there exists a positive constant $C$ such that $a_n \le Cb_n$, and we use $\tilde \cO(\cdot)$ to hide the $\log$ factor in $\cO(\cdot)$ except for the episode number $k$. We denote by $\|\cdot\|_2$ the Euclidean norm of vectors and the spectral norm of matrices and by $\|\cdot\|_{\mathrm F}$ the Frobenius norm of a matrix. We denote the Loewner ordering between two symmetric matrices as $\Ab \succeq \Bb$ if $\Ab - \Bb \succeq \zero$. For a vector $\xb \in \RR^d$, we denote by $\xb_{[i]}$ the $i$-th element of $\xb$, for a matrix $\Ab \in \RR^{d \times d}$, we denote by $\Ab_{[ii]}$ the $i$-th diagonal element. For any symmetric matrix $\Ab \in \RR^{d \times d}$ and vector $\xb \in \RR^d$, we denote 
$\|\xb\|_{\Ab} = \sqrt{\xb^\top \Ab \xb}$. We define $\Ib$ as the identity matrix. We denote the image space of a matrix $\Ab$ as $\mathrm{Im}(\Ab)$, and a vector is in the image space $\xb \in \mathrm{Im}(\Ab)$ if there exists a vector $\yb$ such that $\xb = \Ab \yb$.

\section{Related Works}
In this section, we discuss related works on representation learning and selection in both online and offline RL. Additional related works are discussed in Appendix~1.

Learning good representations in reinforcement learning has a long history. One of the earliest methods for aggregating different states and generating a compressed representation for those states is state aggregation \citep{michael1995reinforcement, dean1997model, ravindran2002model, abel2016near}. In deep RL, deep neural networks have been used to learn good representations in different settings \citep{diuk2008object, stooke2020decoupling, yang2020offline}. Several theoretical works \citep{du2019provably1, misra2020kinematic, foster2020instance} have studied the Block MDP where the dynamics are governed by a discrete latent state space and proposed algorithms based on decoding the latent state space from the observations. \citet{Du2020Is} showed that having a good approximate representation for the Q-function, transition kernel, or optimal Q-function is not sufficient for efficient learning, and can still have an exponential sample complexity unless the quality of the approximation is above a certain threshold. In the linear function approximation setting, several representation learning algorithms have been proposed. For example, \citet{jiang2017contextual} proposed a model-free algorithm called OLIVE, which can learn the correct representation from a representation function class (in the realizable setting). \citet{modi2021model} improved the OLIVE algorithm by proposing the MOFFLE algorithm, which is computationally efficient. On the other hand, \citet{agarwal2020flambe} proposed a model-based algorithm, FLAMBE, which can find the correct representation from the representation function class. \citet{uehara2021representation} improved FLAMBE by combining the maximum likelihood estimator and optimistic estimation (resp. pessimistic estimation) for representation learning in online RL (resp. offline RL). Some recent works \citep{qiu2022contrastive, zhang2022making} have used contrastive learning instead of the maximum likelihood estimator in \citet{agarwal2020flambe,uehara2021representation} to obtain more practical algorithms.

All the aforementioned works focus on learning the "correct" representation, which can well approximate the underlying transition kernel. In contrast, we pursue a different objective, which is to select a good representation adaptively for different state-action pairs from a class of correct representations, which can potentially lead to better performance. To achieve this objective, \citet{papini2021leveraging} proposed an algorithm, LEADER, which leverages good representations in linear contextual bandits. Independent of our work, \citet{papinireinforcement} extended the representation selection in linear contextual bandits to linear MDPs \citep{jin2020provably}. The differences between their work and ours are as follows. First, they considered the linear MDP setting, which yields a linear dependence on the size of the representation class (i.e., $|\Phi|$) in their regret, while we studied a special linear MDP called bilinear MDP \citep{yang2020reinforcement}, which enjoys a logarithmic dependency on $|\Phi|$ (i.e., $\log(|\Phi|)$) in our regret. Second, we also consider representation learning in offline RL, which, to our knowledge, has not been considered before in the literature. Compared to previous results on representation learning in offline RL \citep{zhang2022making, uehara2021representation}, we provide the first gap-dependent sample complexity

\section{Preliminaries}
We consider time-inhomogeneous episodic Markov Decision Processes (MDP), denoted by $\cM(\cS, \cA, H, \{r_h\}_{h=1}^H, \{\PP_h\}_{h=1}^H)$. Here, $\cS$ is the state space, $\cA$ is the finite action space, $H$ is the length of each episode, $r_h: \cS \times \cA \mapsto [0, 1]$ is the reward function at step $h$, and $\PP_h(s'| s, a)$ denotes the probability for state $s$ to transition to state $s'$ with action $a$ at step $h$. We further assume that the initial state $s_1$ is randomly sampled from a distribution $\mu$.

Given the MDP, we consider a deterministic policy $\pi = \{\pi_h\}_{h=1}^H$ as a sequence of functions where $\pi_h: \cS \mapsto \cA$ maps a state $s$ to an action $a$. For each state-action pair $(s, a) \in \cS \times \cA$ at time-step $h$, given the policy $\pi$, we denote the Q-function and value function as follows:
\begin{align*}
Q_h^\pi(s, a) &= r_h(s, a) + \EE\left[\sum_{h' = h + 1}^H r_{h'}(s_{h'},\pi_{h'}(s_{h'}))\right], \\
V_h^\pi(s) &= Q_h^\pi(s, \pi_h(s)),
\end{align*}
where $s_h = s, a_h = a$ and for all $h' \in [h, H]$, the distribution of $s_{h' + 1}$ is given by $\PP_{h'}(s_{h'} | s, a)$. Both $Q_h^\pi(s, a)$ and $V_h^\pi(s)$ are bounded in $[0, H]$ by definition. We further define the optimal value function as $V^h(s) := \sup_{\pi} V^\pi_h(s)$ and the optimal Q-function as $Q^h(s, a) := \sup_{\pi} Q^\pi_h(s, a)$. The optimal policy is denoted by $\pi_h^(s):= \argmax_\pi V^\pi_h(s)$, and we assume the optimal policy function $\pi^*$ is unique.

For simplicity, we define $[\PP_h V](s, a) = \EE_{s' \sim \PP_h(s' | s, a)} V(s')$ for any function $V: \cS \mapsto \RR$. With this notation, we have the following Bellman equation, as well as the Bellman optimality equation:
\begin{align}
Q_h^\pi(s, a) &= r_h(s, a) + [\PP_h V_{h+1}^\pi](s, a), \notag \\
Q_h^*(s, a) &= r_h(s, a) + [\PP_h V_{h+1}^*](s, a),\label{eq:bellman}
\end{align}
where $V_{H+1}^*$ and $V_{H+1}^\pi$ are set to be zero for any state $s$ and policy $\pi$.

We will focus on learning the structure of the MDP in an online manner. The algorithm is designed to run for $K$ episodes, where for each episode $k \in [K]$, the first step is to determine a policy $\pi^k = \{\pi_h^k\}_{h=1}^H$ based on the knowledge collected from the environment. The agent then follows the policy and the dynamics of the MDP. Specifically, at each step $h \in [H]$, the agent observes the state $s_h^k$, selects an action $a_h^k$ using the policy $\pi_h^k$, transitions to the next state $s_{h+1}^k$ generated by the MDP, and receives the reward $r_{h+1}^k$.

We define the cumulative regret for the first $K$ episodes as $\R{K} = \sum_{k=1}^K V_1^*(s_1^k) - V_1^{\pi^k}(s_1^k)$, where $V_1^*(s_1^k)$ is the optimal value of the initial state in episode $k$ and $V_1^{\pi^k}(s_1^k)$ is the value of the initial state in episode $k$ under the policy $\pi^k$.

The aim of this paper is to establish a problem-dependent regret bound. To achieve this goal, we require the assumption of a strictly positive minimal sub-optimality gap~\citep{simchowitz2019non, yang2021q, he2020logarithmic}. This assumption ensures that the difference between the value of the optimal policy and the value of any other policy is not too small, which is essential for proving the regret bound.

\begin{assumption}\label{asm:gap}
We have $\gap{\min}>0$, where
\begin{align}
     \gap{h}(s, a) &:= V_h^*(s) - Q_h^*(s, a), \notag \\
     \gap{\min} &:= \inf_{h, s, a}\big\{\gap{h}(s, a): \gap{h}(s, a) \neq 0\big\}. \label{def:gap}
\end{align}
\end{assumption}
We consider the bilinear MDPs in~\citet{yang2020reinforcement}, where the probability transition kernel is a bi-linear function of the feature vectors.
\begin{definition}[Bilinear MDPs, \citealt{yang2020reinforcement}]\label{asm:lin}
    For each state-action-state triple $(s, a, s') \in \cS \times \cA \times \cS$,  vectors $\bphi(s, a) \in \RR^d, \bpsi(s') \in \RR^{d'}$ are known as the feature vectors. There exists an unknown matrix $\Mb^*_h \in \RR^{d \times d'}$ for all $h \in [H]$ such that $\PP_h( s' | s, a) = \bphi^\top(s, a)\Mb^*_h\bpsi(s')$. We denote $\Kb_{\bpsi} = \sum_{s \in \cS}\bpsi(s)\bpsi^\top(s)$ which is assumed to be invertible. Let $\bPsi = (\bpsi(s_1), \bpsi(s_2), \cdots, \bpsi(s_{|\cS|}))^\top \in \RR^{|\cS| \times d'}$ be the matrix of all $\bpsi$ features. We assume that for all $h \in [H]$, 
    $\|\Mb_h^*\|_F^2 \le C_{\Mb}d$, for all $(s, a) \in \cS \times \cA$, $\|\bphi(s, a)\|_2^2 \le C_{\bphi}d$, and for all $\vb \in \RR^{|\cS|}$, $\|\bPsi^\top \vb\|_{2} \le C_{\bpsi}\|\vb\|_{\infty}$ and $\|\bPsi\Kb_{\bpsi}^{-1}\|_{2, \infty} \le C'_{\bpsi}$, where $C_{\Mb}, C_{\bphi}, C_{\bpsi}$ and $C_{\bpsi}'$ are all positive constants. 
\end{definition}

In this work, we focus on the bilinear MDP, which is a specific case of the low-rank MDP or linear MDP. In the low-rank MDP framework \citep{yang2019sample,jin2020provably}, the transition kernel is assumed to be a bilinear function of the state-action feature vector and an unknown measure $\btheta_h(s')$ of dimension $d$, i.e., $\PP_h(s'| s, a) = \la \bphi(s, a), \btheta_h(s') \ra$. In our approach, we model $\btheta_h(s')$ as a product of an unknown matrix $\Mb_h^*$ and a feature vector $\bpsi(s')$. In contrast, in the linear MDP, the reward function is assumed to be a linear function of the state-action feature vector $\bphi(s, a)$, whereas we assume it is known to simplify the presentation. As noted by \citet{yang2020reinforcement}, we can replace this assumption with a linear function of the representation $\bphi(s, a)$ and add an optimistic reward function estimation step similar to LinUCB \citep{chu2011contextual} without significantly altering the analysis.

Given the linear function representation of the MDP, we aim to learn a good representation $\bphi(s, a)$ for different state-action pairs in the representation function class $\Phi$, in both online and offline settings. To this end, we introduce the definition of an \emph{admissible} representation function class.

\begin{definition}[Admissible Function Class]\label{def:admissible}
A representation function class $\Phi$ is admissible if every $\bphi \in \Phi$ satisfies Definition~\ref{asm:lin} with a different dimension $d_{\bphi}$, a different parameter $\Mb_{h, \bphi}^*$, a different constant $C_{\bphi}$, and the same context $\bpsi(s')$. In other words, for any representation function $\bphi \in \Phi$, the same transition kernel can be represented as $\PP_h(s' | s, a) = \bphi^\top(s, a)\Mb_{h, \bphi}^*\bpsi(s')$.
\end{definition}

\begin{remark}
Definition~\ref{def:admissible} suggests that the same transition kernel can be represented in different ways, which is quite common in practice. For example, one can always represent a bilinear MDP with finite state and action spaces by a tabular MDP, which can be further represented by another bilinear MDP with $d_{\bphi} = |\cS| \times |\cA|$. However, different representations $\bphi$ may have different learning complexities. For instance, the linear representations with a lower dimension $d_{\bphi}$ are easier to learn than the tabular representation with $d_{\bphi} = |\cS| \times |\cA|$. Thus, our goal is to select a good representation from the admissible function class for different state-action pairs.
\end{remark}

\begin{remark}
    In the rest of our paper, we assume that the functions $\bphi \in \bPhi$ is given to the algorithm. In real-world applications, however, such a function class can be chosen as hand-crafted features or pre-trained neural networks.
\end{remark}
\begin{remark}\label{rm:1}
Although one can also consider learning the representation function tuple $(\bphi, \bpsi) \in \Phi \times \Psi$ simultaneously, there is no difference compared with assuming the $\bpsi$ function is fixed and known in terms of the algorithm and the analysis. This is because the $Q$-function is a linear function of $\bphi$ (see~\eqref{eq:q}), and the confidence radius of the estimated $Q$-function only depends on $\bphi$ (see~\eqref{eq:estq}). Therefore, we only select the representation of $\bphi$ instead of both $\bphi$ and $\bpsi$ without loss of generality.
\end{remark}

\section{Representation Selection for Online RL}
\subsection{\algname-UCB Algorithm}
We present the \emph{Representation seLection for EXploration and EXploitation} with upper confidence bound (\algname-UCB) algorithm for selecting a good representation from a finite representation function class $\Phi$ for different state-action pairs. The algorithm, shown in Algorithm~\ref{alg:main}, maintains a different model parameter estimate for each individual representation $\bphi \in \Phi$. Under Definition~\ref{asm:lin}, we have the following property for each representation $\bphi$:
\begin{align}
&\left[ \PP_h \bpsi(\cdot)^\top\Kb_{\bpsi}^{-1}\right ](s, a) = \sum_{s' \in \cS} \PP_h(s' | s, a) \bpsi^\top(s')\Kb_{\bpsi}^{-1} \notag\\
&\quad= \sum_{s' \in \cS} \bphi^\top(s, a)\Mb_{h, \bphi}^*\bpsi(s') \bpsi^\top(s')\Kb_{\bpsi}^{-1}\notag \\\
&\quad= \bphi^\top(s, a)\Mb_{h, \bphi}^*,\label{hhelp}
\end{align}
where the last equality uses the fact that $\Kb_{\bpsi} = \sum_{s' \in \cS} \bpsi(s')\bpsi^\top(s')$. Equation \eqref{hhelp} suggests that we can build $\Mb_{h, \bphi}^k$, the estimate of $\Mb^*_{h, \bphi}$, as the solution to the ridge regression problem analytically, given the sampled triples $\{s_h^j, a_h^j, s_{h+1}^j\}_{j=1}^{k-1}$ in Line~\ref{ln:1} of Algorithm~\ref{alg:main}.

\begin{remark}
    The computation of $\Kb_{\bpsi}$ requires only one pass of the state space since it does not depend on the round $k$ or the representation $\bphi$. Thus, it is not computationally expensive and would not be a bottleneck in the algorithm's computational complexity. Additionally, when the state space is infinite, $\Kb_{\bpsi}$ can be efficiently approximated using Monte Carlo integration techniques, as demonstrated in prior works such as \citet{zhou2020provably} and \citet{yang2020reinforcement}.
\end{remark}

With the estimate $\Mb_{h, \bphi}^k$, Algorithm~\ref{alg:main} recursively estimates the $Q$-function starting from $Q_{H+1}^k = 0$. The $Q$-function at step $h$ can be deduced as $Q_{h, \bphi}^k(s, a) = r(s, a) + [\PP_h V_{h+1}^k](s, a)$, where $V_{h+1}^k$ is the estimate of the value function at step $h+1$. Using the Bellman equation~\eqref{eq:bellman}, we can further write $Q_{h, \bphi}^k(s, a)$ as
\begin{align}
Q_{h, \bphi}^k(s, a) &= r(s, a) + \sum_{s' \in \cS} \bphi^\top(s, a) \Mb_{h, \bphi}^k\bpsi(s')V_{h+1}^k(s') \label{eq:q}.
\end{align}
To construct an optimistic estimation of the $Q$-function, we follow the approach proposed by \citet{yang2020reinforcement} and add an optimism bonus term to the right-hand side of \eqref{eq:q}. The optimism bonus is defined as $C_{\bpsi}H\sqrt{\beta_{k, \bphi}}\|\bphi(s, a)\|{(\Ub_{h, \bphi}^k)^{-1}}$, where $C_{\bpsi}$ and $\beta_{k, \bphi}$ are user-defined hyperparameters, and $\Ub_{h, \bphi}^k$ is the covariance matrix calculated in Line~\ref{ln:2}. This results in the following optimistic estimation of the $Q$-function:
\begin{align}
Q_{h, \bphi}^k(s, a) &= r(s, a) + \sum_{s' \in \cS} \bphi^\top(s, a) \Mb_{h, \bphi}^k\bpsi(s')V_{h+1}^k(s') \notag \\
&\quad+ C_{\bpsi}H\sqrt{\beta_{k, \bphi}}\|\bphi(s, a)\|_{(\Ub_{h, \bphi}^k)^{-1}}.\label{eq:estq}
\end{align}

By following the standard analysis for optimistic estimation \citep{abbasi2011improved}, it can be shown that $Q_{h, \bphi}^k(s, a)$ is an upper confidence bound for $Q_h^{\pi^k}(s, a)$ for each representation $\bphi \in \Phi$, according to~\eqref{eq:estq}. In other words, $Q_{h, \phi}^k(s, a) \ge Q_h^*(s, a)$. In Line~\ref{ln:4}, the algorithm selects the representation with the smallest optimistic estimation, which should be considered as the tightest optimistic estimation given the current covariance matrix $\Ub_{h, \bphi}^k$. Alternatively, this step can be interpreted as selecting the representation with the minimal uncertainty, which is measured by $\|\bphi\|_{(\Ub_{h, \bphi}^k)^{-1}}$. This approach is intuitive and ensures that the algorithm chooses the representation that provides the best possible estimate of the value function for the current state-action pair.

As a result, Line~\ref{ln:4} selects different representations $\bphi$ for different state-action pairs implicitly by minimizing the optimistic $Q$-function, which results in a tighter optimistic estimation of the $Q$-function. The algorithm then executes the greedy policy and obtains the optimistic value function defined in Line~\ref{ln:3}.

Our algorithm offers a distinct advantage over \citet{yang2020reinforcement} in that it enables the selection of different representations for different state-action pairs. This is in contrast to representation learning, which seeks a \emph{universal} representation that works well for \emph{all} state-action pairs. For instance, our algorithm can adaptively choose a representation that yields accurate value function estimates for certain state-action pairs, even if its performance is suboptimal for others. By doing so, our algorithm outperforms \citet{yang2020reinforcement} which relies on a single representation for all state-action pairs. This demonstrates the benefits of representation selection in online RL.

\begin{algorithm}[t!]
\caption{Online Representation seLection for EXploration and EXploitation with Upper Confidence Bound (ReLEX-UCB)}\label{alg:main}
\begin{algorithmic}[1]
    \STATE Initialize $Q_{H+1, \bphi}^k(s, a) = 0$ for all $(s, a, k, \bphi)$
    \FOR{episodes $k=1,\ldots,K$}
    \STATE Received the initial state $s_1^k$. 
    \FOR{step $h=H,\ldots,1$}
    \FOR{representation $\bphi \in \Phi$}
    \STATE $\Mb_{h, \bphi}^k=\argmin_{\Mb} \big(\sum_{j=1}^{k-1} \|\bpsi^\top(s_{h+1}^k)\Kb_{\bpsi}^{-1} - \bphi^\top(s_h^k, a_h^k)\Mb\|_2^2 + \|\Mb\|_F^2\big)$ \label{ln:1}
    \STATE $\Ub_{h, \bphi}^k = \Ib + \sum_{j=1}^{k-1}\bphi(s_h^k, a_h^k)\bphi^\top(s_h^k, a_h^k)$\label{ln:2}
    \STATE Calculate $Q_{h, \bphi}^k(s, a)$ as~\eqref{eq:estq}
    \ENDFOR
    \STATE Set $Q_h^k(s, a) = \min_{\bphi \in \Phi} \{Q_{h, \bphi}^k(s, a)\}$, \label{ln:4}
    \STATE Set $V_h^k(s) = \max\{0, \min\{\max_a Q_h^k(s, a), H\}\}$ \label{ln:3}
    \ENDFOR
    \FOR{step $h=1,\ldots,H$}
    \STATE Take action $a_h^k\leftarrow \argmax_{a} Q_h^k(s_h^k,a)$
    \STATE Receive next state $s_{h+1}^k$ \label{algorithm:line5}
    \ENDFOR
\ENDFOR
\end{algorithmic}
\end{algorithm}
\subsection{Constant Regret Bounds}
We present the regret bound for \algname-UCB, which demonstrates the advantage of representation selection in a rigorous way. To do so, we require the following assumption:

\begin{assumption}\label{asm:hls}
Suppose that the representation function class $\Phi$ is admissible. For any $(s, a, h) \in \cS \times \cA \times [H]$, there exists a representation $\bphi \in \Phi$ such that $\bphi(s, a) \in \image(\bLambda_{h, \bphi})$, where
\begin{align*}
\bLambda_{h, \bphi} := \EE_{d_{\pi^*}}[\phi(s_h, \pi_h^*(s_h))\phi^\top(s_h, \pi_h^*(s_h))]
\end{align*}
with $d_{\pi^*}$ representing the state visitation distribution induced by the optimal policy $\pi^*$. We also denote $\sigma_{h, \bphi}$ as the minimal non-zero eigenvalue of $\bLambda_{h, \bphi}$ and $\sigma_{\bphi} = \min_{h \in [H]} \sigma_{h, \bphi}$.
\end{assumption}
\begin{remark}
Several related assumptions, known as \emph{diversity assumptions}, have been proposed to lower bound the minimum eigenvalue of the term $\bphi\bphi^\top$. These assumptions are discussed in detail in \citet{papini2021leveraging}. Here, we extend the assumption from the linear bandit to the reinforcement learning setting, where the state distribution at time-step $h$ is defined by the optimal policy. We note that a similar but stronger assumption, called `uniformly excited features', is made by \citet{wei2021learning} in the infinite time-horizon average reward MDP setting. There, they assume $\bLambda$ is strictly positive definite for \emph{all} possible policies $\pi$. In contrast, we only require $\bLambda$ to be strictly positive for the distribution induced by the \emph{optimal} policy, which is a weaker assumption. This implies that the states that rarely occur in the optimal policy do not significantly impact the quality of the representation.
\end{remark}

\begin{remark}
We notice that a similar assumption called UniSoft-mixing, is made in~\citet{papinireinforcement}, where they assume that for all $(s, a) \in \cS \times \cA$, there exists a $\bphi \in \Phi$ such that $\bphi(s, a) \in \text{span }\{\bphi(s, \pi^*(s)): d_{\pi^*}(s) > 0\}$. The difference between our assumption and their assumption is that they filter out the states which are \emph{almost surely} never visited by the optimal policy $\pi^*$. In contrast, we take the expectation with respect to $d_{\pi^*}(s)$ without explicitly filtering out the never-visited states.
\end{remark}
Now we are ready to present the regret bound result.
\begin{theorem}\label{thm:main}
Under Assumptions~\ref{asm:gap} and~\ref{asm:hls}, set $\beta_{k, \bphi} = c(C_{\Mb} + {C'_{\bpsi}}^{2})d_{\bphi}\log(kHC_{\bphi} |\Phi| / \delta)$ in Algorithm~\ref{alg:main}, where $c$ is an absolute positive constant, then with probability at least $1 - 5\delta$, there exists a threshold
\begin{align}
    k^*= \max_{\bphi \in \Phi}\Big\{\text{poly}(d_{\bphi}, \sigma_{\bphi}^{-1}, H, \log(|\Phi| / \delta), \gap{\min}^{-1} \notag \\
    \quad, C_{\bphi}, C_{\bpsi}, C_{\Mb}, C_{\bpsi}')\Big\} \label{eq:k-star}
\end{align} 
independent from episode number $k$. The regret for the first $k$ episodes is upper bounded by 
\begin{align*}
    \R{k} &\le 2 + \min_{\bphi \in \Phi}\bigg\{ \frac{128C_{\bpsi}^2H^5d_{\bphi}^2c(C_{\Mb} + {C'_{\bpsi}}^{2})}{\gap{\min}}\\
    &\qquad \times \log\big(1 + C_{\bphi}\tilde kd_{\bphi}\big)\log\big(\tilde kHC_{\bphi} |\Phi| / \delta\big)\bigg\}\\
    &\quad + \frac{96H^4\log\big(2\tilde k (1 + \log(H \gap{\min}^{-1}))|\Phi|/ \delta\big)}{\gap{\min}}\\
    &\quad+ \frac{16H^2\log\big(\big(1 + \log\big(H{\tilde k}\big)\big){\tilde k}^2|\Phi|/ \delta\big)}3,
\end{align*}
where we denote $\tilde k := \min\{k, k^*\}$.
\end{theorem}

\begin{remark}
The regret bound exhibits a phase transition as the episode number $k$ increases. When $k \le k^*$, the regret is upper bounded by $\tilde \cO(d^2H^5\log(k)\gap{\min}^{-1})$, which is exactly the logarithmic regret bound (given by Lemma 3.3 in Appendix). However, when $k \ge k^*$, the regret bound becomes $\tilde \cO(d^2H^5\log(k^*)\gap{\min}^{-1})$. Since $k^*$ is independent of $k$ (as shown in~\eqref{eq:k-star}), the regret bound turns into a problem-dependent constant regret bound that no longer grows as the total number of episodes $k$ increases. This result aligns with our intuition: once we have a fixed, strictly positive sub-optimality gap, the regret might initially increase over the first few episodes. However, once the agent collects enough data, it can learn the environment well and will no longer incur any additional regret.
\end{remark}

\begin{remark}
If Assumption~\ref{asm:hls} does not hold, then $k^*=\infty$, and our regret bound degenerates to the gap-dependent regret bound. Similar bounds have been proved in~\citet{he2020logarithmic} for both linear MDPs and linear mixture MDPs. Our bound has the same dependency on $H$, $\gap{\min}$, and episode number $k$ as the bounds in~\citet{he2020logarithmic}. However, in terms of $d$, our dependency is $\cO(d^2)$, while the dependency is $\cO(d^3)$ for linear MDPs in the LSVI-UCB algorithm~\citep{jin2020provably}. This difference arises from estimating the MDP parameter $\Mb_{h, \bphi}^*$, which is similar to that in the UCRL-VTR algorithm~\citep{ayoub2020model} for learning linear mixture MDPs. Furthermore, since our regret bound minimizes over all $\bphi \in \Phi$, the performance of \algname-UCB is always competitive with the best one using any single representation $\bphi$ in that function class, ignoring the logarithmic terms.
\end{remark}

\begin{remark}\label{rm:logphi}
We note that our regret bound includes an additional $\log(|\Phi|)$ factor, which reflects the cost of representation selection to guarantee that all $|\Phi|$ regressions can be learned well by the union bound. This term is caused by the worst-case scenario and may be eliminated in practice by considering the average-case scenario instead. By doing so, we can potentially reduce the impact of the $\log(|\Phi|)$ factor on the regret bound. Additionally, it's worth noting that this dependency on $|\Phi|$ is better than the one in the regret bound of \citet{papinireinforcement}, which has a $|\Phi|$ factor. The reason for the better dependency in our result is that the bilinear MDP structure we consider is simpler than the linear MDP structure considered in~\citet{papinireinforcement}. When applying our algorithm to linear MDPs, we still need a $|\Phi|$ factor to cover the value function class, which degenerates to the result in~\citet{papinireinforcement}. Furthermore, the $\log(|\Phi|)$ dependency allows us to extend our result to some infinite representation function classes with bounded statistical complexity~\citep{agarwal2020flambe}.
\end{remark}

\begin{remark}
When $|\Phi| = 1$, i.e., there is only one representation function, Assumption~\ref{asm:hls} provides a criterion for a `good representation' and such a `good representation' can improve the problem-dependent regret bound from $\cO(\log(k))$~\citep{he2020logarithmic} to a constant regret bound. 
\end{remark}

\section{Representation Selection for Offline RL}
\subsection{\algname-LCB Algorithm}
We present an offline version of $\algname$ that selects a good representation based on the offline data generated from a behavior policy. In this version, the algorithm estimates the parameter and its covariance matrix for each representation function $\bphi$ in Lines~\ref{ln:1-offline} and \ref{ln:2-offline} in Algorithm~\ref{alg:main-offline}, using the offline data $\cD_h$ for the $h$-th step, which consists of the triplet $(s, a, s')$ as the state, action, and next-state, then the estimated $\Mb$ can be therefore written by
\begin{align}
\Mb_{h, \bphi} & =\argmin_{\Mb} \|\Mb\|_F^2 \notag \\
&\quad + \sum_{(s, a, s') \in \cD_h} \|\bpsi^\top(s')\Kb_{\bpsi}^{-1} - \bphi^\top(s, a)\Mb\|_2^2  \label{eq:offline-M}
\end{align}
The algorithm then provides a pessimistic estimation of the $Q$-function, following a similar method as~\eqref{eq:estq} in Lines~\ref{ln:4-offline} and~\ref{ln:5-offline}, which is widely used in offline reinforcement learning to provide a robust estimation for later planning. In detail, the estimated Q function is subtracted by a confidence radius $\Gamma$ defined by
\begin{align}
\Gamma_{h, \bphi}(s, a) = C_{\bpsi}H\sqrt{\beta_{\bphi}\bphi^\top(s, a)\Ub_{h, \bphi}^{-1}\bphi(s, a)} \label{eq:offline-C}
\end{align}
and thus the estimated $Q$-function can be written as
\begin{align}
Q_{h, \bphi}(s, a) &= r(s, a) + \sum_{s' \in \cS} \bphi^\top(s, a) \Mb_{h, \bphi}\bpsi(s')V_{h+1}(s') \notag \\
&\quad - \Gamma_{h, \bphi}(s, a) \label{eq:offline-Q}.
\end{align}

Unlike the online version, where a smaller estimation of $Q$ is preferred, the offline version adopts a pessimistic estimation ($Q_{h, \bphi} \le Q_h^*$), where a larger estimation is considered more accurate. Therefore, in Line~\ref{ln:3-offline}, the algorithm selects the maximum $Q$-function over all representation functions $\bphi$, and in Line~\ref{algorithm:line5-offline}, it takes the greedy policy based on the selected $Q$-function from the offline training. Similar to the online version, $\algname$-LCB selects different representation functions for different state-action pairs instead of a single representation for the entire environment, thereby leveraging the advantage of different representation functions to provide a good estimation for the underlying MDP.
\begin{algorithm}[t!]
\caption{Offline Representation seLection for EXploration and EXploitation Lower Confidence Bound (ReLEX-LCB)}\label{alg:main-offline}
\begin{algorithmic}[1]
\STATE \texttt{// offline training}
\FOR{$(h, \bphi) \in [H] \times \Phi$}
\STATE Calculate $\Mb_{h, \bphi}$ as of~\eqref{eq:offline-M} \label{ln:1-offline}
\STATE Calculate $\Ub_{h, \bphi} = \Ib + \sum_{(s, a, s') \in \cD_h}\bphi(s, a)\bphi^\top(s, a)$\label{ln:2-offline}
\ENDFOR
\STATE \texttt{// offline planning}
\STATE Initialize $Q_{H+1, \bphi}(s, a) = 0$ for all $(s, a, \bphi)$
\FOR {$h = H, H - 1, \cdots, 1$}
\STATE Calculate $\Gamma_{h, \bphi}(s, a)$ as of~\eqref{eq:offline-C}\label{ln:4-offline}
\STATE Calculate $Q_{h, \bphi}(s, a)$ as of~\eqref{eq:offline-Q}\label{ln:5-offline}
\STATE Set $Q_h(s, a) = \max_{\bphi \in \Phi} \{Q_{h, \bphi}(s, a)\}$
\label{ln:3-offline}
\STATE Set $V_h(s) = \max\{0, \min\{\max_a Q_h(s, a), H\}\}$
\STATE Set $\pi_h(s, a) = \argmax_a Q_h(s, a)$ \label{algorithm:line5-offline}
\ENDFOR
\ENSURE Policy $\pi = \{\pi_h\}_{h=1}^H$
\end{algorithmic}
\end{algorithm}

\subsection{Gap-dependent Sample Complexity}
In this section, we provide the sample complexity of Algorithm~\ref{alg:main-offline}. Similarly to its online counterpart, we start with a coverage assumption for offline RL, which suggests that the representation function class $\Phi$ can provide a good representation for all possible state-action pairs in the offline training data.

\begin{assumption}\label{asm:offline}
Suppose the representation function class $\Phi$ is admissible, and for any $(s, a, h) \in \cS \times \cA \times [H]$, there exists a representation function $\bphi \in \Phi$ such that
\begin{align*}
\bphi(s, a) \in \image(\tilde \bLambda_{h, \bphi}),\ \tilde \bLambda_{h, \bphi} := \EE_{d_h^{\hat \pi}}[\bphi(s, a)\bphi(s, a)^\top],
\end{align*}
where $d_h^{\hat \pi}$ is the state-action visitation distribution in the offline dataset on step $h$ induced by some behavior policy $\hat \pi$ in the underlying MDP for the offline data. We denote the minimal non-zero eigenvalue of $\tilde \bLambda_{h, \bphi}$ as $\tilde \sigma_{h, \bphi}$.
\end{assumption}

\begin{remark}
Similar assumptions have been made in the offline RL literature~\citep{wang2020statistical,jin2021pessimism, min2021variance, uehara2021representation, yin2022nearoptimal}, which require that the offline dataset can provide good coverage of the entire state-action space. Notably, thanks to representation selection, we only require that the representations in the function class $\Phi$ can together cover the state-action space, rather than every single representation covering the state-action space perfectly. This relaxes existing assumptions by allowing every single representation to not provide perfect coverage. For example, it is possible to define two representations $\{\bphi_1, \bphi_2\}$ such that each representation does not satisfy Assumption~\ref{asm:offline}, but the representation function class $\Phi = {\bphi_1, \bphi_2}$ satisfies. For more details about this example, please refer to Appendix~2.2, or Appendix G in \citet{papinireinforcement}.
\end{remark}

We also need the following assumption, which is standard in the literature.
\begin{assumption}\label{asm:iid}
    The trajectories in the offline dataset are i.i.d. sampled, i.e., different trajectories are generated by the same behavior policy $\hat \pi$ independently.
\end{assumption}

Now we are ready to present the sample complexity result.
\begin{theorem}\label{thm:offline}
Set $\beta_{\bphi} = Cd_{\bphi}\log(2KH|\Phi|/\delta)$ where $C$ is an absolute positive constant, then with probability at least $1 - \delta$, then under Assumptions~\ref{asm:offline} and~\ref{asm:iid}, the sub-optimality of the policy $\pi$ output by Algorithm~\ref{alg:main-offline} could be bounded by
\begin{align}
    &V_h^*(s) - V_h^\pi(s) \le 2C_{\bpsi}H\notag \\
    &\qquad\times\sum_{h' = h}^H\EE_{\pi^*}\Big[\min_{\bphi \in \Phi}\big\{\sqrt{\beta_{\bphi}}\|\bphi(s, a)\|_{\Ub_{h', \bphi}^{-1}}\big\}\big | s_h = s\Big].\label{eq:main}
\end{align}
Furthermore, under Assumptions~\ref{asm:gap}, if the size of the offline dataset is greater than
\begin{align*}
    K &> \max_{\bphi \in \Phi, h \in [H]} \left\{ \frac{32C_{\bphi}^2d_{\bphi}^2\log(Hd_{\bphi}|\Phi|/\delta)}{\tilde \sigma^{2}_{h, \bphi}}\right.\\
    &\qquad\qquad\quad\left. \times\left(1 + \frac{C_{\bpsi}^2H^4\beta_{\bphi}C_{\bphi}\tilde \sigma_{h, \bphi}}{4\gap{\min}^2C_{\bphi}^2d_{\bphi}\log(Hd_{\bphi}|\Phi|/\delta)}\right)\right\}.
\end{align*}
then Algorithm~\ref{alg:main-offline} is guaranteed to output the optimal policy $\pi = \pi^*$.
\end{theorem}

\begin{remark}
Our error bound in \eqref{eq:main} contains the $\min$ operator, which suggests that our result should be no worse than using any single representation, compared with the offline RL algorithm using a single representation~\citep{jin2021pessimism, yin2022nearoptimal}. 
\end{remark}

\begin{remark}
    The bound of $\sqrt{\beta_{\bphi}}\|\bphi(s, a)\|_{\Ub_{h', \bphi}^{-1}}$ cannot decrease to $0$ without other further assumptions. \cite{jin2021pessimism,yin2022nearoptimal} require a `uniform coverage' assumption to make the sub-optimality decrease at a $1/\sqrt{K}$ rate. This `uniform coverage' suggests that the covariance matrix under the behavior policy can cover the entire state-action space. In sharp contrast, according to Assumption~\ref{asm:offline}, our results only require the representations in the function class to together cover the state-action space, even if any single representation cannot.
\end{remark}

\begin{remark}
Our `gap-dependent sample complexity' is also aligned with the gap-dependent sample complexity for offline RL in the tabular setting under the condition $(P, \text{gap}_{\min})$ in \citet{wang2022gap}. In their setting, $P$ stands for a uniform optimal policy coverage coefficient in the tabular MDP, which is analogous to our $\tilde{\sigma}_{h, \bphi}^{-1}$ in the linear function approximation setting. Our result has the same inverse dependence on $\text{gap}_{\min}$.
\end{remark}

\section{Experiments}
\begin{table}
\caption{Cumulative regret ($\text{mean} \pm \text{dev.}$) after 5M episodes for \algname-UCB v.s. UC-MatrixRL and $\epsilon$-greedy using a single representation} \label{tab:reg}
\centering
\begin{tabular}{cc}
\toprule
Alg. + Rep.  & Cumulative regret \\ 
\midrule
UC-MatrixRL + $\bphi$ (oracle) & $2534.9 \pm 26.6$ \\ 
\midrule
UC-MatrixRL + $\bphi^{(1)}$ & $11459.5 \pm 225.7$ \\
UC-MatrixRL + $\bphi^{(2)}$ & $13838.5 \pm 266.2$ \\
$\epsilon$-greedy + $\bphi$ & $15305.9 \pm 245.7$ \\ 
$\epsilon$-greedy + $\bphi^{(1)}$ & $15745.8 \pm 408.0$ \\
$\epsilon$-greedy + $\bphi^{(2)}$ & $15652.9 \pm 471.2$ \\
\algname-UCB + $\{\bphi^{(1)}, \bphi^{(2)}\}$ & $\boldsymbol{6765.0 \pm 146.6}$\\
\bottomrule
\end{tabular}
\end{table}
\subsection{Online RL}
To showcase the efficacy of representation selection by \algname-UCB, we conduct the following experiments on an environment with $|\cS| = 20, |\cA| = 3$, $H = 10$, and $d = d' = 5$. We generate the feature functions $\bphi: \cS \times \cA \mapsto \RR^d$ and $\bpsi: \cS \mapsto \RR^{d'}$ such that for all $h \in [H]$, there exists a matrix $\Mb_h \in \RR^{d \times d}$ where $\PP_h(s' | s, a) = \bphi(s, a)^\top \Mb_h \bpsi(s')$. The generated $\bphi$ satisfies Assumption~\ref{asm:hls}. We set the reward function such that $r_H(s, a) \sim \text{Bernoulli}(0.5)$ and $r_h(s, a) = 0$ for all $h < H$, forcing the algorithm to learn the transition kernel in order to achieve good performance.

Furthermore, we generate two additional representations $\bphi^{(1)}$ and $\bphi^{(2)}$ such that neither $\bphi^{(1)}$ nor $\bphi^{(2)}$ satisfies Assumption~\ref{asm:hls}, but their union $\Phi = {\bphi^{(1)}, \bphi^{(2)}}$ does. Appendix~2.1 contains a detailed definition of these representations.

We evaluated the performance of \algname-UCB using the feature map class $\Phi = \{\bphi^{(1)}, \bphi^{(2)}\}$ with episode $K = 5,000,000$. We also reported the performance of UC-MatrixRL~\citep{yang2020reinforcement} and $\epsilon$-greedy using the feature map $\bphi$, $\bphi^{(1)}$, and $\bphi^{(2)}$ separately.

We repeated the experiment on the same environment eight times and reported the mean and standard deviation of the cumulative regret in Table~\ref{tab:reg}. Our experiment results showed that \algname-UCB outperformed both $\epsilon$-greedy and UC-MatrixRL using $\bphi^{(1)}$ or $\bphi^{(2)}$, which verifies the effectiveness of representation selection. More results, including the figure of cumulative regret, are deferred to Appendix 2.4.1.



\subsection{Offline RL}
\begin{table}
\caption{Relative sub-optimality of \algname-LCB over 500K episodes} \label{tab:2}
\begin{center}
\begin{tabular}{cc}
\toprule
\makecell{Representation}  & \makecell{Final sub-optimality \\$(\text{mean} \pm \text{dev.}) \times 10^{-3}$} \\ 
\midrule
\makecell{$\bphi$ (oracle)} & $1.288 \pm 0.807$ \\ 
$\bphi^{(1)}$ & $3.424 \pm 1.455$ \\
$\bphi^{(2)}$ & $3.336 \pm 1,624$ \\
$\{\bphi^{(1)}, \bphi^{(2)}\}$ & $\boldsymbol{1.292 \pm 0.806}$\\
\bottomrule
\end{tabular}
\end{center}
\end{table}
In this subsection, we present experiments to demonstrate the performance of \algname-LCB. We use a setup similar to the online RL setting with one oracle representation $\bphi$ satisfying Assumption~\ref{asm:offline} and two representations $\bphi^{(1)}$ and $\bphi^{(2)}$. Neither of $\bphi^{(1)}$ nor $\bphi^{(2)}$ satisfies Assumption~\ref{asm:offline}, but the union of these two representations satisfies the assumption. We collect $K = 500K$ episodes of offline trajectories using a fixed randomly-generated behavior policy and evaluate the sub-optimality of Algorithm~\ref{alg:main-offline} using different sizes of offline training data. The rest of the parameter settings are the same as in the online RL setting.

We report the performance of Algorithm~\ref{alg:main-offline} using (1) the oracle representation $\bphi$, (2) the representation function class $\{\bphi_1, \bphi_2\}$, (3) $\bphi_1$, and (4) $\bphi_2$, respectively. We use the relative sub-optimality over the initial policy, i.e., $(V_1^*(s) - V_1^{\pi_k}(s)) / (V_1^*(s) - V_1^{\pi_1}(s))$ as a performance measure. We repeat the experiment 32 times and report the mean and standard deviation of the relative sub-optimality in Table~\ref{tab:2}.

We observe that by selecting over two imperfect representations, \algname-LCB can match the performance of the oracle algorithm using a single perfect representation, even if using the two representations separately leads to a larger ($\sim 2.5\times$) sub-optimality on the same offline data. More results, including the figures comparing the sub-optimality over different algorithms, are deferred to Appendix 2.4.1.


\section{Conclusion and Future Work}
\label{sec:conclusion}

In this paper, we have explored representation selection for reinforcement learning by focusing on a special class~\citep{yang2020reinforcement} of low-rank MDPs~\citep{yang2019sample,jin2020provably}. Our proposed \algname algorithm has demonstrated the ability to improve performance in both online and offline RL settings. The promising theoretical and empirical results suggest that there is potential in combining our work with FLAMBE~\citep{agarwal2020flambe} or MOFFLE~\citep{modi2021model}. By integrating our approach with these methods that select the \emph{correct} representations, we can further select the \emph{good} representation from a class of \emph{correct} representations. This may help in designing more practical, theory-backed representation learning algorithms for reinforcement learning.
\begin{acknowledgements} 
    We thank the anonymous reviewers for their helpful comments. WZ, JH, DZ and QG are supported in part by the National Science Foundation CAREER Award 1906169 and research fund from UCLA-Amazon Science Hub. The views and conclusions contained in this paper are those of the authors and should not be interpreted as representing any funding agencies.
\end{acknowledgements}
\bibliography{zhang_466}
\clearpage
\onecolumn
\appendix 

\section{Additional Related Work}\label{app:related}
\noindent\textbf{Reinforcement Learning with Linear Function Approximation.}
A large body of literature regarding learning MDP with linear function approximation has emerged recently. Those works can be roughly divided by their assumptions on MDPs: The first one is called Linear MDP~\citep{yang2019sample,jin2020provably}, where the representation function is built on the state action pair $\bphi(s, a)$. Under this assumption,~\citet{jin2020provably} proposed the LSVI-UCB algorithm achieving $\tilde \cO(\sqrt{d^3H^3T})$ problem independent regret bound and $\tilde \cO(d^3H^5\gap{\min}^{-1}\log(T))$ problem dependent regret bound due to ~\citet{he2020logarithmic}. Here $\gap{\min}$ is the minimal sub-optimality gap, $d$ is the dimension and $H$ is the time-horizon. Several similar MDP assumptions are studied in the literature: for instance,~\citet{jiang2017contextual} studied a larger class of MDPs with low Bellman rank and proposed an algorithm with polynomial sample complexity. Low inherent Bellman error assumption is proposed by~\citet{zanette2020learning} and allows a better $\cO(dH\sqrt{T})$ regret by considering a global planning oracle.~\citet{yang2020reinforcement} considered the bilinear structure of the MDP kernel as a special case of the Linear MDP, and achieved an $\tilde \cO(H^2d\sqrt{T})$ problem-independent regret bound. The second linear function approximation assumption is called Linear Mixture MDP~\citep{modi2020sample,ayoub2020model,zhou2020provably} where the transition kernel of MDP is a linear function $\bphi(s, a, s')$ of the `state-action-next state' triplet. Under this setting,~\citet{jia2020model,ayoub2020model} proposed UCRL-VTR achieving $\cO(d\sqrt{H^3T})$ problem independent bound for episodic MDP, while~\citet{he2020logarithmic} showed an $\tilde \cO(d^2H^5\gap{\min}^{-1}\log^3(T))$ problem dependent regret bound for the same algorithm.~\citet{zhou2020provably} studied infinite horizon MDP with discounted reward setting and proposed UCLK algorithm to achieve $\tilde \cO(d\sqrt{T}(1 - \gamma)^2)$ regret. Most recently,~\citet{zhou2020nearly} proposed nearly minimax optimal algorithms for learning Linear Mixture MDPs in both finite and infinite horizon settings.

However, these works all assume a single representation and do not depend on the quality of the representation as long as it can well approximate the value function. Thus, what a good representation is and what improvement this good representation can bring is still an open question.

\noindent\textbf{Offline Reinforcement Learning with Function Approximation} There is a series of works focusing on the offline reinforcement learning with linear function approximation. \citet{jin2021pessimism} introduce the pessimism to offline reinforcement learning and establish a data-dependent upper bound on the sub-optimality for general MDP. They also provide a close-formed data-dependent bound for linear MDPs. Following that, \citet{xie2021bellman} introduces the notion of Bellman's consistent pessimism for general function approximation. There is also a brunch of work leveraging the variance information in offline RL~\citep{min2021variance, yin2021near, yin2022nearoptimal}. Other follow-up works include the partial coverage~\citep{uehara2021pessimistic} in general function approximation and the statistical barriers for offline RL~\citep{foster2021offline}.

\noindent\textbf{Model Selection and Representation Learning in Contextual Bandits.} Since contextual bandits can be viewed as a special case of MDPs, our work is also related to some previous works on model selection in contextual bandits. The first line of work runs a multi-armed bandit at a high level while each arm corresponds to a low level contextual bandit algorithm. Following this line,~\citet{odalric2011adaptive} used a variant of EXP4~\citep{auer2002nonstochastic} as the master algorithm while the EXP3 or UCB algorithm~\citep{auer2002nonstochastic} serves as the base algorithm.
This result is improved by CORRAL~\citep{agarwal2017corralling}, which uses the online mirror descent framework and modifies the base algorithm to be compatible with the master.~\citet{pacchiano2020model} introduced a generic smoothing wrapper that can be directly applied to the base algorithms without modification.

\citet{abbasi2020regret} proposed a regret balancing strategy and showed that given the regret bound for the optimal base algorithm as an input, their algorithm can achieve a regret that is close to the regret of the optimal base algorithm. Following that,~\citet{pacchiano2020regret} relaxed the requirement in~\citet{abbasi2020regret} by knowing each base algorithm comes with a candidate regret bound that may or may not hold during all rounds. Despite this progress, how to get the optimal regret guarantee for the general contextual learning problem remains an open question~\citep{foster2020open}. Besides those general model selection algorithms, recent works are focusing on representation learning under several different structures, thus different representations can be used at different rounds in the algorithm.~\citet{foster2019model} studied model selection by considering a sequence of feature maps with increasing dimensions where the losses are linear in one of these feature maps. They proposed an algorithm that adaptively learns the optimal feature map, whose regret is independent of the maximum dimension.~\citet{chatterji2020osom} studied the hidden simple multi-armed bandit structure where the rewards are independent of the contextual information. 
\citet{ghosh2021problem} considered a nested linear contextual bandit problem where the algorithm treats the norm bound or dimension of the weight vector in the linear model as the complexity of the problem and adaptively finds the true complexity for the given dataset. 
\section{Experiment Details}\label{app:exp}
\subsection{Online reinforcement learning}
Here we describe how to generate the representation functions and the MDP. We denote the $d$-dimensional half normal distribution by $|\xb| \sim \cH(\Ib_d)$ if $\xb \sim \cN(\zero_d, \Ib_d)$. 
Considering the following representation sample from the half-normal distribution for all $(s, a, s') \in \cS \times \cA \times \cS$:
\begin{align*}
    \tilde \bphi(s, a) \sim \cH(\Ib_d), \tilde \bpsi(s') \sim \cH(\Ib_d).
\end{align*}

Then we define $\bpsi$ by $\bpsi(s') = \tilde \bpsi(s') / \max_{s \in \cS} \|\tilde \bpsi(s)\|_2$. It is obvious that the Euclidean norm of $\bpsi(s')$ is bounded by 1 for all $s' \in \cS$.

It is easy to tell that each element in $\tilde \bphi$ and $\bpsi$ is non-negative thus we can build the transition kernel as
\begin{align*}
    \PP_h(s' | s, a) = \frac{\tilde \bphi(s, a)^\top \bpsi(s')}{\sum_{s' \in \cS}\tilde \bphi(s, a)^\top \bpsi(s')}.
\end{align*}

Next, for any step $h \in [H]$, given any non-singular matrix $\Mb_h \in \RR^{d\times d}$, we define representation function $\bphi_h(\cdot, \cdot)$ as 
\begin{align}
    \bphi_h(s, a) =  \frac{\Mb_h^{-1}\tilde \bphi(s, a)}{\sum_{s' \in \cS}\tilde \bphi(s, a)^\top \bpsi(s')}. \label{eq:oracle}
\end{align}

Furthermore, we select the matrix $\Mb_h$ such that for all state-action pair $(s, a) \in \cS \times \cA$, $\|\bphi_h(s, a)\|_2 \le 1$. This procedure could always be done since we can multiply different scalars to the generated matrix $\Mb_h$ to control the norm of $\bphi_h$.

Therefore, we can verify that for any $(s, a, s', h) \in \cS \times \cA \times \cS \times [H]$, $\PP_h(s' | s, a) = \bphi^\top_h(s, a) \Mb_h \bpsi(s')$ thus it satisfies Assumption~\ref{asm:hls}. To emphasize the difficulties of learning the transition kernel $\PP$.

It is easy to verify that under the current representation $\bphi$, with high probability, $\bLambda_{h, \bphi} \succeq \zero $ since the representation $\bphi$ is sampled from the half-normal distribution. Therefore, Assumption~\ref{asm:hls} is satisfied.

We will next provide two other representations $\{\bphi^{(1)}, \bphi^{(2)}\}$ for the same transition kernel $\PP_h(\cdot | \cdot, \cdot)$. Neither of these single representation satisfies Assumption~\ref{asm:hls} but the combination of these two will satisfy that assumption.

Since the transition kernel $\PP_h(\cdot | \cdot, \cdot)$ and reward function $r(\cdot, \cdot)$ have already been determined, by Bellman optimality equation~\eqref{eq:bellman}, we can get the optimal action action $\pi^*_h(s)$ for all step $h \in [H]$ and state $s \in \cS$. Since $|\cA| = 3$ and $\pi^*_h(s) \in \cA$, we can compose the sub-optimal set by $\cA \setminus \{\pi^*_h(s)\} := \{a_h(s), a'_h(s)\}$. Then we define the two representation functions as $\bphi^{(1)}, \bphi^{(2)} \in \cS \times \cA \mapsto \RR^{2d}$ using the following rule: 
\begin{align*}
    \begin{cases}
    \bphi^{(1)}_h(s, \pi_h^*(s)) = \left(\bphi_h^\top(s, \pi_h^*(s)), \zero_d^\top\right)^\top\\
    \bphi^{(1)}_h(s, a_h(s)) = \left(\bphi_h^\top(s, a_h(s)), \zero_d^\top\right)^\top\\
    \bphi^{(1)}_h(s, a'_h(s)) = \left(\zero_d^\top, \bphi^\top_h(s, a'_h(s))\right)^\top
    \end{cases},
    \begin{cases}
    \bphi^{(2)}_h(s, \pi_h^*(s)) = \left(\bphi_h^\top(s, \pi_h^*(s)), \zero_d^\top\right)^\top\\
    \bphi^{(2)}_h(s, a_h(s)) = \left(\zero_d^\top, \bphi_h^\top(s, a_h(s)) \right)^\top\\
    \bphi^{(2)}_h(s, a'_h(s)) = \left(\bphi^\top_h(s, a'_h(s)), \zero_d^\top\right)^\top
    \end{cases}
\end{align*}
By constructing the new kernel matrix $\tilde \Mb_h = (\Mb_h^\top, \Mb_h^\top)^\top \in \RR^{2d \times d}$, we can verify that both $\bphi^{(1)}$ and $\bphi^{(2)}$ satisfy Assumption~\ref{asm:lin} with dimension $2d$. i.e. for all $(s, a, s', h) \in \cS \times \cA \times \cS \times [H]$
\begin{align}
   \PP_h(s' | s, a) = \bphi^{(1)\top}_h\tilde \Mb_h\bpsi(s') =  \bphi^{(2)\top}_h\tilde \Mb_h\bpsi(s').
\end{align}

From intuition, these two representation functions $\bphi^{(1)}$ and $\bphi^{(2)}$ may come from two different sensors measuring the same environment. It is obvious that since for both $\bphi^{(1)}$ and $\bphi^{(2)}$, there are at least 1/3 of the whole state-action space is not covered by $\bLambda_h$, i.e. $\bphi_h^{(1)}(s, a'_h(s)) \notin  \image \bLambda_{h, \bphi^{(1)}}$ and $\bphi_h^{(2)}(s, a_h(s)) \notin  \image \bLambda_{h, \bphi^{(2)}}$. However, since $a'_h(s') \neq a_h(s)$ by definition, we can verify that the representation set $\Phi = \{\bphi^{(1)}, \bphi^{(2)}\}$ satisfies Assumption~\ref{asm:hls}.

\subsection{Offline reinforcement learning}\label{app:exp-offline}

Here we provide a design for representation functions such that each single representation does not satisfy Assumption~\ref{asm:offline} but the whole representation function set satisfies.

First, the oracle representation which satisfies Assumption~\ref{asm:offline} is generated same as~\eqref{eq:oracle} with $d = 5$. The underlying MDP is generated same with the online settings with $|\cS| = 20, |\cA| = 3$. Then we consider an arbitrary behavior policy $\hat \pi$ which is used to generate the offline training data. Since $|\cA| = 3$, for any $s \in \cS, h \in [H]$, there exists three state-action pairs as $(s, \hat \pi_h(s)), (s, a_h(s))$ and $(s, a'_h(s))$. Considering the representation set $\bphi^{(1)}(\cdot, \cdot) \in \RR^{2d}$ and $\bphi^{(2)}(\cdot, \cdot) \in \RR^{2d}$ which is defined as 
\begin{align*}
\begin{cases}
    \bphi^{(1)}_h(s, \hat \pi_h(s)) &= \left(\bphi_h^\top(s, \hat \pi_h(s))^\top, \zero_d^\top\right)^\top\\
    \bphi^{(1)}_h(s, a_h(s)) &= \left(\bphi_h^\top(s, a_h(s))^\top, \zero_d^\top\right)^\top\\
    \bphi^{(1)}_h(s, a'_h(s)) &= \left(\zero_d^\top, \bphi_h^\top(s, a'_h(s))^\top\right)^\top
\end{cases},
\begin{cases}
    \bphi^{(2)}_h(s, \hat \pi_h(s)) &= \left(\bphi_h^\top(s, \hat \pi_h(s))^\top, \zero_d^\top\right)^\top\\
    \bphi^{(2)}_h(s, a_h(s)) &= \left(\zero_d^\top, \bphi_h^\top(s, a_h(s))^\top \right)^\top\\
    \bphi^{(2)}_h(s, a'_h(s)) &= \left(\bphi_h^\top(s, a'_h(s))^\top, \zero_d^\top\right)^\top
\end{cases}.
\end{align*}
It is obvious that by using behavior policy $\hat \pi$, both $\EE_{d_h^{\hat \pi}}(\bphi^{(1)}\bphi^{(1)\top})$ and $\EE_{d_h^{\hat \pi}}(\bphi^{(2)}\bphi^{(2)\top})$ would enjoy the format of $\begin{pmatrix} \bLambda & \zero \\ \zero & \zero \end{pmatrix}$. Therefore, for $\bphi^{(1)}$, it would be easy to verify that $\bphi_h^{(1)}$, $\bphi^{(1)}(s, a'_h(s))$ is not in $\image(\EE_{d_h^{\hat \pi}}(\bphi^{(1)}\bphi^{(1)\top}))$ and $(\bphi^{(2)}(s, a_h(s))$ is not in $\image(\EE_{d_h^{\hat \pi}}(\bphi^{(2)}\bphi^{(2)\top}))$. However, it is also easy to show that the union of $\{\bphi^{(1)}, \bphi^{(2)}\}$ satisfy assumption~\ref{asm:offline}.

\subsection{Additional Configuration} \label{app:more}

\noindent\textbf{Parameter Tuning.} For both of the offline and online algorithm, we aggregate the parameter $C_{\bpsi}H\sqrt{\beta_{k, \bphi}}$ as a single hyper-parameter $C$ for tuning. We do a grid search for $C = \{1, 3, 10, 30, 100\}$ report the best performance over these values.

\subsection{Additional Results}
\subsubsection{Online RL}
Figure~\ref{fig:my_label} plots the cumulative regret with respect to the episode number, with the standard deviation indicated by the shadows. We observed that the cumulative regrets for both UC-MatrixRL using $\bphi$ and \algname-UCB grew very slowly after the first one million episodes. As a comparison, UC-MatrixRL using $\bphi^{(1)}$ or $\bphi^{(2)}$ had a sub-linear regret growth instead of near-constant regret. As for the $\epsilon$-greedy algorithm, although the greedy policy can learn very fast at the beginning, it eventually had a much higher cumulative regret since it could not explore the environment well.

\begin{figure}
    \centering
    \parbox{0.45\textwidth}{
    \includegraphics[height=6cm]{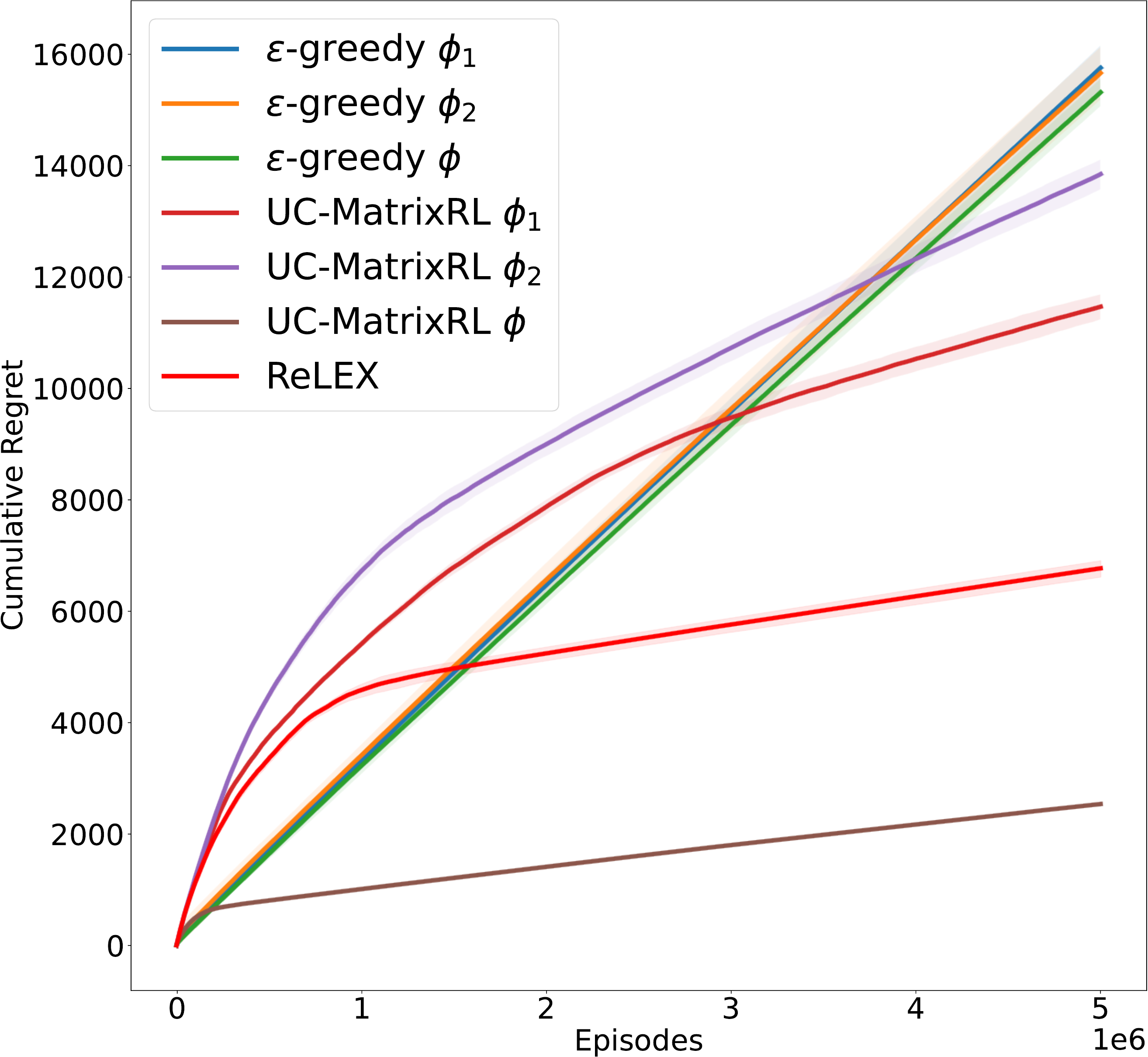}
    \caption{Cumulative regret over 5M episodes for \algname-UCB v.s. UC-MatrixRL and $\epsilon$-greedy using a single representation.}
    \label{fig:my_label}}%
  \qquad
  \begin{minipage}{0.45\textwidth}%
    \includegraphics[height=6cm]{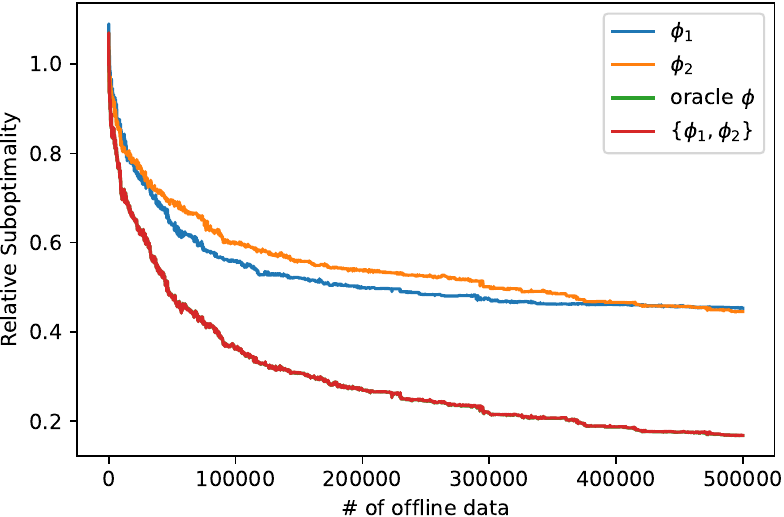}
    \caption{Relative sub-optimality of \algname-LCB after 500K offline episodes}%
    \label{fig:2}%
  \end{minipage}%
\end{figure}

\subsubsection{Offline RL}
From Figure~\ref{fig:2} we observe that by selecting over two imperfect representations, \algname-LCB can match the performance of the oracle algorithm using a single perfect representation, even if using the two representations separately leads to a larger ($\sim 2.5\times$) sub-optimality on the same offline data.

\subsubsection{Ablation studies}

We conduct additional experiments on the following different settings in the online setting as the ablation study of our algorithm.

\begin{enumerate}[label=(\Alph*)]
    \item The original setting with same data generation, $S = 20, A = 3, d = d' = 5, H = 10$
    \item The original setting with larger state space $S = 40$, other parameters are not changed
    \item The original setting with larger action $A = 5$, other parameters are not changed
    \item The original setting with larger action $S = 40, A = 5$, other parameters are not changed
    \item The original setting with $|\Phi| = 3, A = 4$, other parameters are not changed
\end{enumerate}

Besides the cumulative regret, we also report the average reward achieved in the last 1000 episodes, which can be considered a "more intuitive performance metric" suggested by Reviewer 4Uhq.

Regarding the data generation, configuration C to E enjoys the same method of generating the data, i.e., arranging the context of sub-optimal actions into other dimensions. We will add the details of generating these data during the revision.

Due to the time limit of the authors' response, we do not repeat the experiments multiple times and we cut experiments E and F with episode $K = 500,000$ instead of the original $K = 5,000,000$ in the paper. We also skipped the $\epsilon$-greedy version for configurations E for the sake of time. 

The performance table are presented From Table~\ref{tab:abl1} to Table~\ref{tab:abl5}

\noindent\textbf{Computing Resources}
For both offline and online algorithm, we conduct our experiments on an AWS c5-12xlarge CPU instance with a 48-core Intel\textsuperscript{\textregistered} Xeon\textsuperscript{\textregistered} Scalable Processors (Cascade Lake).
\begin{table}[htbp]
    \centering
    \begin{tabular}{c|cc}
    \toprule
    Algorithm & Last averaged reward $\uparrow$ & Cumulative regret $\downarrow$\\
    \midrule
    UC-Matrix RL $\phi$ (oracle) & 0.7782 & 2457.71\\
    ReLEX $\{\phi_1, \phi_2\}$ & \textbf{0.7780} & \textbf{6827.18}\\
    UC-Matrix RL $\phi_1$ & \textbf{0.7780} & 11458.83\\
    UC-Matrix RL $\phi_2$ & 0.7770 & 13385.61\\
    $\epsilon$-greedy $\phi$ & 0.7754 & 14906.31\\
    $\epsilon$-greedy $\phi_1$ & 0.7756 & 15042.24\\
    $\epsilon$-greedy $\phi_2$ & 0.7751 & 16470.94\\
    \bottomrule
    \end{tabular}
    \caption{The performance result of Configuration (A) (The same configuration in the paper)}
    \label{tab:abl1}
\end{table}
\begin{table}[htbp]
    \centering
    \begin{tabular}{c|cc}
    \toprule
    Algorithm & Last averaged reward $\uparrow$ & Cumulative regret $\downarrow$\\
    \midrule
    UC-Matrix RL $\phi$ (oracle) & 0.8736 & 2880.71\\
    ReLEX $\{\phi_1, \phi_2\}$ & \textbf{0.8733} & \textbf{7745.49}\\
    UC-Matrix RL $\phi_1$ & 0.8729 & 12759.89\\
    UC-Matrix RL $\phi_2$ & 0.8730 & 10411.89\\
    $\epsilon$-greedy $\phi$ & 0.8703 & 18786.77\\
    $\epsilon$-greedy $\phi_1$ & 0.8707 & 19002.81\\
    $\epsilon$-greedy $\phi_2$ & 0.8702 & 20018.97\\
    \bottomrule
    \end{tabular}
    \caption{The performance result of Configuration (B) ($S = 40$)}
    \label{tab:abl2}
\end{table}
\begin{table}[htbp]
    \centering
    \begin{tabular}{c|cc}
    \toprule
    Algorithm & Last averaged reward $\uparrow$ & Cumulative regret $\downarrow$\\
    \midrule
    UC-Matrix RL $\phi$ (oracle) & 0.9749 & 3085.21 \\
    ReLEX $\{\phi_1, \phi_2\}$ & \textbf{0.9748} & \textbf{8160.39} \\
    UC-Matrix RL $\phi_1$ & 0.9743 & 12946.34\\
    UC-Matrix RL $\phi_2$ & 0.9745 & 14373.86\\
    $\epsilon$-greedy $\phi$ & 0.9690 & 28423.82\\
    $\epsilon$-greedy $\phi_1$ & 0.9701 & 27479.92\\
    $\epsilon$-greedy $\phi_2$ & 0.9708 & 27778.66\\
    \bottomrule
    \end{tabular}
    \caption{The performance result of Configuration (C) ($A = 5$)}
    \label{tab:abl3}
\end{table}
\begin{table}[htbp]
    \centering
    \begin{tabular}{c|cc}
    \toprule
    Algorithm & Last averaged reward $\uparrow$ & Cumulative regret $\downarrow$\\
    \midrule
    UC-Matrix RL $\phi$ (oracle)& 0.9800 & 3403.65\\
    ReLEX $\{\phi_1, \phi_2\}$ & \textbf{0.9792} & 9733.84\\
    UC-Matrix RL $\phi_1$ & 0.9787 & 10000.15\\
    UC-Matrix RL $\phi_2$ & 0.9791 & \textbf{9553.96}\\
    $\epsilon$-greedy $\phi$ & 0.9763 & 23301.53\\
    $\epsilon$-greedy $\phi_1$ & 0.9759 & 23392.78\\
    $\epsilon$-greedy $\phi_2$ & 0.9758 & 23218.40\\
    \bottomrule
    \end{tabular}
    \caption{The performance result of Configuration (D) ($S=40, A=5$) }
    \label{tab:abl4}
\end{table}
\begin{table}[htbp]
    \centering
    \begin{tabular}{c|cc}
    \toprule
    Algorithm & Last averaged reward $\uparrow$ & Cumulative regret $\downarrow$\\
    \midrule
    UC-Matrix RL $\phi$ (oracle)& 0.9081 & 1141.63\\
    UC-Matrix RL $\phi_1$ & 0.9034 & 4512.82\\
    UC-Matrix RL $\phi_2$ & 0.9008 & 4965.15\\
    UC-Matrix RL $\phi_3$ & 0.9022 & 4507.18\\
    ReLEX $\{\phi_1, \phi_2\}$ & 0.9051 & 2865.86\\
    ReLEX $\{\phi_1, \phi_3\}$ & 0.9057 & 2606.99\\
    ReLEX $\{\phi_2, \phi_3\}$ & 0.90648 & 2702.94\\
    ReLEX $\{\phi_1, \phi_2, \phi_3\}$ & \textbf{0.9080} & \textbf{2093.32}\\
    \bottomrule
    \end{tabular}
    \caption{The performance result of Configuration (E) ($S=20, A=4, |\Phi| = 3$)}
    \label{tab:abl5}
\end{table}
\section{Proof of Theorem~\ref{thm:main}}\label{sec:app-online-1}
In this section we will give the key technical lemmas and the proof sketch for Theorem~\ref{thm:main}.

First, we need to define a ``good event'' which happens with high probability, that the estimation $\Mb_h^k$ is close to the target $\Mb_h^*$. This definition was originally introduced in~\citet{yang2020reinforcement}.

\begin{lemma}[Lemma 15,~\citet{yang2020reinforcement}]\label{lm:good}
Define the following event as $\cE_{\bphi}^k$,
\begin{align*}
    \Big\{\tr \left[(\Mb_{h, \bphi}^j - \Mb^*_{h, \bphi})^\top \Ub_{h, \bphi}^j (\Mb_{h, \bphi}^j - \Mb^*_{h, \bphi})\right] \le \beta_{j, \bphi}, \forall j \le k, \forall h \in [H] \Big\} =: \cE_{\bphi}^k,
\end{align*}
With $\beta_{k, \bphi} = c(C_{\Mb} + {C'_{\bpsi}}^{2})d_{\bphi}\log(kHC_{\bphi} / \delta)$ for some absolute constant $c > 0$, we have $\Pr(\cE_{\bphi}^K) \ge 1 - \delta$ for all 
$\bphi \in \Phi$.
\end{lemma}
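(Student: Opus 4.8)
The plan is to fix one representation $\bphi\in\bPhi$ and one step $h\in[H]$, establish the trace bound for that pair, and take union bounds over $h$ and over $\bphi$ at the very end. First I would solve the ridge regression of Line~\ref{ln:1} in closed form: writing $\bphi_j:=\bphi(s_h^j,a_h^j)$ and $\yb_j:=\Kb_{\bpsi}^{-1}\bpsi(s_{h+1}^j)$, the normal equations give $\Mb_{h,\bphi}^k=(\Ub_{h,\bphi}^k)^{-1}\sum_{j=1}^{k-1}\bphi_j\yb_j^\top$ with $\Ub_{h,\bphi}^k=\Ib+\sum_{j=1}^{k-1}\bphi_j\bphi_j^\top$. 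Subtracting $\Mb^*_{h,\bphi}$ and using $\Ub_{h,\bphi}^k\Mb^*_{h,\bphi}=\Mb^*_{h,\bphi}+\sum_{j}\bphi_j\bphi_j^\top\Mb^*_{h,\bphi}$ produces the error decomposition $\Mb_{h,\bphi}^k-\Mb^*_{h,\bphi}=(\Ub_{h,\bphi}^k)^{-1}(\Sb_k-\Mb^*_{h,\bphi})$, where $\Sb_k:=\sum_{j=1}^{k-1}\bphi_j\boldsymbol{\epsilon}_j^\top$ collects the noise and $\boldsymbol{\epsilon}_j:=\yb_j-(\Mb^*_{h,\bphi})^\top\bphi_j$.

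Next I would identify $\boldsymbol{\epsilon}_j$ as a bounded martingale difference sequence. Let $\mathcal{F}_j$ be the $\sigma$-field generated by everything observed up to and including the state-action pair at step $h$ of episode $j$, so that $\bphi_j$ is $\mathcal{F}_j$-measurable while $s_{h+1}^j\sim\PP_h(\cdot\mid s_h^j,a_h^j)$. The identity derived just before \eqref{eq:estq}, namely $[\PP_h\bpsi(\cdot)^\top\Kb_{\bpsi}^{-1}](s,a)=\bphi^\top(s,a)\Mb^*_{h,\bphi}$, gives $\EE[\yb_j\mid\mathcal{F}_j]=(\Mb^*_{h,\bphi})^\top\bphi_j$ and hence $\EE[\boldsymbol{\epsilon}_j\mid\mathcal{F}_j]=\zero$. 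For the magnitude, Definition~\ref{asm:lin} gives $\|\bPsi\Kb_{\bpsi}^{-1}\|_{2,\infty}\le C'_{\bpsi}$, i.e.\ every row $\bpsi^\top(s')\Kb_{\bpsi}^{-1}$ has Euclidean norm at most $C'_{\bpsi}$, so $\|\yb_j\|_2\le C'_{\bpsi}$; since $(\Mb^*_{h,\bphi})^\top\bphi_j=\EE[\yb_j\mid\mathcal{F}_j]$ is an average of such vectors, it too has norm at most $C'_{\bpsi}$, whence $\|\boldsymbol{\epsilon}_j\|_2\le 2C'_{\bpsi}=:R$.

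With this in hand I would split the target quantity. Applying $\tr[(\Ab+\Bb)^\top(\Ub_{h,\bphi}^k)^{-1}(\Ab+\Bb)]\le 2\tr[\Ab^\top(\Ub_{h,\bphi}^k)^{-1}\Ab]+2\tr[\Bb^\top(\Ub_{h,\bphi}^k)^{-1}\Bb]$ with $\Ab=\Sb_k$ and $\Bb=-\Mb^*_{h,\bphi}$ yields $\tr[(\Mb_{h,\bphi}^k-\Mb^*_{h,\bphi})^\top\Ub_{h,\bphi}^k(\Mb_{h,\bphi}^k-\Mb^*_{h,\bphi})]\le 2\tr[\Sb_k^\top(\Ub_{h,\bphi}^k)^{-1}\Sb_k]+2\tr[(\Mb^*_{h,\bphi})^\top(\Ub_{h,\bphi}^k)^{-1}\Mb^*_{h,\bphi}]$. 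The bias term is immediate: $\Ub_{h,\bphi}^k\succeq\Ib$ gives $(\Ub_{h,\bphi}^k)^{-1}\preceq\Ib$, so it is at most $2\|\Mb^*_{h,\bphi}\|_F^2\le 2C_{\Mb}d_{\bphi}$ by Definition~\ref{asm:lin}. For the self-normalized term I would invoke a concentration inequality for the vector/matrix-valued martingale $\Sb_k$ (the matrix analogue of the self-normalized bound of Abbasi-Yadkori et al.), which via a stopping-time / method-of-mixtures argument controls $\tr[\Sb_k^\top(\Ub_{h,\bphi}^k)^{-1}\Sb_k]$ simultaneously for all $j\le k$ in terms of $R^2\log(1/\delta)$ and the log-determinant potential $\log(\det\Ub_{h,\bphi}^k/\det\Ib)$. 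The latter is bounded by the determinant-trace inequality together with $\|\bphi_j\|_2^2\le C_{\bphi}d_{\bphi}$, giving $\log\det\Ub_{h,\bphi}^k\le d_{\bphi}\log(1+kC_{\bphi})$; assembling the pieces produces a bound of the claimed form $c(C_{\Mb}+{C'_{\bpsi}}^{2})d_{\bphi}\log(kHC_{\bphi}/\delta)$ once the $H$-union bound (below) is folded into the logarithm.

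The main obstacle is precisely this self-normalized step for a matrix-valued estimator: a naive method of mixtures with an isotropic matrix-normal prior over the $d_{\bphi}\times d'$ parameter pays an extra factor equal to the response dimension $d'$, whereas the target $\beta_{k,\bphi}$ scales only with $d_{\bphi}$. Obtaining the correct dimension therefore relies on the special structure of the $\bpsi$-features — the normalization by $\Kb_{\bpsi}^{-1}$, which makes the responses $\yb_j$ uniformly $\ell_2$-bounded so that $d'$ enters only through $R$ rather than through the log-determinant — and this is the step I would treat most carefully, following \citet{yang2020reinforcement}. The remaining bookkeeping is routine: the supermartingale maximal inequality already delivers uniformity over all $j\le k$, so only a union bound over the $H$ steps is needed (contributing the factor $H$ inside the logarithm and giving $\Pr(\cE_{\bphi}^K)\ge 1-\delta$ for each fixed $\bphi$), while the additional $\log|\bPhi|$ appearing in the theorem's choice of $\beta$ comes from a final union bound over the $|\bPhi|$ representations when this lemma is invoked.
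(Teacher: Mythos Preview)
The paper does not actually prove this lemma: it cites Lemma~15 of \citet{yang2020reinforcement} and adds only the remark that the argument is unchanged because the ridge regression does not depend on the policy, and that the $\delta$-dependence in $\beta_{k,\bphi}$ can be read off from that proof. Your proposal is a correct and detailed reconstruction of precisely that argument---the closed-form ridge error decomposition, the martingale structure of $\boldsymbol{\epsilon}_j$ via the identity preceding~\eqref{eq:estq}, the bias/variance split, and the log-determinant potential bound---and you correctly isolate the one nontrivial point (avoiding a spurious $d'$ factor in the self-normalized step by exploiting the uniform $\ell_2$-boundedness of $\Kb_{\bpsi}^{-1}\bpsi(s')$), which is exactly where \citet{yang2020reinforcement} do the work; since the present paper offers nothing beyond the citation, there is nothing further to compare.
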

\begin{remark}
The proof of Lemma~\ref{lm:good} remains the same since the regression does not depend on the policy $\pi$. We also make the dependency of $\delta$ explicit in $\beta_{k, \bphi}$, which can be inferred from the proof of Lemma 15 in~\citet{yang2020reinforcement}.
\end{remark}

The next lemma shows a problem-dependent regret bound for the bilinear MDP in Definition~\ref{asm:lin}.

\begin{lemma}\label{lm:he}
Under Assumption~\ref{asm:gap}, setting parameter $\beta_{k, \bphi}$ as in Theorem~\ref{thm:main}. Then suppose $\cE_{\bphi}^K$ holds for all $\bphi \in \Phi$. Then with probability at least $1 - 3\delta$, the regret for the very first $k \in [K]$ episodes is controlled by
\begin{align}
    \R{k} &\le \min_{\bphi \in \Phi}\bigg\{ \frac{128C_{\bpsi}^2H^5d_{\bphi}\beta_{k, \bphi}\log(1 + C_{\bphi}kd_{\bphi})}{\gap{\min}}\bigg\} + \frac{16}3H^2\log(((1 + \log(H{k})){k}^2|\Phi|/ \delta) \notag \\
    &\quad + 2 + \frac{96H^4\log(2k (1 + \log(H / \gap{\min}))|\Phi|/ \delta)}{\gap{\min}}\label{eq:hereg}
\end{align}
while the sub-optimality gap for each $h$ is controlled by
\begin{align}
    \sum_{i=1}^{k} (V_h^*(s_h^i)-Q_h^{*}(s_h^i,a_h^i)) &\le \min_{\bphi \in \Phi}\bigg\{\frac{64C_{\bpsi}^2H^4d_{\bphi}\beta_{k, \bphi}\log(1 + C_{\bphi}{k}d_{\bphi}) + 48H^3\log(2k|\Phi|(1 + \log(H / \gap{\min}) / \delta)}{\gap{\min}}\bigg\}\label{eq:hegap}.
\end{align}
\end{lemma}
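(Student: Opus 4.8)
The plan is to transplant the logarithmic-regret analysis of \citet{he2020logarithmic} (for linear and linear-mixture MDPs) to the bilinear MDP of \citet{yang2020reinforcement}, adding one structural observation to handle the pointwise minimum over $\bPhi$ taken in Line~\ref{ln:3} of Algorithm~\ref{alg:main}. Throughout one conditions on the intersection over all $\bphi\in\bPhi$ of the events $\cE_\bphi^K$ (Lemma~\ref{lm:good}) together with a few martingale events described below; all their failure probabilities are collected into the $3\delta$ by a union bound over $\bPhi$, over $h\in[H]$, and over episodes. \emph{Step 1 (per-representation optimism).} For each fixed $\bphi\in\bPhi$ one proves by backward induction on $h$ that $Q^k_{h,\bphi}(s,a)\ge Q^*_h(s,a)$ and $V^k_h(s)\ge V^*_h(s)$ for all $k,s,a$. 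The one-step ingredient is that, on $\cE_\bphi^k$, the estimation error is dominated by the bonus: combining the identity $[\PP_h V](s,a)=\bphi^\top(s,a)\Mb^*_{h,\bphi}\bPsi^\top\vb$, Cauchy--Schwarz, the inequality $\tr[(\Mb^k_{h,\bphi}-\Mb^*_{h,\bphi})^\top\Ub^k_{h,\bphi}(\Mb^k_{h,\bphi}-\Mb^*_{h,\bphi})]\le\beta_{k,\bphi}$ from $\cE_\bphi^k$, and $\|\bPsi^\top\vb^k_{h+1}\|_2\le C_{\bpsi}\|\vb^k_{h+1}\|_\infty\le C_{\bpsi}H$ from Definition~\ref{asm:lin}, one gets $|\bphi^\top(s,a)(\Mb^k_{h,\bphi}-\Mb^*_{h,\bphi})\bPsi^\top\vb^k_{h+1}|\le\Gamma^k_{h,\bphi}(s,a)$, hence $r(s,a)+[\PP_hV^k_{h+1}](s,a)\le Q^k_{h,\bphi}(s,a)\le r(s,a)+[\PP_hV^k_{h+1}](s,a)+2\Gamma^k_{h,\bphi}(s,a)$. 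The left inequality, the inductive hypothesis $V^k_{h+1}\ge V^*_{h+1}$, and monotonicity of $\PP_h$ give $Q^k_{h,\bphi}\ge Q^*_h$; taking the minimum over $\bphi$ and the clipped greedy step (harmless since $V^*_h\le H$) gives $Q^k_h\ge Q^*_h$ and $V^k_h\ge V^*_h$. This is the confidence argument of \citet{yang2020reinforcement}, applied representation by representation.

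\emph{Step 2 (decompose against a single, arbitrary representation).} Since $Q^k_h=\min_{\bphi\in\bPhi}Q^k_{h,\bphi}$, for \emph{every} fixed $\bphi\in\bPhi$ we have $Q^k_h\le Q^k_{h,\bphi}$ pointwise, so the whole Bellman recursion can be unrolled against that single $\bphi$. Using the right-hand inequality of Step~1 and the fact that the clipped greedy choice of $a^k_h$ makes the bounds $V^k_h(s^k_h)\le Q^k_{h,\bphi}(s^k_h,a^k_h)$ and $V^{\pi^k}_h(s^k_h)=Q^{\pi^k}_h(s^k_h,a^k_h)$ hold, unrolling over $h=1,\dots,H$ gives
\begin{align*}
    \R{k}\ \le\ 2\sum_{j=1}^{k}\sum_{h=1}^{H}\Gamma^j_{h,\bphi}(s^j_h,a^j_h)\ +\ \sum_{j=1}^{k}\sum_{h=1}^{H}\xi^j_h ,
\end{align*}
where $\xi^j_h:=[\PP_h(V^j_{h+1}-V^{\pi^j}_{h+1})](s^j_h,a^j_h)-(V^j_{h+1}-V^{\pi^j}_{h+1})(s^j_{h+1})$ is a martingale difference with $|\xi^j_h|\le H$. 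The same unrolling with $V^*$ in place of $V^{\pi^j}$ yields the \emph{surplus--gap} inequality, valid for the same fixed $\bphi$ and every $h$,
\begin{align*}
    \gap{h}(s^j_h,a^j_h)\ \le\ b^j_{h,\bphi}\ \le\ \sum_{h'=h}^{H}\Big(2\,\Gamma^j_{h',\bphi}(s^j_{h'},a^j_{h'})+\zeta^j_{h'}\Big) ,
\end{align*}
where $b^j_{h,\bphi}:=Q^j_{h,\bphi}(s^j_h,a^j_h)-Q^*_h(s^j_h,a^j_h)\ge 0$ is the surplus, the first inequality follows from optimism via $\gap{h}(s^j_h,a^j_h)=V^*_h(s^j_h)-Q^*_h(s^j_h,a^j_h)\le V^j_h(s^j_h)-Q^*_h(s^j_h,a^j_h)\le b^j_{h,\bphi}$, and the $\zeta^j_{h'}$ are bounded martingale differences. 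Fixing $\bphi$ up front is essential: every covariance $\Ub^j_{h,\bphi}$, every elliptical-potential count, and every $\beta_{k,\bphi}$ then refers to a single representation, so $|\bPhi|$ enters the final bound only logarithmically, through the confidence radii and the union over $\bPhi$ in the good event --- this is precisely how the ``concatenation'' blow-up of \citet{modi2020sample} is avoided.

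\emph{Step 3 (logarithmic bound via the gap).} By the performance-difference lemma, $\R{k}=\sum_{j\le k}\sum_{h}\gap{h}(s^j_h,a^j_h)+M$ where $M$ is a martingale whose conditional variances sum to at most $H^2\R{k}$, so a self-bounding Freedman/AM--GM step reduces the task to bounding $\sum_{j\le k}\sum_h\gap{h}(s^j_h,a^j_h)$. Under Assumption~\ref{asm:gap} every nonzero gap is $\ge\gap{\min}$, so $\gap{h}(s^j_h,a^j_h)\le (b^j_{h,\bphi})^2/\gap{\min}$ whenever it is nonzero; combining this with the surplus--gap inequality and Cauchy--Schwarz gives
\begin{align*}
    \sum_{j\le k}\sum_{h=1}^{H}\gap{h}(s^j_h,a^j_h)\ \le\ \frac{\cO(H^2)}{\gap{\min}}\sum_{j\le k}\sum_{h=1}^{H}\Big(\min\{H,\Gamma^j_{h,\bphi}(s^j_h,a^j_h)\}^2+(\zeta^j_h)^2\Big) .
\end{align*}
The first sum is controlled by the elliptical potential lemma for the fixed $\bphi$: since $\min\{H,\Gamma^j_{h,\bphi}\}^2\le C_{\bpsi}^2H^2\beta_{k,\bphi}\min\{1,\|\bphi(s^j_h,a^j_h)\|^2_{(\Ub^j_{h,\bphi})^{-1}}\}$ and $\sum_{j\le k}\min\{1,\|\bphi(s^j_h,a^j_h)\|^2_{(\Ub^j_{h,\bphi})^{-1}}\}=\cO(d_\bphi\log(1+C_\bphi kd_\bphi))$, we get $\sum_{j,h}\min\{H,\Gamma^j_{h,\bphi}\}^2=\cO(C_{\bpsi}^2H^3d_\bphi\beta_{k,\bphi}\log(1+C_\bphi kd_\bphi))$, which after tracking constants produces the main term $128C_{\bpsi}^2H^5d_\bphi\beta_{k,\bphi}\log(1+C_\bphi kd_\bphi)/\gap{\min}$ --- the term minimized over $\bphi\in\bPhi$ at the very end. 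The second sum $\sum_{j,h}(\zeta^j_h)^2$, together with the martingale $M$ and the $\xi$-martingale of Step~2, is controlled by Bernstein/Freedman inequalities with a self-bounding variance bound and a peeling over the $\cO(1+\log(H/\gap{\min}))$ dyadic scales of the surpluses between $\gap{\min}$ and $H$, with a union bound over $j\le k$ and $h\in[H]$; this produces the two lower-order terms $96H^4\log(2k(1+\log(H/\gap{\min}))|\bPhi|/\delta)/\gap{\min}$ and $\frac{16}{3}H^2\log((1+\log(Hk))k^2|\bPhi|/\delta)+2$ (the $\frac13$ and $k^2$ are the Bernstein range coefficient and the union-bound cost). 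Specializing this argument to a single fixed layer $h$, i.e.\ not summing over $h$, yields \eqref{eq:hegap}, and minimizing the main term over $\bphi\in\bPhi$ completes \eqref{eq:hereg}.

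\emph{Main obstacle.} The crux is Step~3. The surplus--gap inequality carries a martingale tail $\sum_{h'\ge h}\zeta^j_{h'}$ on each trajectory, so the dichotomy ``either a remaining bonus is large or this tail is large'' must be made quantitative and uniform over all $(j,h)$, which forces the peeling over the $\log(H/\gap{\min})$ surplus scales to be interleaved with self-bounding Freedman inequalities. The second subtlety, specific to \algname, is that the elliptical-potential count must be taken for a \emph{fixed} $\bphi$ rather than for the history-dependent minimizing representation in Line~\ref{ln:3}; it is precisely the pointwise-minimum structure exploited in Step~2 that makes this legitimate and keeps the dependence on $|\bPhi|$ logarithmic.
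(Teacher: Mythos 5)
Your Steps 1--2 match the paper's route: per-representation optimism (the paper's Lemma \ref{lm:optim}), the one-step error bound against an \emph{arbitrary fixed} $\bphi$ using $Q^k_h\le Q^k_{h,\bphi}$ (the paper's Lemma \ref{lm:gap1}), and the conversion of regret into $\sum_{j,h}\gap{h}(s^j_h,a^j_h)$ plus a Freedman-controlled martingale (the paper's Lemma \ref{lm:sumgap}); your observation that fixing $\bphi$ before invoking the elliptical potential is what keeps the $|\bPhi|$-dependence logarithmic is exactly the paper's mechanism. The gap is in Step 3, which is where the actual work lies. You bound $\gap{h}\le (b^j_{h,\bphi})^2/\gap{\min}$, square the surplus--gap recursion, and arrive at $\sum_{j,h}\gap{h}\le \tfrac{\cO(H^2)}{\gap{\min}}\sum_{j,h}\bigl(\min\{H,\Gamma^j_{h,\bphi}\}^2+(\zeta^j_h)^2\bigr)$. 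The bonus-squared sum is indeed $\cO(d_\bphi\beta_{k,\bphi}\log k)$ by the elliptical potential, but the term $\sum_{j,h}(\zeta^j_h)^2$ is \emph{not} controlled by the tools you cite: its conditional second moments are of order $H\,\EE[(V^j_{h+1}-V^*_{h+1})(s_{h+1})]$, and summed over all $k$ episodes this is of the same order as the (optimistic) value-error sum you are trying to bound --- generically $\tilde\cO(\sqrt{k})$, not $\cO(\log k)$ --- so a Bernstein/Freedman step plus ``peeling over surplus scales'' as sketched either gives a $\sqrt{k}$ term or becomes circular. Asserting that it ``produces'' the exact $96H^4\log(\cdot)/\gap{\min}$ and $\tfrac{16}{3}H^2\log(\cdot)+2$ terms is assuming the conclusion.

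What the paper does instead (following \citet{he2020logarithmic}) avoids squaring entirely: for each threshold $\Delta$ it counts the episodes whose step-$h$ suboptimality exceeds $\Delta$ (Lemma \ref{lm:he62}). Restricting the telescoped bonus-plus-martingale sums to that \emph{subsequence} of $k'$ bad episodes gives $\Delta k' \le 2C_{\bpsi}H^2\sqrt{2\beta_{k,\bphi}d_\bphi k'\log(1+C_\bphi k d_\bphi)}+\sqrt{2k'H^3\log(k/\delta)}$, and solving this self-bounding inequality in $k'$ yields $k'\le \cO\bigl((C_{\bpsi}^2H^4d_\bphi\beta_{k,\bphi}\log(1+C_\bphi kd_\bphi)+H^3\log(k/\delta))/\Delta^2\bigr)$: the Azuma term scales with $\sqrt{k'}$, not $\sqrt{k}$, which is precisely what your all-episode sum of $(\zeta^j_h)^2$ loses. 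Only then is the dyadic peeling over $\Delta\in[2^{n-1}\gap{\min},2^n\gap{\min})$ applied, giving \eqref{eq:hegap} per step $h$, and Lemma \ref{lm:sumgap} plus a union bound over $\bPhi$ (which is where the $|\bPhi|$ inside the logarithms comes from) gives \eqref{eq:hereg}. Your ``main obstacle'' paragraph correctly senses that the bonus-versus-martingale dichotomy must be made quantitative per $(j,h)$, but the counting-over-bad-episodes device that makes it quantitative is the missing idea; with it, the squaring step in your display is unnecessary.
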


It is easy to verify that when there is only one representation function in $\Phi$ (let $d=d_{\bphi}$ for simplicity), Lemma~\ref{lm:he} yields an $\cO(H^5d^2\log(k / \delta)\gap{\min}^{-1})$ problem-dependent bound. 
Comparing our result with~\citet{he2020logarithmic}, ours matches the problem-dependent bound for Linear Mixture MDP $\cO(H^5d^2\log(k / \delta)\gap{\min}^{-1})$ and is better than the problem-dependent bound for Linear MDP $\cO(H^5d^3\log(k / \delta)\gap{\min}^{-1})$ by a factor $d$. This improvement is due to the bilinear MDP structure in Definition~\ref{asm:lin}. Moreover, it is obvious that when $|\Phi| > 1$, Algorithm~\ref{alg:main} can achieve a regret no worse than any possible regret achieved by a single representation, up to an additive $\log(|\Phi|)$ term. Lemma~\ref{lm:he} also suggests an $\cO(H^4d^2\log(k / \delta)\gap{\min}^{-1})$ bound for the summation of the sub-optimality gap. 
Based on this, the next lemma shows that the ``covariance matrix'' $\Ub_{h, \bphi}^k$ is almost linearly growing with respect to $k$ under Assumption~\ref{asm:hls}.

\begin{lemma}\label{lm:lingrow}
    Under Assumptions~\ref{asm:gap} and~\ref{asm:hls}, with probability at least $1 - \delta$, we have for all $k \in [K], h \in [H], \bphi \in \Phi$,
    \begin{align*}
        \Ub_{h, \bphi}^k &\succeq (k-1)\bLambda_{h, \bphi} - \iota\Ib_{d_{\bphi}},\\
        \iota &= \frac{C_{\bphi}d_{\bphi}}{\gap{\min}}\sum_{i=1}^h\sum_{j=1}^{k-1} \gap{i}(s_i^j, a_i^j) - 1 + C_{\bphi}d_{\bphi}\sqrt{32H(k-1)\log(d_{\bphi}|\Phi|Hk(k+1) / \delta)}.
    \end{align*}
\end{lemma}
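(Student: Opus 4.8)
Fix $\bphi\in\bPhi$ and $h\in[H]$ and abbreviate $d=d_{\bphi}$, $C=C_{\bphi}$, $\bLambda=\bLambda_{h,\bphi}$; by Assumptions~\ref{asm:lin}--\ref{asm:hls} every such $\bphi$ obeys $\|\bphi(s,a)\|_2^2\le Cd$, hence $\zero\preceq\bphi(s,a)\bphi^\top(s,a)\preceq Cd\,\Ib$ for all $(s,a)$. Since $\Ub_{h,\bphi}^k=\Ib+\sum_{j=1}^{k-1}\bphi(s_h^j,a_h^j)\bphi^\top(s_h^j,a_h^j)$, it suffices to lower bound $\sum_{j=1}^{k-1}\bphi(s_h^j,a_h^j)\bphi^\top(s_h^j,a_h^j)$ in the Loewner order, the leftover $\Ib$ accounting for the ``$-1$'' in $\iota$. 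The idea is to transport the collected features to the population matrix $\bLambda$, which is built from the \emph{optimal}-policy state distribution, in three moves, each costing a $c\,\Ib$ penalty, and then to show every penalty is either $\tfrac{Cd}{\gap{\min}}$ times a realized sub-optimality gap or a martingale fluctuation. First, I would swap the executed action for the optimal one: because $\bphi(s_h^j,a_h^j)\bphi^\top(s_h^j,a_h^j)\succeq\zero$ and $\bphi(s_h^j,\pi_h^*(s_h^j))\bphi^\top(s_h^j,\pi_h^*(s_h^j))\preceq Cd\,\Ib$, pointwise $\bphi(s_h^j,a_h^j)\bphi^\top(s_h^j,a_h^j)\succeq\bphi(s_h^j,\pi_h^*(s_h^j))\bphi^\top(s_h^j,\pi_h^*(s_h^j))-Cd\,\mathbf{1}\big(a_h^j\ne\pi_h^*(s_h^j)\big)\Ib$, and since $\pi^*$ is unique and $\gap{\min}>0$, $a_h^j\ne\pi_h^*(s_h^j)$ forces $\gap{h}(s_h^j,a_h^j)\ge\gap{\min}$, so $\mathbf{1}(a_h^j\ne\pi_h^*(s_h^j))\le\gap{h}(s_h^j,a_h^j)/\gap{\min}$; summing over $j$ gives the $i=h$ part of the first term of $\iota$.

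Next I would swap the state distribution, in two steps. Let $\mathcal{F}_{j-1}$ denote the history before episode $j$, so conditioned on $\mathcal{F}_{j-1}$ the policy $\pi^j$ is fixed and $s_h^j\sim d_h^{\pi^j}$, the step-$h$ state distribution of $\pi^j$. A matrix Azuma--Hoeffding bound applied to the martingale differences $\bphi(s_h^j,\pi_h^*(s_h^j))\bphi^\top(s_h^j,\pi_h^*(s_h^j))-\EE[\,\cdot\mid\mathcal{F}_{j-1}]$ (norm at most $Cd$) yields $\sum_{j=1}^{k-1}\bphi(s_h^j,\pi_h^*(s_h^j))\bphi^\top(s_h^j,\pi_h^*(s_h^j))\succeq\sum_{j=1}^{k-1}\EE[\,\cdot\mid\mathcal{F}_{j-1}]-\tilde\cO\big(Cd\sqrt{k}\big)\Ib$. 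Then, since $\EE[\,\cdot\mid\mathcal{F}_{j-1}]=\EE_{s\sim d_h^{\pi^j}}[\bphi(s,\pi_h^*(s))\bphi^\top(s,\pi_h^*(s))]$ while $\bLambda=\EE_{s\sim d_h^{\pi^*}}[\bphi(s,\pi_h^*(s))\bphi^\top(s,\pi_h^*(s))]$ and the integrand lies in $[\zero,Cd\,\Ib]$, retaining only the common mass $\min\{d_h^{\pi^j},d_h^{\pi^*}\}$ shows the difference is $\succeq-Cd\,\|d_h^{\pi^j}-d_h^{\pi^*}\|_1\Ib$ (up to the usual $\tfrac12$). The crux is the distribution-mismatch estimate $\|d_h^{\pi^j}-d_h^{\pi^*}\|_1\le 2\sum_{i=1}^{h-1}\Pr_{s\sim d_i^{\pi^j}}\!\big(\pi_i^j(s)\ne\pi_i^*(s)\big)$, which I would prove by coupling the $\pi^j$- and $\pi^*$-trajectories through a shared initial state and shared transition noise: the two agree until the first step at which the policies differ, and a union bound over $i<h$ controls the probability of coupling failure. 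Writing each $\Pr_{s\sim d_i^{\pi^j}}(\pi_i^j(s)\ne\pi_i^*(s))=\EE[\mathbf{1}(a_i^j\ne\pi_i^*(s_i^j))\mid\mathcal{F}_{j-1}]$, summing over $j$ and $i$ and invoking Azuma (a Bernstein/Freedman version to keep the $H$-dependence at $\sqrt H$) reduces expected disagreement counts to realized ones, and $\mathbf{1}(a_i^j\ne\pi_i^*(s_i^j))\le\gap{i}(s_i^j,a_i^j)/\gap{\min}$ reduces those to realized gaps; this delivers the $i<h$ slices of the first term of $\iota$ together with a second $\tilde\cO(Cd\sqrt{Hk})$ fluctuation.

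Finally I would chain the three moves, add the two fluctuation terms, and take a union bound over $h\in[H]$, $\bphi\in\bPhi$ and the episode index $k$ (allocating failure probability $\propto 1/(k(k+1))$, which is what produces the $k(k+1)$ inside the logarithm), absorbing numerical constants to match the stated $\sqrt{32H}$ coefficient and the $\log(d_{\bphi}|\bPhi|Hk(k+1)/\delta)$ factor; the regularizer $\Ib$ accounts for the closing $-1$. I expect the main obstacle to be the distribution-mismatch (coupling) lemma together with keeping the conditioning and the chain of Loewner-order inequalities consistent — in particular, verifying that the ``$\succeq-c\,\Ib$'' penalties compose correctly and that each expectation is taken against the correct filtration; by contrast, the matrix and scalar concentration steps are routine.
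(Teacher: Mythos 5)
Your route is genuinely different from the paper's. The paper never passes through a total-variation/coupling comparison of $d_h^{\pi^j}$ and $d_h^{\pi^*}$. Instead, within each episode it starts from your same first move, $\bphi(s_h^j,a_h^j)\bphi^\top(s_h^j,a_h^j)\succeq \bLambda_{h,\bphi}(s_h^j)-C_{\bphi}d_{\bphi}\ind[a_h^j\neq\pi_h^*(s_h^j)]\Ib_{d_{\bphi}}$, and then telescopes the conditional expectations $\EE[\bLambda_{h,\bphi}(s_h)\mid s_i^j]$ backwards along the trajectory $i=h,\dots,1$: at each step it either swaps the executed action for $\pi_i^*(s_i^j)$, which costs $C_{\bphi}d_{\bphi}\ind[a_i^j\neq\pi_i^*(s_i^j)]\Ib_{d_{\bphi}}$ and immediately produces a \emph{realized} indicator, or replaces a realized transition by its conditional expectation, which produces a matrix martingale difference $\bepsilon_i^j$ with $\|\bepsilon_i^j\|_2\le 2C_{\bphi}d_{\bphi}$. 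A single application of matrix Azuma (Lemma \ref{lm:matazuma}) to the $(k-1)h$ differences $\bepsilon_i^j$ then yields exactly the $C_{\bphi}d_{\bphi}\sqrt{32H(k-1)\log(\cdot)}$ term, and the realized indicators are converted to gaps via $\gap{\min}$ as you do. Your plan instead needs three ingredients: matrix concentration of $\sum_j\bphi(s_h^j,\pi_h^*(s_h^j))\bphi^\top(s_h^j,\pi_h^*(s_h^j))$ around its $\cF_{j-1}$-conditional means, the coupling bound $\mathrm{TV}(d_h^{\pi^j},d_h^{\pi^*})\le\sum_{i<h}\EE[\ind[a_i^j\neq\pi_i^*(s_i^j)]\mid\cF_{j-1}]$, and a second, scalar concentration step to turn these expected disagreement counts into realized ones. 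The first two steps are sound (the coupling argument is correct since the two kernels coincide whenever state and action agree, and $\pi^j$ is $\cF_{j-1}$-measurable), and they buy a clean conceptual separation into "distribution mismatch" plus "sampling noise".

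The genuine gap is the third step and the constants it forces. The per-episode increment $\sum_{i<h}\left(\EE[\ind[a_i^j\neq\pi_i^*(s_i^j)]\mid\cF_{j-1}]-\ind[a_i^j\neq\pi_i^*(s_i^j)]\right)$ is only bounded by $H$, so plain Azuma gives a fluctuation of order $C_{\bphi}d_{\bphi}H\sqrt{k\log(1/\delta)}$, a factor $\sqrt{H}$ larger than the stated $C_{\bphi}d_{\bphi}\sqrt{32H(k-1)\log(\cdot)}$. Your proposed Freedman fix does not automatically repair this: the conditional variance per episode is only bounded by $H\,\EE[\sum_{i<h}\ind[a_i^j\neq\pi_i^*(s_i^j)]\mid\cF_{j-1}]$, so you must run a self-bounding argument (solve the resulting quadratic inequality in the expected disagreement count), which works but at the price of roughly doubling the coefficient of the realized-gap term relative to $\frac{C_{\bphi}d_{\bphi}}{\gap{\min}}\sum_{i=1}^h\sum_{j=1}^{k-1}\gap{i}(s_i^j,a_i^j)$ and adding an additive $\cO(C_{\bphi}d_{\bphi}H\log(1/\delta))$ term. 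Such a weakened $\iota$ would still suffice downstream (Lemmas \ref{lm:subdec} and Theorem \ref{thm:main} only need a polynomially large threshold), but it is not the inequality as stated, and your write-up currently treats this step as routine. The cleanest repair is precisely the paper's device: keep the indicators realized from the outset by telescoping the conditional expectation of $\bLambda_{h,\bphi}(s_h)$ inside each episode, which eliminates the second concentration step entirely and delivers the stated constants.
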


Compared with Lemma 9 in~\citet{papini2021leveraging} which shows a similar result for linear contextual bandits, the proof of Lemma~\ref{lm:lingrow} is more challenging: The distribution of $s_h$ is induced by the optimal policy $\pi^*$ in Assumption~\ref{asm:hls} but we can only use the estimated policy $\pi^k$ to sample $s_h$. As a result, the sub-optimality and the randomness for the steps before $h$ ($i<h$ ) will all contribute to this distribution mismatch. Therefore, our result contains an additional summation over $h$ to account for this effect. 

Finally, equipped with Lemma~\ref{lm:lingrow}, we can provide a constant threshold $\tau$ such that if the episode number $k$ goes beyond $\tau$, the sub-optimality gap is bounded by $\cO(\sqrt{1 / k})$.

\begin{lemma}\label{lm:gap}
    Under Assumptions~\ref{asm:gap} and~\ref{asm:hls}, assuming the conditions in Lemmas~\ref{lm:he} and~\ref{lm:lingrow} hold and $\cE_{\bphi}^K$ holds for all $\bphi \in \Phi$, then there exists a threshold 
    \begin{align*}
    \tau&= \text{poly}(d_{\bphi}, \sigma_{\bphi}^{-1}, H, \log(|\Phi| / \delta), \gap{\min}^{-1}, C_{\bphi}, C_{\bpsi}, C_{\Mb}, C_{\bpsi}')
    \end{align*}
    such that for all $\tau \le k \le K$, for all $h \in [H], s \in \cS$ we have 
    \begin{align*}
        \gap{h}(s, \pi^k_h(s)) \le 2C_{\bpsi}H^2\max_{\bphi \in \Phi}\Big\{d_{\bphi}\sqrt{2C_{\bphi}\beta_{k, \bphi} / (\sigma_{\bphi}k)}\Big\}.
    \end{align*}
\end{lemma}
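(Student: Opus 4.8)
The plan is to bootstrap the bound on the suboptimality gap out of the combination of Lemma~\ref{lm:he} and Lemma~\ref{lm:lingrow}, using the fact that the confidence radius $\Gamma_{h,\bphi}^k(s,a)$ controls the per-step error while the linear growth of $\Ub_{h,\bphi}^k$ forces that radius to shrink like $\cO(1/\sqrt k)$. First I would fix, for each $h$ and each $(s,a)$, a representation $\bphi = \bphi(h,s,a) \in \bPhi$ witnessing the covering property of Assumption~\ref{asm:hls}, i.e.\ $(s,a) \in \cZ_{h,\bphi}$ so that $\bphi(s,a) \in \image(\bLambda_{h,\bphi})$. On the column space of $\bLambda_{h,\bphi}$ the smallest eigenvalue is at least $\sigma_\bphi$, so Lemma~\ref{lm:lingrow} gives $\bphi^\top(s,a)(\Ub_{h,\bphi}^k)^{-1}\bphi(s,a) \le \bphi^\top(s,a)\big((k-1)\bLambda_{h,\bphi} - \iota\Ib\big)^{-1}\bphi(s,a) \lesssim \|\bphi(s,a)\|_2^2 / \big(\sigma_\bphi(k-1) - \iota\big)$, valid once $(k-1)\sigma_\bphi > \iota$. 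Plugging $\|\bphi(s,a)\|_2^2 \le C_\bphi d_\bphi$ into the definition of $\Gamma_{h,\bphi}^k$ then yields $\Gamma_{h,\bphi}^k(s,a) \le C_\bpsi H\sqrt{\beta_{k,\bphi} C_\bphi d_\bphi / (\sigma_\bphi(k-1) - \iota)}$.

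The main obstacle is controlling $\iota$, since it contains the cumulative suboptimality term $\tfrac{C_\bphi d_\bphi}{\gap{\min}}\sum_{i\le h}\sum_{j\le k-1}\gap{i}(s_i^j,a_i^j)$, and a priori this could grow linearly in $k$, which would make the eigenvalue lower bound useless. The resolution is to feed in Lemma~\ref{lm:he}: the per-step gap sum bound \eqref{eq:hegap} shows $\sum_{j=1}^{k-1}\gap{i}(s_i^j,a_i^j) = \cO(H^4 d_\bphi^2 \beta_{k,\bphi}\gap{\min}^{-1}\log(\cdots))$, which is only \emph{polylogarithmic} in $k$, not linear. Summing over $i \in [h] \subseteq [H]$ and combining with the $\cO(\sqrt{Hk\log(\cdots)})$ term in $\iota$, we get $\iota = \cO(\mathrm{poly}(d_\bphi, H, C_\bphi, \gap{\min}^{-1}, \log(|\bPhi|Hk/\delta)) \cdot \sqrt k)$ up to lower-order terms; crucially $\iota = o(k)$. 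Therefore there is a threshold $\tau$, polynomial in all the problem parameters and in $\sigma_\bphi^{-1}$ but independent of $k$, such that for $k \ge \tau$ we have $\iota \le \tfrac12 \sigma_\bphi(k-1)$, and hence $\sigma_\bphi(k-1) - \iota \ge \tfrac12\sigma_\bphi(k-1) \ge \tfrac14\sigma_\bphi k$.

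With that threshold in hand the rest is bookkeeping. Since $\tau$ can be taken as the maximum of the per-$\bphi$ thresholds over the finite class $\bPhi$ (each of the form $\mathrm{poly}(d_\bphi,\sigma_\bphi^{-1},H,\log(|\bPhi|/\delta),\gap{\min}^{-1},C_\bphi,C_\bpsi,C_\Mb,C_\bpsi')$), the claimed form of $\tau$ follows. For $k \ge \tau$, combining the eigenvalue bound with the optimism machinery of Lemma~\ref{lm:he} — specifically, the standard argument that under $\cE_{\bphi}^K$ the estimated $Q_{h,\bphi}^k$ is optimistic and $V_h^*(s) - Q_h^*(s,\pi_h^k(s)) = \gap{h}(s,\pi_h^k(s))$ is bounded by (a constant times) the confidence radius $\Gamma_{h,\bphi}^k(s,\pi_h^k(s))$ of the minimizing representation — gives $\gap{h}(s,\pi_h^k(s)) \le 2C_\bpsi H \cdot H \cdot \sqrt{\beta_{k,\bphi}C_\bphi d_\bphi/(\tfrac14\sigma_\bphi k)} = 2C_\bpsi H^2 d_\bphi\sqrt{2C_\bphi\beta_{k,\bphi}/(\sigma_\bphi k)}$ after absorbing constants, and taking the worst case over $\bphi \in \bPhi$ (since we do not know which witnessing $\bphi$ is selected, but the $\min$ over $\bPhi$ in the algorithm only helps) yields the stated $\max_{\bphi\in\bPhi}$ bound. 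I would be careful about two points: the factor $H$ versus $H^2$ (one power from $\Gamma$'s explicit $H$, one from the sum over remaining steps in the error decomposition), and ensuring the $-\iota\Ib_{d_\bphi}$ correction is handled on the correct subspace via a pseudoinverse argument rather than naively inverting a possibly-singular matrix.
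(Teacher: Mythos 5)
Your proposal is correct and follows essentially the same route as the paper: you reconstruct the paper's auxiliary Lemma~\ref{lm:subdec} (bonus decay $\cO(1/\sqrt{k})$ from Lemma~\ref{lm:lingrow}, with $\iota$ controlled via the gap-sum bound \eqref{eq:hegap} of Lemma~\ref{lm:he} and the column-space/pseudoinverse care you flag), and then combine optimism (Lemma~\ref{lm:optim}), the greedy policy, and the per-step error decomposition of Lemma~\ref{lm:gap1} — which the paper carries out as an explicit backward induction over $h$, yielding the $(H-h+1)\le H$ factor you anticipate — to get the stated bound up to constant bookkeeping.
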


Lemma~\ref{lm:gap} suggests that when the episode number $k$ exceeds $\tau$, policy $\pi^k$ will contribute a sub-optimality up to $\cO(\sqrt{1 / k})$. Thus there exists a threshold $k^*$ such that when $k \ge k^*$, the policy $\pi^k$ will not contribute any sub-optimality at any step $h$ given the minimal sub-optimality gap $\gap{\min}$. With that, it suffices to provide the proof for Theorem~\ref{thm:main}.

\begin{proof}[Proof of Theorem~\ref{thm:main}]
We pick $\beta_{k, \bphi} = c(C_{\Mb} + {C'_{\bpsi}}^{2})d_{\bphi}\log(kHC_{\bphi}|\Phi| / \delta)$ to make sure with probability at least $1 - \delta$, event $\cE_{\bphi}^K$ holds for all $\bphi \in \Phi$.
By the definition of the sub-optimality gap, we have $\gap{h}(s, a) \ge \gap{\min}(s, a)$ as long as $\gap{h}(s, a) \neq 0$. If
\begin{align}
    k > \max
    \left\{\frac{8C_{\bpsi}^2H^4}{\gap{\min}^2}\max_{\bphi \in \Phi}\left\{\frac{C_{\bphi}d^2_{\bphi}\beta_{k, \bphi}}{\sigma_{\bphi}}\right\}, \tau\right\}\label{eq:b1}.
\end{align}
Then we have for all $\bphi \in \Phi$,
\begin{align*}
    2C_{\bpsi}H^2d_{\bphi}\sqrt{2C_{\bphi}\beta_{k, \bphi} / (\sigma_{\bphi}k)} < \gap{\min}.
\end{align*}
Thus, by Lemma~\ref{lm:gap}, we have
\begin{align*}
    \gap{h}(s, \pi^k_h(s)) &\le 2C_{\bpsi}H^2 \max_{\bphi \in \Phi}\left\{d_{\bphi}\sqrt{2C_{\bphi}\beta_{k, \bphi} / (\sigma_{\bphi}k)}\right\},
\end{align*}
and it implies $\gap{h}(s, \pi^k_h(s)) > \gap{\min}$ thus $\gap{h}(s, \pi^k_h(s)) = 0$. Since from the parameter setting, $\beta_{k, \bphi} = \cO(\log(k))$, it is easy to verify that there exists a threshold $k^* = \text{poly}(C_{\bpsi}, C_{\bpsi}', C_{\Mb}, C_{\bphi}, d_{\bphi}, H, \sigma^{-1}_{\bphi}, \gap{\min}^{-1}, \tau)$ such that all $k \ge k^*$ satisfy~\eqref{eq:b1}. Thus we conclude that $\gap{h}(s, \pi^k_h(s)) = 0$ for all $k \ge k^*$. Since the optimal policy $\pi^*$ is unique, it follows that $\pi^k = \pi^*$. 

Thus when $k \ge k^*$, the regret could be decomposed by 
\begin{align*}
    \R{k} &= \sum_{j=1}^k V_1^*(s_1^j) - V_1^{\pi^j}(s_1^j) \\
    &= \sum_{j=1}^{k^*} V_1^*(s_1^j) - V_1^{\pi^j}(s_1^j) + \sum_{j=1 + k^*}^{k} V_1^*(s_1^j) - V_1^{\pi^j}(s_1^j) \\
    &= \R{k^*} + 0,
\end{align*}
where the last equation is due to the fact that $\pi^j = \pi^*$ thus $V_1^*(s) = V_1^{\pi^j}(s)$ for all $s \in \cS$ when $j \ge k^*$. Combining this case with the case $k \le k^*$, we can conclude that $\R{k} \le \R{\min\{k, k^*\}}$. Let $\tilde k = \min\{k, k^*\}$, by Lemma~\ref{lm:he}, we have the regret is bounded by
\begin{align*}
    \R{k} &\le \min_{\bphi \in \Phi}\bigg\{ \frac{128C_{\bpsi}^2H^5d_{\bphi}^2c(C_{\Mb} + {C'_{\bpsi}}^{2}) }{\gap{\min}}\log\Big(1 + C_{\bphi}\tilde kd_{\bphi}\Big) \log\Big(\tilde kHC_{\bphi} |\Phi| / \delta\Big)\bigg\} + 2\\
    &\quad+ \frac{96H^4\log\Big(2\tilde k (1 + \log(H / \gap{\min}))|\Phi|/ \delta\Big)}{\gap{\min}} \\
    &\quad+ \frac{16}3H^2\log\Big(\Big(\Big(1 + \log(H{\tilde k}\Big)\Big){\tilde k}^2|\Phi|/ \delta\Big),
\end{align*}
with probability at least $1 - 5\delta$ by taking the union bound of Lemma~\ref{lm:he}, Lemma~\ref{lm:lingrow} and $\cE_{\bphi}^K$ holds.
\end{proof}

\section{Proof of Lemmas in Appendix~\ref{sec:app-online-1}}\label{app:proof}
In this section, we provide the proof of the technical lemmas in Appendix~\ref{sec:app-online-1}. 

\subsection{Filtration}
To facilitate our proof, we define the filtration list as follows
\begin{align*}
    \cF_h^k = \left\{\left\{s_i^j, a_i^j\right\}_{i=1, j=1}^{H, k - 1}, \left\{s_i^k, a_i^k\right\}_{i=1}^{h}\right\}.
\end{align*}
It is easy to verify that $s_h^k, a_h^k$ are both $\cF_{h}^k$-measurable. Also, for any function $f$ built on $\cF_{h}^k$, $f(s_{h+1}^k) - [\PP_h f](s_h^k, a_h^k)$ is $\cF_{h+1}^k$-measurable and it is also a zero-mean random variable conditioned on $\cF_{h}^k$.

Arranging the filtrations as 
\begin{align*}
    \cF = \{\cF_{1}^{1}, \cdots, \cF_{H}^{1}, \cdots, \cF_{1}^{k}, \cdots, \cF_{h}^{k}, \cdots \cF_{H}^{k}, \cdots, \cF_{1}^{K}, \cdots, \cF_{H}^{K}\},
\end{align*}
we will use $\cF$ as the filtration set for the following proof.

\subsection{Proof of Lemma~\ref{lm:he}}

To prove this lemma, we first need the following lemma showing the estimator $Q_{h,\bphi}^k$ is always optimistic.

\begin{lemma}\label{lm:optim}
Suppose the event $\cE_{\bphi}^K$ holds for all $\bphi \in \Phi$, then for any $(s, a) \in \cS \times \cA$, $Q^*_h(s, a) \le Q_{h, \bphi}^k(s, a)$.
\end{lemma}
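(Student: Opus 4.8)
The final statement to prove is Lemma~\ref{lm:optim}, which asserts that on the good event $\cE_\bphi^K$ the estimated Q-function $Q_{h,\bphi}^k$ is an upper bound on the optimal $Q^*_h$ for every representation $\bphi \in \bPhi$.

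\textbf{Plan.} The proof is by backward induction on $h$, running from $h = H+1$ down to $h = 1$, with a fixed $\bphi$ and $k$. The base case $h = H+1$ is immediate since $Q_{H+1,\bphi}^k \equiv 0 \equiv Q^*_{H+1}$. For the inductive step, suppose $Q^*_{h+1}(s,a) \le Q_{h+1,\bphi}^k(s,a)$ for all $(s,a)$; taking a max over actions and then clipping at $H$ (using $V^*_{h+1} \le H$) gives $V^*_{h+1}(s) \le V_{h+1}^k(s)$ pointwise. The key identity is the one derived before Algorithm~\ref{alg:main}: under Definition~\ref{asm:lin}, $[\PP_h \bpsi(\cdot)\Kb_\bpsi^{-1}](s,a) = \bphi^\top(s,a)\Mb_{h,\bphi}^*$, which lets us write $[\PP_h V_{h+1}^k](s,a) = \bphi^\top(s,a)\Mb_{h,\bphi}^* \bPsi^\top \vb_{h+1}^k$ exactly, and likewise the estimate in \eqref{eq:estq} is $\bphi^\top(s,a)\Mb_{h,\bphi}^k \bPsi^\top \vb_{h+1}^k$. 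So I would decompose
\begin{align*}
    Q_{h,\bphi}^k(s,a) - Q^*_h(s,a) &= \underbrace{\bphi^\top(s,a)(\Mb_{h,\bphi}^k - \Mb_{h,\bphi}^*)\bPsi^\top\vb_{h+1}^k}_{(\mathrm{I})} + \underbrace{[\PP_h(V_{h+1}^k - V^*_{h+1})](s,a)}_{(\mathrm{II})} + \Gamma_{h,\bphi}^k(s,a).
\end{align*}
Term $(\mathrm{II})$ is nonnegative by the induction hypothesis (since $\PP_h$ is a nonnegative operator). It remains to show $(\mathrm{I}) + \Gamma_{h,\bphi}^k(s,a) \ge 0$, i.e. that the bonus dominates the estimation error.

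\textbf{Bounding term (I).} I would bound $(\mathrm{I})$ in absolute value by Cauchy--Schwarz in the $\Ub_{h,\bphi}^k$-norm: $|(\mathrm{I})| \le \|\bphi(s,a)\|_{(\Ub_{h,\bphi}^k)^{-1}} \cdot \|(\Mb_{h,\bphi}^k - \Mb_{h,\bphi}^*)\bPsi^\top\vb_{h+1}^k\|_{\Ub_{h,\bphi}^k}$. The first factor is exactly what appears inside $\Gamma_{h,\bphi}^k$. For the second factor, write $\bPsi^\top\vb_{h+1}^k = \sum_{s'}\bpsi(s')V_{h+1}^k(s')$ and use $\|V_{h+1}^k\|_\infty \le H$ together with the operator-norm-type bound: $\|(\Mb_{h,\bphi}^k - \Mb_{h,\bphi}^*)\mathbf{u}\|_{\Ub_{h,\bphi}^k} \le \|\mathbf{u}\|_2 \cdot \sqrt{\tr[(\Mb_{h,\bphi}^k - \Mb_{h,\bphi}^*)^\top\Ub_{h,\bphi}^k(\Mb_{h,\bphi}^k-\Mb_{h,\bphi}^*)]}$, actually more carefully bounding column by column; on event $\cE_\bphi^k$ the trace term is at most $\beta_{k,\bphi}$. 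Combined with $\|\bPsi^\top\vb_{h+1}^k\|_2 \le C_\bpsi\|\vb_{h+1}^k\|_\infty \le C_\bpsi H$ from Definition~\ref{asm:lin}, this yields $|(\mathrm{I})| \le C_\bpsi H\sqrt{\beta_{k,\bphi}}\,\|\bphi(s,a)\|_{(\Ub_{h,\bphi}^k)^{-1}} = \Gamma_{h,\bphi}^k(s,a)$, so $(\mathrm{I}) + \Gamma_{h,\bphi}^k(s,a) \ge 0$. This closes the induction.

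\textbf{Main obstacle.} The delicate point is handling the matrix estimation error: unlike the vector (linear MDP) case, here $\Mb_{h,\bphi}^k - \Mb_{h,\bphi}^*$ is a $d_\bphi \times d'$ matrix and the good event $\cE_\bphi^k$ controls a trace functional of it weighted by $\Ub_{h,\bphi}^k$, not a simple norm. I need to argue that applying this matrix to the fixed (and $\cF$-measurable, hence independent of the regression noise — though here $\bPsi^\top\vb_{h+1}^k$ depends on data, so one must be a little careful that it is measurable at the time the bonus is computed, which it is since $V_{h+1}^k$ is computed before step $h$) vector $\bPsi^\top\vb_{h+1}^k$ and measuring in the $\Ub_{h,\bphi}^k$-norm is correctly bounded by $\sqrt{\beta_{k,\bphi}}$ times the Euclidean norm of that vector. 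This is essentially the content of Lemma 16 or the surrounding estimates in \citet{yang2020reinforcement}, and I would either cite it directly or reproduce the short linear-algebra argument that $\|\Mb \mathbf{u}\|_{\Ub}^2 \le \|\mathbf{u}\|_2^2 \cdot \lambda_{\max}(\Mb^\top\Ub\Mb) \le \|\mathbf{u}\|_2^2\cdot\tr(\Mb^\top\Ub\Mb)$. Everything else is a routine backward induction.
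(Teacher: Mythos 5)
Your proof is essentially the paper's own argument: the same backward induction from $h=H+1$, the same decomposition of $Q_{h,\bphi}^k - Q_h^*$ into the estimation error $\bphi^\top(s,a)(\Mb_{h,\bphi}^k-\Mb_{h,\bphi}^*)\bPsi^\top\vb_{h+1}^k$, the transition term, and the bonus, with the error dominated by $\Gamma_{h,\bphi}^k$ using $\|\bPsi^\top\vb_{h+1}^k\|_2 \le C_\bpsi H$ and the trace control from $\cE_\bphi^K$ (the paper invokes this as Lemma~\ref{lm:5}, quoted from \citet{yang2020reinforcement}; your $\lambda_{\max}\le\tr$ derivation is a correct in-line proof of the same fact). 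One point to tighten: you announce the induction ``with a fixed $\bphi$,'' but the inductive hypothesis must be carried for all $\bphi\in\bPhi$ simultaneously, i.e.\ on $Q_{h+1}^k=\min_{\bphi'\in\bPhi}Q_{h+1,\bphi'}^k\ge Q^*_{h+1}$, because $V_{h+1}^k$ is built from the minimum over representations; from the hypothesis for a single fixed $\bphi$ alone, $V^*_{h+1}\le V_{h+1}^k$ does not follow, since some other representation could in principle undercut $Q^*_{h+1}$. This is exactly how the paper organizes the induction (prove $Q_{h,\bphi}^k\ge Q_h^*$ for every $\bphi$ at step $h$, then pass to the min in Line~\ref{ln:3}), and with that adjustment your argument goes through verbatim.
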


The next lemma suggests the error between the estimated $Q$-function and the target $Q$-function at time-step $h$ can be controlled by the error at $(h+1)$-th step and the UCB bonus term.

\begin{lemma}\label{lm:gap1}
Suppose the event $\cE_{\bphi}^K$ holds, then for any $(s, a) \in \cS \times \cA, k \in [K]$ and any policy $\pi$,
\begin{align*}
    Q_h^k(s, a) - Q_h^{\pi}(s, a) \le 2C_{\bpsi} H\sqrt{\beta_{k, \bphi}\bphi^\top(s, a)(\Ub_{h, \bphi}^k)^{-1}\bphi(s, a)} + [\PP_h (V^k_{h+1} - V^\pi_{h+1})](s, a).
\end{align*}
\end{lemma}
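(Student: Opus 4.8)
\textbf{Proof plan for Lemma~\ref{lm:gap1}.}

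The plan is to unroll the Bellman-type recursion for the estimated $Q$-function and compare it with the true $Q^\pi$. First I would fix $\bphi \in \bPhi$ and recall that, by the definition in \eqref{eq:estq}, $Q_h^k(s,a) \le Q_{h,\bphi}^k(s,a) = r_h(s,a) + \sum_{s'} \bphi^\top(s,a)\Mb_{h,\bphi}^k\bpsi(s') V_{h+1}^k(s') + \Gamma_{h,\bphi}^k(s,a)$, where the first inequality is because $Q_h^k$ is the pointwise minimum over $\bphi \in \bPhi$ (Line~\ref{ln:3}). On the other hand, the Bellman equation \eqref{eq:bellman} gives $Q_h^\pi(s,a) = r_h(s,a) + [\PP_h V_{h+1}^\pi](s,a) = r_h(s,a) + \sum_{s'} \bphi^\top(s,a)\Mb_{h,\bphi}^*\bpsi(s') V_{h+1}^\pi(s')$, using the bilinear form $\PP_h(s'|s,a) = \bphi^\top(s,a)\Mb_{h,\bphi}^*\bpsi(s')$ from Definition~\ref{asm:lin} (which every $\bphi \in \bPhi$ satisfies by Assumption~\ref{asm:hls}). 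Subtracting, the reward terms cancel and we are left with
\begin{align*}
    Q_h^k(s,a) - Q_h^\pi(s,a) &\le \bphi^\top(s,a)\big(\Mb_{h,\bphi}^k - \Mb_{h,\bphi}^*\big)\textstyle\sum_{s'}\bpsi(s') V_{h+1}^k(s') \\
    &\qquad + \bphi^\top(s,a)\Mb_{h,\bphi}^*\textstyle\sum_{s'}\bpsi(s')\big(V_{h+1}^k(s') - V_{h+1}^\pi(s')\big) + \Gamma_{h,\bphi}^k(s,a).
\end{align*}
The second term on the right is exactly $[\PP_h(V_{h+1}^k - V_{h+1}^\pi)](s,a)$ by the bilinear form again, so it matches the claimed recursion term. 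It remains to show the first term is bounded by $\Gamma_{h,\bphi}^k(s,a)$, because then the two copies of $\Gamma_{h,\bphi}^k$ combine into the stated $2C_{\bpsi}H\sqrt{\beta_{k,\bphi}\,\bphi^\top(s,a)(\Ub_{h,\bphi}^k)^{-1}\bphi(s,a)}$.

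To bound the first term I would proceed as in \citet{yang2020reinforcement}. Write $\vb_{h+1}^k \in \RR^{|\cS|}$ for the vector of $V_{h+1}^k$ values, so that $\sum_{s'}\bpsi(s')V_{h+1}^k(s') = \bPsi^\top \vb_{h+1}^k$. Then the first term equals $\bphi^\top(s,a)(\Mb_{h,\bphi}^k - \Mb_{h,\bphi}^*)\bPsi^\top \vb_{h+1}^k$. Apply Cauchy--Schwarz in the $\Ub_{h,\bphi}^k$-norm:
\begin{align*}
    \big|\bphi^\top(s,a)(\Mb_{h,\bphi}^k - \Mb_{h,\bphi}^*)\bPsi^\top\vb_{h+1}^k\big| \le \big\|(\Ub_{h,\bphi}^k)^{-1/2}\bphi(s,a)\big\|_2 \cdot \big\|(\Ub_{h,\bphi}^k)^{1/2}(\Mb_{h,\bphi}^k - \Mb_{h,\bphi}^*)\bPsi^\top\vb_{h+1}^k\big\|_2.
\end{align*}
The first factor is $\sqrt{\bphi^\top(s,a)(\Ub_{h,\bphi}^k)^{-1}\bphi(s,a)}$. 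For the second factor, on the event $\cE_{\bphi}^K$ we have $\tr[(\Mb_{h,\bphi}^k - \Mb_{h,\bphi}^*)^\top \Ub_{h,\bphi}^k (\Mb_{h,\bphi}^k - \Mb_{h,\bphi}^*)] \le \beta_{k,\bphi}$ by Lemma~\ref{lm:good}; combined with $\|\bPsi^\top\vb_{h+1}^k\|_2 \le C_{\bpsi}\|\vb_{h+1}^k\|_\infty \le C_{\bpsi}H$ (from the boundedness in Definition~\ref{asm:lin} and $V_{h+1}^k \le H$ by Line~\ref{ln:3}), a standard inequality of the form $\|\Ab^{1/2}\Mb\xb\|_2 \le \|\xb\|_2 \sqrt{\tr(\Mb^\top\Ab\Mb)}$ gives the second factor is at most $C_{\bpsi}H\sqrt{\beta_{k,\bphi}}$. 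Multiplying the two factors yields exactly $\Gamma_{h,\bphi}^k(s,a) = C_{\bpsi}H\sqrt{\beta_{k,\bphi}\,\bphi^\top(s,a)(\Ub_{h,\bphi}^k)^{-1}\bphi(s,a)}$, completing the bound.

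I expect the main (minor) obstacle is the matrix-norm manipulation justifying $\|\Ab^{1/2}\Mb\xb\|_2 \le \|\xb\|_2\sqrt{\tr(\Mb^\top\Ab\Mb)}$ cleanly: this needs care because $\Mb$ is a matrix and the trace bound in $\cE_{\bphi}^K$ controls the weighted Frobenius norm of $\Mb_{h,\bphi}^k - \Mb_{h,\bphi}^*$, not its operator norm. One writes $\|\Ab^{1/2}\Mb\xb\|_2 = \|\Ab^{1/2}\Mb (\xb/\|\xb\|_2)\|_2 \|\xb\|_2$ and observes $\|\Ab^{1/2}\Mb \ub\|_2^2 = \ub^\top \Mb^\top \Ab \Mb \ub \le \|\Mb^\top\Ab\Mb\|_2 \le \tr(\Mb^\top\Ab\Mb)$ for any unit vector $\ub$, since $\Mb^\top\Ab\Mb \succeq \zero$. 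The rest is bookkeeping: handling the case $\|\xb\|_2 = 0$ trivially, and noting that all steps hold for every $\bphi \in \bPhi$ since $\cE_{\bphi}^K$ is assumed to hold for all of them, so we may take the minimum over $\bphi$ at the end if desired (though the statement is for a fixed $\bphi$ implicit in $\Gamma_{h,\bphi}^k$). Finally, since $Q_h^k \le Q_{h,\bphi}^k$ for the specific $\bphi$ appearing on the right-hand side, the chain of inequalities delivers the lemma.
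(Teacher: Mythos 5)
Your proposal is correct and follows essentially the same route as the paper's proof: decompose $Q_{h,\bphi}^k - Q_h^\pi$ via the Bellman equation and the bilinear form, bound the estimation-error term $\bphi^\top(s,a)(\Mb_{h,\bphi}^k-\Mb_{h,\bphi}^*)\bPsi^\top\vb_{h+1}^k$ by $\Gamma_{h,\bphi}^k(s,a)$ using $\|\bPsi^\top\vb_{h+1}^k\|_2\le C_{\bpsi}H$ and the event $\cE_{\bphi}^K$, identify the remaining term as $[\PP_h(V_{h+1}^k-V_{h+1}^\pi)](s,a)$, and finish with $Q_h^k\le Q_{h,\bphi}^k$. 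The only difference is that you derive the key estimate directly from the trace bound via a weighted Cauchy--Schwarz argument, whereas the paper invokes it as a cited lemma (Lemma~\ref{lm:5} from \citet{yang2020reinforcement}); this is a harmless, self-contained inlining of the same step.
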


We also need the following lemma, which is similar to Lemma 6.2 in~\citet{he2020logarithmic}. 

\begin{lemma}\label{lm:he62}
For any $0 < \Delta \le H$, if the event $\cE_{\bphi}^K$ holds for all $\bphi \in \Phi$, then with probability at least $1 - \delta$, for any $k \in [K]$, 
\begin{align*}
    &\sum_{j=1}^k \ind[V_h^*(s_h^j) - Q_h^{\pi^j}(s_h^j, a_h^j) \ge \Delta] \le \frac{16C_{\bpsi}H^4d_{\bphi}\beta_{k, \bphi}\log(1 + C_{\bphi}kd_{\bphi}) + 12H^3\log(2k / \delta)}{\Delta^2}.
\end{align*}
\end{lemma}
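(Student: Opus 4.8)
The plan is to track, for a fixed representation $\bphi \in \bPhi$, how many times the event $\{V_h^*(s_h^j) - Q_h^{\pi^j}(s_h^j, a_h^j) \ge \Delta\}$ can occur, and to relate this count to the sum of the UCB bonuses $\sum_j \bphi^\top(s_h^j,a_h^j)(\Ub_{h,\bphi}^j)^{-1}\bphi(s_h^j,a_h^j)$, which is controlled by the elliptical potential lemma. First, using optimism (Lemma \ref{lm:optim}) we have $V_h^*(s_h^j) \le V_h^j(s_h^j) = Q_h^j(s_h^j, a_h^j)$, so $V_h^*(s_h^j) - Q_h^{\pi^j}(s_h^j,a_h^j) \le Q_h^j(s_h^j,a_h^j) - Q_h^{\pi^j}(s_h^j,a_h^j)$. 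By Lemma \ref{lm:gap1} applied with $\pi = \pi^j$, this is bounded by $2C_{\bpsi}H\sqrt{\beta_{k,\bphi}\bphi^\top(s_h^j,a_h^j)(\Ub_{h,\bphi}^j)^{-1}\bphi(s_h^j,a_h^j)} + [\PP_h(V_{h+1}^j - V_{h+1}^{\pi^j})](s_h^j,a_h^j)$. Unrolling the recursion from step $h$ through $H$ (since $V_{H+1}^j - V_{H+1}^{\pi^j} = 0$), we obtain that $V_h^*(s_h^j) - Q_h^{\pi^j}(s_h^j,a_h^j)$ is at most a sum over $h' \ge h$ of bonus terms plus a sum of martingale difference terms $V_{h'+1}^j(s_{h'+1}^j) - [\PP_{h'}V_{h'+1}^j](s_{h'}^j,a_{h'}^j)$ (and the analogous terms for $V^{\pi^j}$).

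Next I would split the argument by whether the indicator is on. On the event that the indicator is $1$, the left-hand side exceeds $\Delta$, so after rearranging, $\Delta \le 2C_{\bpsi}H\sum_{h'=h}^H \sqrt{\beta_{k,\bphi}\, b_{h'}^j} + (\text{martingale terms})$, where $b_{h'}^j := \bphi^\top(s_{h'}^j,a_{h'}^j)(\Ub_{h',\bphi}^j)^{-1}\bphi(s_{h'}^j,a_{h'}^j)$. Summing the indicator version of this inequality over $j = 1,\dots,k$: the count $N := \sum_j \ind[\cdot]$ times $\Delta$ is bounded by $2C_{\bpsi}H\sqrt{\beta_{k,\bphi}} \sum_{j=1}^k \sum_{h'=h}^H \sqrt{b_{h'}^j}\,\ind[\cdot] + |\text{sum of martingale terms over the } N \text{ active episodes}|$. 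For the bonus sum, apply Cauchy-Schwarz over the (at most $Nk$ / actually $N\cdot(H-h+1)$) active summands and then the elliptical potential lemma to each fixed $h'$: $\sum_{j=1}^k b_{h'}^j \le 2 d_{\bphi}\log(1 + C_{\bphi}k d_{\bphi})$ (using $\|\bphi\|_2^2 \le C_{\bphi}d_{\bphi}$ and $\Ub^1 = \Ib$), giving $\sum_j \sum_{h'} \sqrt{b_{h'}^j}\ind[\cdot] \le \sqrt{N H \cdot 2d_{\bphi}\log(1+C_{\bphi}kd_{\bphi})}$ up to the $H$ from summing over $h'$. For the martingale part, note each martingale difference is bounded by $H$ in absolute value, and we only accumulate terms on the $N$ active episodes, so a Freedman/Azuma-type bound gives a contribution of order $\sqrt{NH^3\log(1/\delta)}$ with probability $1-\delta$ (the variance is at most $NH^2$ crudely, or $H^3$ per episode summed). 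Collecting, $N\Delta \lesssim C_{\bpsi}H\sqrt{\beta_{k,\bphi} N H d_{\bphi}\log(1+C_{\bphi}kd_{\bphi})} + \sqrt{NH^3\log(k/\delta)}$, and solving this quadratic inequality in $\sqrt{N}$ yields $N \le \big(16 C_{\bpsi}H^4 d_{\bphi}\beta_{k,\bphi}\log(1+C_{\bphi}kd_{\bphi}) + 12H^3\log(2k/\delta)\big)/\Delta^2$, matching the claimed constants after being careful with the factors of $H$ and $C_{\bpsi}$.

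There are two points needing care. First, the union bound over $k \in [K]$: the statement is uniform in $k$, so the martingale concentration must hold simultaneously for all $k$; I would use a time-uniform (anytime) Freedman inequality or pay a $\log k$ / union-bound factor, which is why $\log(2k/\delta)$ rather than $\log(2/\delta)$ appears. Second, and this is the main obstacle, the set of "active episodes" on which we accumulate martingale differences is itself data-dependent (it depends on the indicator event, which depends on the whole trajectory), so one cannot naively apply a martingale bound to the subsampled sequence. The standard fix is to bound the martingale sum over \emph{all} episodes via a uniform deviation bound of the form $|\sum_{j\le k}(\text{m.d.s.})| \le \sqrt{2 T \cdot \text{(quadratic variation)} } + \dots$, or to carry the martingale term through before restricting to the active set — i.e., bound $\sum_{j=1}^k \ind[\cdot]\cdot(\text{martingale piece for episode }j)$ using that $\ind[\cdot]$ is $\cF_h^j$-predictable, so the product is still a martingale difference sequence and Freedman applies with the predictable variance process supported on the active episodes. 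I would phrase the argument the latter way to keep the dependence clean. The remaining calculation — tracking the exact $H$ powers through the per-step unrolling, and verifying the numerical constants $16$ and $12$ — is routine bookkeeping that mirrors Lemma 6.2 of \citet{he2020logarithmic}.
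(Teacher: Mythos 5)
Your proposal follows essentially the same route as the paper's proof: optimism plus Lemma \ref{lm:gap1} unrolled from step $h$ to $H$, restriction to the episodes where the gap exceeds $\Delta$, Cauchy--Schwarz with the elliptical potential bound (Lemma \ref{lm:66}) for the bonus sum, an Azuma/Freedman bound of order $\sqrt{k'H^3\log(k/\delta)}$ for the martingale part, solving the resulting quadratic in the count, and a union bound over $k$ via $\delta/k(k+1)$. Your observation that the indicator is $\cF_h^j$-measurable, so the subsampled martingale term can be handled by multiplying the differences by the predictable indicator, is in fact a slightly more careful rendering of the step the paper performs by applying Azuma directly to the subsampled sequence; otherwise the two arguments coincide.
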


Then we need the following lemma from~\citet{he2020logarithmic} to upper-bound the regret by the summation of the sub-optimailty.

\begin{lemma}[Lemma 6.1, revised,~\citet{he2020logarithmic}]\label{lm:sumgap}
For each MDP $\cM$, with probability at least $1 - 2\delta$,  for all $k \in [K]$, we have
\begin{align*}
    \R{k} \le 2\sum_{j=1}^k\sum_{h=1}^H\gap{h}(s_h^j, a_h^j) + \frac{16H^2}3\log(((1 + \log(Hk))k^2/ \delta) + 2.
\end{align*}
\end{lemma}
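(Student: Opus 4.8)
The plan is to connect the regret to the realized sum of sub-optimality gaps through an exact value decomposition, and then to control the resulting martingale noise by a self-bounding concentration argument. First I would fix an episode index $j$ and use the Bellman equations \eqref{eq:bellman} together with $a_h^j = \pi_h^j(s_h^j)$ to write, for $\Delta_h^j := V_h^*(s_h^j) - V_h^{\pi^j}(s_h^j) \ge 0$,
\begin{align*}
    \Delta_h^j = \gap{h}(s_h^j, a_h^j) + [\PP_h(V_{h+1}^* - V_{h+1}^{\pi^j})](s_h^j, a_h^j),
\end{align*}
since $V_h^*(s) - Q_h^*(s,a) = \gap{h}(s,a)$ and $Q_h^* - Q_h^{\pi^j} = [\PP_h(V_{h+1}^* - V_{h+1}^{\pi^j})]$. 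Introducing the martingale difference $\zeta_h^j := [\PP_h(V_{h+1}^* - V_{h+1}^{\pi^j})](s_h^j, a_h^j) - (V_{h+1}^* - V_{h+1}^{\pi^j})(s_{h+1}^j)$, which is zero-mean conditioned on $\cF_h^j$ and bounded by $H$ in absolute value, I unroll the recursion from $h=1$ to $H$ (using $V_{H+1}^* = V_{H+1}^{\pi^j} = 0$) to obtain $\Delta_1^j = \sum_{h=1}^H \gap{h}(s_h^j, a_h^j) + \sum_{h=1}^H \zeta_h^j$. Summing over $j \in [k]$ yields the exact identity $\R{k} = G_k + M_k$, where $G_k := \sum_{j=1}^k\sum_{h=1}^H \gap{h}(s_h^j, a_h^j)$ and $M_k := \sum_{j=1}^k\sum_{h=1}^H\zeta_h^j$.

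It then remains to show that $M_k$ is bounded by $G_k$ plus a logarithmic term; the factor $2$ in the statement comes precisely from absorbing half of $G_k$. The key is that the martingale is \emph{self-bounding}: the conditional variance obeys $\mathrm{Var}(\zeta_h^j \mid \cF_h^j) \le H\,[\PP_h(V_{h+1}^* - V_{h+1}^{\pi^j})](s_h^j, a_h^j)$, using $0 \le V^* - V^{\pi^j} \le H$ to replace the second moment by $H$ times the value gap. By the recursion above, $[\PP_h(V_{h+1}^* - V_{h+1}^{\pi^j})](s_h^j,a_h^j) = \Delta_h^j - \gap{h}(s_h^j,a_h^j)$, so the cumulative conditional variance $W_k := \sum_{j,h}\mathrm{Var}(\zeta_h^j\mid\cF_h^j)$ satisfies $W_k \le H\sum_{j,h}\Delta_h^j$. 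A second telescoping of $\Delta_h^j = \sum_{h'=h}^H(\gap{h'}(s_{h'}^j,a_{h'}^j) + \zeta_{h'}^j)$ shows that $\sum_{j,h}\Delta_h^j$ is controlled by $H\,G_k$ plus an $H$-weighted martingale, so that $W_k \lesssim H^2(G_k + M_k)$ up to lower-order fluctuations.

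With the variance controlled, I would apply Freedman's inequality to $M_k$. Since $W_k$ is data-dependent and ranges over $[0, \cO(H^3 k)]$, I use a standard peeling argument over dyadic levels of $W_k$ and take a union bound over the levels and over $k \in [K]$; this produces the $(1 + \log(Hk))$ factor (number of levels) and the $k^2$ factor (union bounds) inside the logarithm, together with the range-induced term scaled by the number of martingale entries, giving the $\tfrac{16H^2}{3}\log((1+\log(Hk))k^2/\delta)$ contribution and the additive $2$. The Freedman bound then reads, with probability $1-2\delta$, $M_k \le \sqrt{2 W_k L} + \tfrac{cH}{3} L$ with $L = \log((1+\log(Hk))k^2/\delta)$. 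Substituting $W_k \lesssim H^2(G_k + M_k)$ and applying the AM-GM inequality $\sqrt{2H^2 L(G_k + M_k)} \le \tfrac12(G_k + M_k) + H^2 L$ turns this into the self-bounding inequality $M_k \le \tfrac12(G_k + M_k) + \cO(H^2 L)$, i.e. $M_k \le G_k + \cO(H^2 L)$. Combining with $\R{k} = G_k + M_k$ yields $\R{k} \le 2 G_k + \cO(H^2 L)$, which is the claimed bound.

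The main obstacle I anticipate is the self-bounding/peeling step: controlling the data-dependent cumulative variance $W_k$ by $G_k$ (rather than the crude $\cO(H^3 k)$ bound, which would only yield $\sqrt{k}$ regret), handling the secondary $H$-weighted martingale that appears when bounding $\sum_{j,h}\Delta_h^j$, and threading the peeling union bound so that the additive term has exactly the form $\tfrac{16H^2}{3}\log((1+\log(Hk))k^2/\delta) + 2$ with the factor $2$ on the sum of gaps. All other steps, namely the value decomposition and the martingale-difference structure with respect to the filtration $\cF$, are routine given the Bellman equations.
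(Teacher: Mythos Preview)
Your proposal is sound in structure and would yield the result, but it takes a substantially different route from the paper. The paper does not prove this lemma from scratch at all: it simply invokes Lemma~6.1 of \citet{he2020logarithmic}, which already states that with probability at least $1 - \lceil \log HK \rceil e^{-\tau}$ one has $\R{k} \le 2\sum_{j,h}\gap{h}(s_h^j,a_h^j) + \tfrac{16H^2\tau}{3} + 2$, then sets $\tau = \log(\lceil\log(Hk)\rceil/\delta)$ to get a fixed-$k$ bound at confidence $1-\delta$, and finally replaces $\delta$ by $\delta/k^2$ and unions over $k$ (using $\sum_k k^{-2} \le 2$) to make the bound uniform in $k$ at confidence $1-2\delta$. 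That is the entire argument in the paper.

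What you are sketching is essentially a reconstruction of the proof of the cited lemma itself: the exact decomposition $\R{k}=G_k+M_k$, the self-bounding variance estimate $W_k \lesssim H^2(G_k+M_k)$, Freedman with peeling over dyadic variance levels (whence the $1+\log(Hk)$ factor), and an AM--GM step to close the self-bounding loop. This is correct in spirit and more self-contained than the paper's citation, and it is indeed the mechanism that produces the constants $2$ and $\tfrac{16H^2}{3}$ in \citet{he2020logarithmic}. The obstacle you flag---handling the secondary weighted martingale $\sum_{j,h'} h'\zeta_{h'}^j$ that arises when bounding $\sum_{j,h}\Delta_h^j$---is real but tractable by a second application of the same self-bounding Freedman step (its predictable variance is again controlled by $H^2 W_k$). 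If you only need to match the paper, the black-box citation plus the $\delta\mapsto\delta/k^2$ union bound is all that is required.
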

\begin{remark}
Lemma~\ref{lm:sumgap} can be easily obtained from Lemma 6.1 in~\citet{he2020logarithmic}. In the original lemma, with probability at least $1 - \lceil \log HK \rceil \exp(-\tau)$,
\begin{align*}
    \R{k} \le 2\sum_{j=1}^k\sum_{h=1}^H\gap{h}(s_h^j, a_h^j) + \frac{16H^2\tau}3 + 2,
\end{align*}
which implies that with probability at least $1 - \delta$, 
\begin{align*}
    \R{k} \le 2\sum_{j=1}^k\sum_{h=1}^H\gap{h}(s_h^j, a_h^j) + \frac{16H^2\log(\lceil\log(Hk)\rceil/ \delta)}3 + 2.
\end{align*}
By relaxing $\lceil \log(Hk)\rceil$ to $\log(HK) + 1$ and replacing $\delta$ with $\delta / k^2$ for different episode number $k$, the inequality holds with probability at least $1 - \sum_{k=1}^K \delta / k^2 \ge 1 - \pi^2 \delta  / 6\ge 1 - 2\delta$ for all $k \in[K]$ by union bound.
\end{remark}

Equipped with these lemmas, we can begin our proof.
\begin{proof}[Proof of Lemma~\ref{lm:he}]
By the definition of $\gap{\min}$, for each $h \in [H], k \in [K]$, we have $V_h^*(s_h^k) - Q_h^*(s_h^k,a_h^k) = 0$ or $\gap{\min} \le V_h^*(s_h^k) - Q_h^*(s_h^k,a_h^k) \le H$. Dividing the interval $[\gap{\min}, H]$ into $N$ intervals $[2^{n-1}\gap{\min}, 2^n \gap{\min})$ where $n \in [N], N = \lceil \log(H / \gap{\min})\rceil$, then with probability at least $1 - \lceil \log(H / \gap{\min})\rceil\delta$, it holds that
\begin{align}
    \sum_{j=1}^k(V_h^*(s_h^j) - Q_h^*(s_h^j, a_h^j))
    &\le \sum_{n=1}^N\sum_{j=1}^k2^n\gap{\min}\ind[2^{n-1} \gap{\min} \le V_h^*(s_h^j) - Q^*_h(s_h^j, a_h^j) \le 2^n\gap{\min}] \notag  \\
    &\le \sum_{n=1}^N\sum_{j=1}^k2^n\gap{\min}\ind[2^{n-1} \gap{\min} \le V_h^*(s_h^j) - Q^{\pi^j}_h(s_h^j, a_h^j)] \notag \\
    &\le \sum_{n=1}^N \frac{64C_{\bpsi}^2H^4d_{\bphi}\beta_{k, \bphi}\log(1 + C_{\bphi}kd_{\bphi}) + 48H^3\log(2k / \delta)}{2^n\gap{\min}} \notag \\
    &\le \frac{64C_{\bpsi}^2H^4d_{\bphi}\beta_{k, \bphi}\log(1 + C_{\bphi}kd_{\bphi}) + 48H^3\log(2k / \delta)}{\gap{\min}}, \notag
\end{align}
where the first inequality holds by using the ``peeling technique'', which was used in local Rademacher complexity analysis~\citep{bartlett2005local}. The second inequality in~\eqref{eq:a1} is due to $Q^*_h(s, a) \ge Q^{\pi^j}_h(s, a)$ and the third inequality holds due to Lemma~\ref{lm:he62}. Finally, the fourth inequality holds due to $\sum_{n=1}^N 2^{-n} \le 1$. Substituting $\delta$ with $\delta / (1 + \log(H / \gap{\min}))$, with probability at least $1 - \delta$, we have
\begin{align}
    \sum_{j=1}^k(V_h^*(s_h^j) - Q_h^*(s_h^j, a_h^j))
&\le \frac{64C_{\bpsi}^2H^4d_{\bphi}\beta_{k, \bphi}\log(1 + C_{\bphi}kd_{\bphi}) + 48H^3\log\big(2k (1 + \log(H / \gap{\min})/ \delta\big)}{\gap{\min}} \label{eq:a1},
\end{align}
Combining~\eqref{eq:a1} with Lemma~\ref{lm:sumgap}, by taking a union bound, with probability at least $1 - 3\delta$,
\begin{align*}
    \R{k} &\le 2\sum_{j=1}^k\sum_{h=1}^H\gap{h}(s_h^k, a_h^k) + \frac{16H^2\log(\lceil HK\rceil / \delta)}{3} + 2\\
    &\le \frac{128C_{\bpsi}^2H^5d_{\bphi}\beta_{k, \bphi}\log(1 + C_{\bphi}kd_{\bphi}) + 96H^4\log(2k(1 + \log(H / \gap{\min})) / \delta)}{\gap{\min}} \\
    &\quad+ \frac{16}3H^2\log(((1 + \log(Hk))k^2/ \delta) + 2.
\end{align*}
where the first inequality holds due to Lemma~\ref{lm:sumgap} , which utilizes the definition of the sub-optimality gap. The second inequality holds due to~\eqref{eq:a1}. Substituting $\delta$ with $\delta / |\Phi|$, the claimed result~\eqref{eq:hereg} holds for all $\bphi \in \Phi$ by taking a union bound.
\end{proof}

\subsection{Proof of Lemma~\ref{lm:lingrow}}
For brevity, we denote matrix $\bLambda_{h, \bphi}(s) = \bphi(s, \pi^*_h(s))\bphi^\top(s, \pi^*_h(s)) \in \RR^{d \times d}$ and fix $h$, $m$ in the proof. The expectation $\EE_{s_h}[\bLambda_{h, \bphi}(s_h) | s_i], i < h$ is taken with respect to the randomness of the states sequence $s_{i+1}, \cdots, s_h$, where $s_{i' + 1} \sim \PP_{i'}\big(\cdot | s_{i'}, \pi^*_{i'}(s_{i'})\big), i \le i' < h$. If the action $a_i$ is given, the expectation $\EE_{s_h}[\bLambda_{h, \bphi}(s_h) | s_i, a_i], i < h$ is taken in which $s_{i+1} \sim \PP_i(\cdot | s_i, a_i)$ specially. It is worthless to show that $\EE_{s_h}[\bLambda_{h, \bphi}(s_h) | s_h^k] = \bLambda_{h, \bphi}(s_h^k)$. Without specification, we ignore the subscript $s_h$ in the expectation in the proof of this lemma.


To develop the convergence property of the summation $\bphi(s, a)\bphi^\top(s, a)$, we introduce the following matrix Azuma inequality.
\begin{lemma}\label{lm:matazuma}[Matrices Azuma,~\citet{tropp2012user}]
Let $\{\cF_k\}_{k=1}^t$ be a filtration sequence, $\{\Xb_k\}_{k=1}^t$ be a finite adapted sequence of symmetric matrices where $\Xb_k \in \RR^{d \times d}$ is $\cF_{k+1}$-measurable, $\EE[\Xb_k | \cF_k] = \zero$ and $\Xb^2 \preceq \Cb^2$ a.s.. Then with probability at least $1 - \delta$, 
\begin{align*}
    \lambda_{\max}\left(\sum_{k=1}^t\Xb_k\right) \le \sqrt{8C^2t\log(d / \delta)},
\end{align*}
where $C = \|\Cb\|_2$.
\end{lemma}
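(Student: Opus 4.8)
The plan is to prove this via the matrix Laplace transform (exponential moment) method, exactly as in \citet{tropp2012user}. Write $\EE_k[\cdot] = \EE[\cdot \mid \cF_k]$ for brevity. The starting point is that for any $\theta > 0$ and any threshold $u > 0$, Markov's inequality applied to the monotone map $x \mapsto e^{\theta x}$, together with the pointwise bound $\lambda_{\max}(e^{\theta \Ab}) \le \tr\, e^{\theta \Ab}$ valid for any symmetric $\Ab$, gives
\begin{align*}
    \Pr\left(\lambda_{\max}\left(\sum_{k=1}^t\Xb_k\right) \ge u\right) \le e^{-\theta u}\,\EE\left[\tr \exp\left(\theta \sum_{k=1}^t\Xb_k\right)\right].
\end{align*}
It therefore suffices to control the trace moment generating function on the right and then optimize over $\theta$.

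The crux is to reduce this trace-exponential of the whole martingale sum to a sum of \emph{conditional} cumulants. Because the $\Xb_k$ need not commute, one cannot simply factor the exponential; instead I would peel the increments off one at a time from the outside in. Conditioning on $\cF_t$ and invoking Lieb's concavity theorem---that the map $\Bb \mapsto \tr\exp(\Ab + \log\Bb)$ is concave on positive-definite $\Bb$ for any fixed symmetric $\Ab$---Jensen's inequality gives
\begin{align*}
    \EE_t\left[\tr\exp\left(\theta\sum_{k=1}^{t-1}\Xb_k + \theta \Xb_t\right)\right] \le \tr\exp\left(\theta\sum_{k=1}^{t-1}\Xb_k + \log \EE_t\left[e^{\theta \Xb_t}\right]\right).
\end{align*}
Iterating through $k = t, t-1, \dots, 1$ and using the tower property, together with the fact that $\Xb_k$ is $\cF_{k+1}$-measurable so that $\EE_k[e^{\theta \Xb_k}]$ is the appropriate conditional object, yields
\begin{align*}
    \EE\left[\tr\exp\left(\theta \sum_{k=1}^t\Xb_k\right)\right] \le \tr\exp\left(\sum_{k=1}^t \log \EE_k\left[e^{\theta \Xb_k}\right]\right).
\end{align*}
I expect this peeling to be the main obstacle: Lieb's theorem is precisely what replaces the scalar identity $e^{a+b} = e^a e^b$, and it must be applied conditionally at every level of the martingale rather than once, which is exactly what distinguishes this adapted setting from the classical case of independent summands.

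The remaining steps are comparatively mechanical. A matrix Hoeffding bound controls each conditional cumulant: since $\EE_k[\Xb_k] = \zero$ and $\Xb_k^2 \preceq \Cb^2$ almost surely, transferring the scalar convexity inequality $e^{\theta x} \le \cosh(\theta c) + (x/c)\sinh(\theta c)$ (valid for $|x| \le c$) to the matrix setting and using the mean-zero property to annihilate the linear term gives an operator bound $\log\EE_k[e^{\theta\Xb_k}] \preceq g(\theta)\,\Cb^2$ for a scalar function $g$ that is quadratic in $\theta$.

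Substituting this, the monotonicity of the trace-exponential under the Loewner order and the elementary estimate $\tr\exp(\Ab) \le d\,e^{\lambda_{\max}(\Ab)}$ give
\begin{align*}
    \EE\left[\tr\exp\left(\theta \sum_{k=1}^t\Xb_k\right)\right] \le \tr\exp\left(t\,g(\theta)\,\Cb^2\right) \le d\,\exp\left(t\,g(\theta)\,C^2\right),
\end{align*}
with $C = \|\Cb\|_2$, where I used $\|\Cb^2\|_2 = C^2$. Combining with the Laplace bound yields $\Pr(\lambda_{\max}(\sum_k\Xb_k) \ge u) \le d\exp(-\theta u + t\,g(\theta)\,C^2)$. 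Optimizing over $\theta > 0$ then produces a sub-Gaussian tail; Tropp's precise form of $g$ gives the variance proxy $\sigma^2 = t\,\|\Cb^2\|_2 = tC^2$ and the tail $d\exp(-u^2/(8tC^2))$. Setting this equal to $\delta$ and solving for $u$ gives $u = \sqrt{8C^2 t \log(d/\delta)}$, which is the claimed bound.
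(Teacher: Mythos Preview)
The paper does not prove this lemma at all; it is quoted as an external result from \citet{tropp2012user} and used as a black box in the proof of Lemma~\ref{lm:lingrow}. Your proposal is a faithful outline of Tropp's own argument---matrix Laplace transform, iterated Lieb concavity to peel off the adapted increments, a Hoeffding-type cumulant bound under $\Xb_k^2 \preceq \Cb^2$, and optimization over $\theta$---so there is nothing to compare against in the present paper, and your sketch is correct as a reconstruction of the cited proof.
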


Equipped with this lemma, we can start our proof.
\begin{proof}[Proof of Lemma~\ref{lm:lingrow}]
First it is easy to verify that for any $k \in [K]$
\begin{align}
    \bphi(s_h^k, a_h^k)\bphi^\top(s_h^k, a_h^k) &= \bLambda_{h, \bphi}(s_h^k) - \ind\left[a_h^k \neq \pi^*_h(s_h^k)\right]\left(\bLambda_{h, \bphi}(s_h^k) - \bphi(s_h^k, a_h^k)\bphi^\top(s_h^k, a_h^k)\right) \notag \\
    &\succeq \bLambda_{h, \bphi}(s_h^k) - C_{\bphi}d_{\bphi}\ind\left[a_h^k \neq \pi^*_h(s_h^k)\right]\Ib_{d_{\bphi}}, \label{eq:first}
\end{align}
where the inequality holds due to $\zero \preceq \bphi(s, a)\bphi^\top(s, a) \preceq (C_{\bphi}d_{\bphi})\Ib_{d_{\bphi}}$.
By the definition of $\bLambda_{h, \bphi}(s_h^k)$, we have $\bLambda_{h, \bphi}(s_h^k) = \EE[\bLambda_{h, \bphi}(s_h) | s_h^k]$ 
and  it suffices to control $\EE[\bLambda_{h, \bphi}(s_h) | s_i^k]$ for any $1 < i \le h$. Therefore it follows that

\begin{align*}
    \EE[\bLambda_{h, \bphi}(s_h) | s_i^k] = \underbrace{\EE[\bLambda_{h, \bphi}(s_h) | s_{i - 1}^k, a_{i - 1}^k]}_{\Ab_i(s_{i - 1}^k, a_{i - 1}^k)} - \underbrace{\left(\EE[\bLambda_{h, \bphi}(s_h) | s_{i - 1}^k, a_{i - 1}^k] - \EE[\bLambda_{h, \bphi}(s_h) | s_{i}^k]\right)}_{\bepsilon_{i}^k}
\end{align*}
where we denote the first term as $\Ab_i(s_{i - 1}^k, a_{i - 1}^k)$ while the second term as $\bepsilon_i^k$ for simplicity. We first consider the term $\bepsilon_i^k$, it is easy to verify that $\bepsilon_i^k$ is $\cF_{i}^k$-measurable, $d\times d$ symmetric matrix with $\EE[\bepsilon_i^k | \cF_{i-1}^k] = \zero$ and
\begin{align}
     \|\bepsilon_i^k\|_2 \le \left\|\EE[\bLambda_{h, \bphi}(s_h) | s_{i - 1}^k, a_{i - 1}^k]\right\|_2 + \left\|\EE[\bLambda_{h, \bphi}(s_h) | s_{i}^k]\right\|_2 \le 2C_{\bphi}d_{\bphi}, \label{eq:current}
\end{align}
where the inequality holds due to the fact that  $\|\bLambda_{h, \bphi}(s)\|_2 \le C_{\bphi}d_{\bphi}$ .
Next, for the term $\Ab_i$, by introducing the indicator showing whether the action $a_{i-1}^k$ is the optimal action $\pi^*_{i-1}(s_{i-1}^k)$, we proceed as follows:
\begin{align}
    \Ab_i(s_{i-1}^k, a_{i-1}^k) &= \Ab_i(s_{i-1}^k, \pi^*_{i-1}(s_{i-1}^k)) \notag \\
    &\quad- \ind\left[a_{i-1}^k \neq \pi^*_{i-1}(s_{i-1}^k)\right]\left(\Ab_i(s_{i-1}^k, \pi^*_{i-1}(s_{i-1}^k)) -\Ab_i(s_{i-1}^k, a_{i-1}^k) \right) \notag \\
    &\succeq \Ab_i(s_{i-1}^k, \pi^*_{i-1}(s_{i-1}^k)) - C_{\bphi}d_{\bphi}\ind\left[a_{i-1}^k \neq \pi^*_{i-1}(s_{i-1}^k)\right]\Ib_{d_{\bphi}} \notag \\
    &= \EE[\bLambda_{h, \bphi}(s_h) | s_{i-1}^k]  - C_{\bphi}d_{\bphi}\ind\left[a_{i-1}^k \neq \pi^*_{i-1}(s_{i-1}^k)\right]\Ib_{d_{\bphi}}, \label{eq:forward}
\end{align}
where the inequality holds due to a similar proof of~\eqref{eq:first}. The last equality holds due to the definition that 
\begin{align*}
    \EE[\bLambda_{h, \bphi}(s_h)|s_{i-1}^k, \pi^*_{i-1}(s_{i-1}^k)] = \EE[\bLambda_{h, \bphi}(s_h) | s_{i - 1}].
\end{align*}
Combining~\eqref{eq:forward} and~\eqref{eq:current} together yields 
\begin{align*}
    \EE[\bLambda_{h, \bphi}(s_h) | s_i^k] \succeq \EE[\bLambda_{h, \bphi}(s_h) | s_{i-1}^k]  - C_{\bphi}d_{\bphi} \ind\left[a_{i-1}^k \neq \pi^*_{i-1}(s_{i-1}^k)\right]\Ib_{d_{\bphi}} - \bepsilon_i^k,
\end{align*}
and by telescoping over $i$ we have 
\begin{align}
    \EE[\bLambda_{h, \bphi}(s_h) | s_h^k] &\succeq \EE[\bLambda_{h, \bphi}(s_h) | s_1^k] - \sum_{i=2}^h\bepsilon_i^k - C_{\bphi}d_{\bphi}\sum_{i=1}^{h-1}\ind\left[a_i^k \neq \pi^*_i(s_i^k)\right]\Ib_{d_{\bphi}} \notag \\
    &= \EE[\bLambda_{h, \bphi}(s_h)] - \underbrace{\EE[\bLambda_{h, \bphi}(s_h)] - \EE[\bLambda_{h, \bphi}(s_h) | s_1^k]}_{\bepsilon_1^k} \notag \\
    &\quad- \sum_{i=2}^h\bepsilon_i^k - C_{\bphi}d_{\bphi}\sum_{i=1}^{h-1}\ind\left[a_i^k \neq \pi^*_i(s_i^k)\right]\Ib_{d_{\bphi}},\label{eq:revise-fin}
\end{align}
where $\bepsilon_1^k$ is $\cF_1^k$-measurable and $\EE[\bepsilon_1^k | \cF_H^{k-1}] = \zero, \|\bepsilon_1^k\|_2 \le 2C_{\bphi}d_{\bphi}$, which is similar to $\epsilon_i^k$ above.
Plugging~\eqref{eq:revise-fin} into~\eqref{eq:first} yields
\begin{align}
    \bphi(s_h^k, a_h^k)\bphi^\top(s_h^k, a_h^k) &\succeq \EE[\bLambda_{h, \bphi}(s_h)] - \sum_{i=1}^h\bepsilon_{i}^k - C_{\bphi}d_{\bphi}\sum_{i=1}^{h}\ind\left[a_i^k \neq \pi^*_i(s_i^k)\right]\Ib_{d_{\bphi}}, \notag \\
    &= \EE[\bLambda_{h, \bphi}(s_h)] - \sum_{i=1}^h\bepsilon_i^k - C_{\bphi}d_{\bphi}\sum_{i=1}^{h}\ind\left[Q^*_i(s_i^k, a_i^k) \neq V^*_i(s_i)\right]\Ib_{d_{\bphi}}, \notag \\
    &\succeq \EE[\bLambda_{h, \bphi}(s_h)] - \sum_{i=1}^h\bepsilon_i^k - \frac{C_{\bphi}d_{\bphi}}{\gap{\min}} \sum_{i=1}^h(V^*_i(s_i^j) - Q^*_i(s_i^j, a_i^j)) \Ib_{d_{\bphi}}\label{eq:single}
\end{align}
where the equality follows that $a_{h}^k \neq \pi^*_{i}(s_{h}^k)$ is equivalent with $Q^*_{i}(s_{i}^k, a_{h}^k) \neq V^*_{i}(s_{i})$ and the second inequality is from $V^*_{i}(s_{i}^k) - Q^*_{i}(s_{i}^k, a_{h}^k) \ge \gap{\min} \ind[Q^*_{i}(s_{i}^k, a_{h}^k) \neq V^*_{i}(s_{i})]$, which is according to the minimal sub-optimality gap condition assumption defined in~\eqref{def:gap}.

By the construction of the ``covariance matrix'' $\Ub_{h, \bphi}^k$,~\eqref{eq:single} yields
\begin{align}
    \Ub_{h, \bphi}^k &=  \Ib_{d_{\bphi}} + \sum_{j=1}^{k-1} \bphi(s_h^j, a_h^j)\bphi^\top(s_h^j, a_h^j) \notag \\
    &\succeq \Ib_{d_{\bphi}} + (k - 1)\bLambda_{h, \bphi} - \sum_{j=1}^{k-1}\sum_{i=1}^h\bepsilon_i^j - \frac{C_{\bphi}d_{\bphi}}{\gap{\min}} \sum_{i=1}^h\sum_{j=1}^{k-1} (V^*_i(s_i^j) - Q^*_i(s_i^j, a_i^j)) \Ib_{d_{\bphi}}.\label{eq:regret}
\end{align}
Recall $\bepsilon_i^j$ is a $d_{\bphi}\times d_{\bphi}$ symmetric matrix, by Lemma~\ref{lm:matazuma} with $C = 2C_{\bphi}d_{\bphi}Ib_{d_{\bphi}}, t = (k-1)h$, with probability at least $1 - \delta$, we have
\begin{align}
   \lambda_{\max}\left(\sum_{j=1}^{k-1}\sum_{i=1}^h\bepsilon_i^j\right) \le \sqrt{32C^2_{\bphi}d_{\bphi}^2h(k-1)\log(d_{\bphi} / \delta)} \label{eq:8}.
\end{align}
Combining~\eqref{eq:8} with~\eqref{eq:regret} and substituting $\delta$ with $\delta/Hk(k+1)|\Phi|$, the claim in Lemma~\ref{lm:lingrow} holds for all $h \in [H], k \in [K], \bphi \in \Phi$ by taking a union bound.

\end{proof}

\subsection{Proof of Lemma~\ref{lm:gap}}

In order to prove Lemma~\ref{lm:gap}, we first need the following lemma.


\begin{lemma}\label{lm:subdec}
Given the condition in Lemma~\ref{lm:he} and Lemma~\ref{lm:lingrow} holds and $\cE_{\bphi}^K$ holds for all $\bphi \in \Phi$. For each $\bphi \in \Phi$, there exists a constant threshold 
\begin{align*}
    \tau_{\bphi} = \text{poly}(d_{\bphi}, \sigma_{\bphi}^{-1}, H, \log(|\Phi| / \delta), \gap{\min}^{-1}, C_{\bphi}, C_{\bpsi}, C_{\Mb}, C_{\bpsi}')
\end{align*}
such that for any $(s, a) \in \cS \times \cA, h \in [H]$, there exists a representation candidate $\bphi \in \Phi$ where when $k \ge \tau_{\bphi}$,  $\bphi^\top(s, a)(\Ub_{h, \bphi}^k)^{-1}\bphi(s, a) \le 2C_{\bphi}d_{\bphi} / (\sigma_{\bphi}k)$. We denote $\tau = \max_{\bphi \in \Phi}{\tau_{\bphi}}$ to be the maximum possible threshold over all representations.
\end{lemma}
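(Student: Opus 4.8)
The plan is to use the coverage property of Assumption~\ref{asm:hls} to pick, for a fixed pair $(s,a)$ and step $h$, a representation $\bphi\in\bPhi$ for which $\bphi(s,a)$ falls into the well-explored subspace $\image(\bLambda_{h,\bphi})$, and then show that along that direction the design matrix $\Ub_{h,\bphi}^k$ grows linearly in $k$. Writing $P$ for the orthogonal projection onto $\image(\bLambda_{h,\bphi})$, we have $P\bphi(s,a)=\bphi(s,a)$ and $\bLambda_{h,\bphi}\succeq\sigma_{\bphi}P$, so the whole statement reduces to establishing a block-diagonal lower bound $\Ub_{h,\bphi}^k\succeq\tfrac{k\sigma_{\bphi}}{2}P+P^{\perp}$ for every $k$ past a threshold $\tau_{\bphi}$. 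From such a bound one gets $(\Ub_{h,\bphi}^k)^{-1}\preceq\tfrac{2}{k\sigma_{\bphi}}P+P^{\perp}$, hence $\bphi^{\top}(s,a)(\Ub_{h,\bphi}^k)^{-1}\bphi(s,a)\le\tfrac{2}{k\sigma_{\bphi}}\|\bphi(s,a)\|_2^2\le\tfrac{2C_{\bphi}d_{\bphi}}{\sigma_{\bphi}k}$ using $\|\bphi(s,a)\|_2^2\le C_{\bphi}d_{\bphi}$; and one concludes by setting $\tau=\max_{\bphi\in\bPhi}\tau_{\bphi}$, which is still a polynomial of the stated shape since $\bPhi$ is finite.

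For the linear-growth claim I would start from Lemma~\ref{lm:lingrow}: it already gives $\Ub_{h,\bphi}^k\succeq(k-1)\bLambda_{h,\bphi}-\iota\Ib_{d_{\bphi}}$, where $\iota$ is the sum of a distribution-mismatch term proportional to $\sum_{i\le h}\sum_{j<k}\gap{i}(s_i^j,a_i^j)$, a martingale-fluctuation term of order $C_{\bphi}d_{\bphi}\sqrt{H(k-1)\log(\cdot)}$, and a constant. The crucial estimate is Lemma~\ref{lm:he}, eq.~\eqref{eq:hegap}, which bounds each inner sum $\sum_{j<k}\gap{i}(s_i^j,a_i^j)$ by a quantity logarithmic in $k$ (times a problem-dependent polynomial), so the mismatch term, and therefore $\iota$ itself, is $o(k)$. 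Hence there exists a threshold $\tau_{\bphi}=\text{poly}(d_{\bphi},\sigma_{\bphi}^{-1},H,\log(|\bPhi|/\delta),\gap{\min}^{-1},C_{\bphi},C_{\bpsi},C_{\Mb},C_{\bpsi}')$, independent of $k$, beyond which $(k-1)\sigma_{\bphi}-\iota\ge\tfrac{k\sigma_{\bphi}}{2}$.

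The step I expect to be the main obstacle is turning this spectral information into a bound on the quadratic form. Since $\iota$ is of order $\sqrt{k}$, the matrix $(k-1)\bLambda_{h,\bphi}-\iota\Ib_{d_{\bphi}}$ fails to be positive semidefinite on $\image(\bLambda_{h,\bphi})^{\perp}$, and naively averaging the bound of Lemma~\ref{lm:lingrow} against $\Ub_{h,\bphi}^k\succeq\Ib_{d_{\bphi}}$ gives only $\Ub_{h,\bphi}^k\succeq\tfrac{k-1}{1+\iota}\bLambda_{h,\bphi}$, i.e.\ an $\tilde\cO(1/\sqrt{k})$ rate, which is too weak. The way around this is to notice that the $\sqrt{k}$-sized fluctuation actually lives inside $\image(\bLambda_{h,\bphi})$: splitting the episodes into those whose trajectory is optimal through step $h$ and those that deviate from $\pi^{*}$ at some step $i\le h$, on the first kind $s_h^j$ is distributed according to the optimal-policy distribution at step $h$, so $\bphi(s_h^j,a_h^j)=\bphi(s_h^j,\pi_h^{*}(s_h^j))\in\image(\bLambda_{h,\bphi})$ almost surely, and the sum of these rank-one terms, together with its centered fluctuation controlled by the matrix Azuma inequality (Lemma~\ref{lm:matazuma}), stays in that subspace and is $\succeq\bigl((k-1)\sigma_{\bphi}-\tilde\cO(\sqrt{k})\bigr)P$; the episodes of the second kind number only logarithmically in $k$ (by Lemma~\ref{lm:he62} with $\Delta=\gap{\min}$, or by eq.~\eqref{eq:hegap}, summed over $i\le h$) and contribute a positive semidefinite term that can simply be discarded. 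Adding back $\Ib_{d_{\bphi}}$ on the complement yields the genuinely block-diagonal lower bound $\Ub_{h,\bphi}^k\succeq\bigl((k-1)\sigma_{\bphi}-\tilde\cO(\sqrt{k})\bigr)P+P^{\perp}$, which for $k\ge\tau_{\bphi}$ is $\succeq\tfrac{k\sigma_{\bphi}}{2}P+P^{\perp}$, finishing the proof.
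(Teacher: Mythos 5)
Your route is genuinely different from the paper's. The paper works directly with the indefinite lower bound of Lemma \ref{lm:lingrow}: it sets $\Bb=(k-1)\bLambda_{h,\bphi}-\iota\Ib_{d_{\bphi}}$, uses \eqref{eq:hegap} to show that for $k\ge\tau_{\bphi}$ the eigenvalues of $\Bb$ on $\image(\bLambda_{h,\bphi})$ are at least $\sigma_{\bphi}k/2$ while those on the orthogonal complement are strictly negative (so $\Bb$ is invertible), and then passes to the quadratic form of $\Bb^{-1}$ via its step \eqref{eq:s3}, computing it in the eigenbasis of $\bLambda_{h,\bphi}$; since $\bphi(s,a)\in\image(\bLambda_{h,\bphi})$, only the positive eigenvalues enter and the bound $2C_{\bphi}d_{\bphi}/(\sigma_{\bphi}k)$ follows. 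You instead insist on a positive-definite comparison matrix, $\Ub_{h,\bphi}^k\succeq\tfrac{k\sigma_{\bphi}}{2}P+P^{\perp}$, built by re-running the covariance-growth argument only over episodes that follow $\pi^*$ through step $h$ so that all terms stay inside $\image(\bLambda_{h,\bphi})$, and then invert by ordinary Loewner monotonicity. What this buys is a clean final step: you never compare $(\Ub_{h,\bphi}^k)^{-1}$ against the inverse of an indefinite matrix (inversion is not order-reversing for indefinite lower bounds), and your observation that the PSD consequences one can extract directly from Lemma \ref{lm:lingrow} and $\Ub_{h,\bphi}^k\succeq\Ib$ only give an $\tilde\cO(1/\sqrt k)$ rate is exactly right. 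The price is that you cannot reuse Lemma \ref{lm:lingrow} as a black box and must redo its concentration argument for the restricted episode sum.

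One step of your sketch needs repair: the claim that on episodes whose trajectory agrees with $\pi^*$ through step $h$, the state $s_h^j$ ``is distributed according to the optimal-policy distribution at step $h$.'' Conditioning on the agreement event biases the law of $s_h^j$ (the event depends on which states are visited and on the history-dependent policy $\pi^j$), so the conditional mean of $\bphi(s_h^j,a_h^j)\bphi^\top(s_h^j,a_h^j)$ given agreement is not $\bLambda_{h,\bphi}$, and a martingale cannot be formed this way. The fix is to work with the indicator-weighted summands $\ind[E_j]\,\bphi(s_h^j,a_h^j)\bphi^\top(s_h^j,a_h^j)$, where $E_j$ is the agreement event: on agreeing trajectories the laws under $\pi^j$ and $\pi^*$ coincide, so $\EE[\ind[E_j]\bphi(s_h,a_h)\bphi^\top(s_h,a_h)\mid\cF_H^{j-1}]=\EE^{\pi^*}[\ind[\text{agree}]\,\bphi(s_h,\pi^*_h(s_h))\bphi^\top(s_h,\pi^*_h(s_h))]\succeq\bLambda_{h,\bphi}-C_{\bphi}d_{\bphi}\Pr^{\pi^j}(E_j^c)\,P$, which does stay in the image; then $\sum_{j<k}\Pr^{\pi^j}(E_j^c)\le\gap{\min}^{-1}\sum_{i\le h}\sum_{j<k}\EE^{\pi^j}[\gap{i}(s_i,a_i)]$, which is related to the realized gap sums by a scalar Azuma step and then bounded by \eqref{eq:hegap} (or, for a cruder count, by Lemma \ref{lm:he62} with $\Delta=\gap{\min}$), exactly the quantities you already invoke. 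With this modification the matrix-Azuma fluctuation and all corrections are supported on $\image(\bLambda_{h,\bphi})$, your block-diagonal bound holds for $k$ beyond a threshold of the stated polynomial form, and the rest of your argument (inversion, $\|\bphi(s,a)\|_2^2\le C_{\bphi}d_{\bphi}$, and $\tau=\max_{\bphi\in\bPhi}\tau_{\bphi}$) goes through.
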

\noindent Lemma~\ref{lm:subdec} suggests that the UCB bonus term is decaying in the rate of $\cO\big(1/\sqrt{k}\big)$. Equipped with this lemma, we can start the proof.

\begin{proof}[Proof of Lemma~\ref{lm:gap}]
We will prove this lemma by induction. 
By the assumption in Lemma~\ref{lm:gap}, $\cE_{\bphi}^K$ holds for all $\bphi \in \Phi$. 
Considering $h = H$, for any state-action pair $(s, a) \in \cS \times \cA$, by Lemma~\ref{lm:subdec}, when $k \ge \tau$, there exists a representation $\bphi$ where Lemma~\ref{lm:gap1} yields
\begin{align*}
    Q_H^k(s, a) - Q^{\pi^k}_H(s, a) &\le 2C_{\bpsi} H\sqrt{\beta_{k, \bphi}\bphi^\top(s, a)(\Ub_{H, \bphi}^k)^{-1}\bphi(s, a)} + [\PP_H (V^k_{H+1} - V^\pi_{H+1})](s, a)\\
    &\le 2C_{\bpsi} H\sqrt{2C_{\bphi}d_{\bphi}\beta_{k, \bphi} / (\sigma_{\bphi}k)} + 0,
\end{align*}
where the second inequality is due to Lemma~\ref{lm:subdec} and the fact that $V_{H+1}^k, V_{H+1}^\pi$ are both equal to zero. Thus we have 
\begin{align*}
    \max_{(s, a) \in \cS \times \cA} \{Q_H^k(s, a) - Q^{\pi^k}_H(s, a)\} \le \max_{\bphi \in \Phi}\left\{2C_{\bpsi} H\sqrt{2C_{\bphi}d_{\bphi}\beta_{k, \bphi} / (\sigma_{\bphi}k)}\right\}.
\end{align*}
Suppose for step $h$, we have 
\begin{align}
    \max_{(s, a) \in \cS \times \cA} \{Q_h^k(s, a) - Q^{\pi^k}_h(s, a)\} \le (H - h + 1)\max_{\bphi \in \Phi}\left\{2C_{\bpsi} H\sqrt{2C_{\bphi}d_{\bphi}\beta_{k, \bphi} / (\sigma_{\bphi}k)}\right\}, \label{eq:indasm}
\end{align}
then considering time-step $h - 1$,  by Lemma~\ref{lm:gap1} and Lemma~\ref{lm:subdec}, for each $s, a$, there exists a $\bphi \in \Phi$ such that 
\begin{align*}
    &Q_{h-1}^k(s, a) - Q_{h-1}^{\pi^k}(s, a) \\
    &\quad\le 2C_{\bpsi} H\sqrt{\beta_{k, \bphi}\bphi^\top(s, a)(\Ub_{h-1, \bphi}^k)^{-1}\bphi(s, a)} + [\PP_{h-1} (V^k_h - V^{\pi^k}_h)](s, a)\\
    &\quad\le 2C_{\bphi}H\sqrt{2C_{\bphi}d_{\bphi}\beta_{k, \bphi} / (\sigma_{\bphi}k)} + [\PP_{h-1} (Q^k_h(\cdot, \pi^k_h(\cdot)) - Q^{\pi^k}_h(\cdot, \pi^k_h(\cdot)))](s, a)\\
    &\quad \le 2C_{\bphi}H\sqrt{2C_{\bphi}d_{\bphi}\beta_{k, \bphi} / (\sigma_{\bphi}k)} + (H - h + 1)\max_{\bphi \in \Phi}\left\{2C_{\bpsi} H\sqrt{2C_{\bphi}d_{\bphi}\beta_{k, \bphi} / (\sigma_{\bphi}k)}\right\}\\
    &\quad \le (H - h+2)\max_{\bphi \in \Phi}\left\{2C_{\bpsi} H\sqrt{2C_{\bphi}d_{\bphi}\beta_{k, \bphi} / (\sigma_{\bphi}k)}\right\}.
\end{align*}
where the second inequality follows from the definition that $V_h^k(s) = Q_h^k(s, \pi_h^k(s))$ and $V_h^{\pi^k}(s) = Q_h^{\pi^k}(s, \pi_h^k(s))$, the third inequality is due to the induction assumption~\eqref{eq:indasm} and  this result conclude our induction.

Then, following Lemma~\ref{lm:optim}, we have $Q_{h, \bphi}^k(s, a) \ge Q^*_h(s, a)$. Thus, $Q_h^k(s, a) = \min_{\bphi \in \Phi}\{Q^k_{h, \bphi}\} \ge Q^*_h(s, a)$. Then the sub-optimality gap could be bounded by 
\begin{align*}
    \gap{h}(s, \pi_h^k(s)) &= Q^*_h(s, \pi^*_h(s)) - Q^*_h(s, \pi_h^k(s))\\
    &\le Q^k_h(s, \pi^*_h(s)) - Q^{\pi^k}_h(s, \pi_h^k(s))\\
    &\le Q^k_h(s, \pi^k_h(s)) - Q^{\pi^k}_h(s, \pi_h^k(s))\\
    &\le (H - h + 1)\max_{\bphi \in \Phi}\left\{2C_{\bpsi} H\sqrt{2C_{\bphi}d_{\bphi}\beta_{k, \bphi} / (\sigma_{\bphi}k)}\right\}\\
    &\le 2C_{\bpsi}H^2\max_{\bphi \in \Phi}\left\{\sqrt{2C_{\bphi}d_{\bphi}\beta_{k, \bphi} / (\sigma_{\bphi}k)}\right\},
\end{align*}
where the inequality on the second line holds due to Lemma~\ref{lm:optim}, and the inequality on the third line holds due to the greedy policy $\pi_h^k(s) = \argmax_a Q_h^k(s, a)$. Finally, the inequality on the forth line is due to the result of induction~\eqref{eq:indasm} and we finish the proof.
\end{proof}

\section{Proof of Lemmas in Appendix~\ref{app:proof}}

\subsection{Proof of Lemma~\ref{lm:optim}}
\begin{lemma}[Lemma 5 on $B_n^{(2)}$, pp. 23,~\citet{yang2020reinforcement}]\label{lm:5}
Suppose $\cE_{\bphi}^K$ holds, then for any $(s, a) \in \cS \times \cA$, we have 
\begin{align*}
    \|\bphi(s, a)^\top(\Mb_{h, \bphi}^k - \Mb_{h, \bphi}^*)\|_2 \le \sqrt{\beta_{k, \bphi}\bphi^\top(s, a)(\Ub_{h, \bphi}^k)^{-1}\bphi(s, a)}.
\end{align*}
\end{lemma}
\begin{proof}[Proof of Lemma~\ref{lm:optim}]
We prove this lemma by induction. First, it is obvious that $Q_{H+1}^k(s, a) = Q_{H+1}^*(s, a) = 0$ for all $(s, a) \in \cS \times \cA$. Then assuming for $1 < h \le H$, we have $Q_{h+1}^k(s, a) \ge Q_{h+1}^*(s, a)$ holds for all $(s, a)$, considering time-step $h$ and representation $\bphi$, we have 
\begin{align}
    Q_{h, \bphi}^k(s, a) &= r(s, a) + \bphi^\top(s, a)\Mb_{h, \bphi}^k\bPsi^\top \vb_{h+1}^k + C_{\bpsi} H\sqrt{\beta_{k, \bphi}\bphi^\top(s, a)(\Ub_{h, \bphi}^k)^{-1}\bphi(s, a)}\notag\\
    &= r(s, a) + \bphi^\top(s, a)(\Mb_{h, \bphi}^k - \Mb_{h, \bphi}^*)\bPsi^\top \vb_{h+1}^k\notag\\
    &\quad + C_{\bpsi} H\sqrt{\beta_{k, \bphi}\bphi^\top(s, a)(\Ub_{h, \bphi}^k)^{-1}\bphi(s, a)} + [\PP_h V_{h+1}^k](s, a)\notag\\
    &\ge r(s, a) - \|\bphi^\top(s, a)(\Mb_{h, \bphi}^k - \Mb_{h, \bphi}^*)\|_2\|\bPsi^\top \vb_{h+1}^k\|_2\notag\\
    &\quad + C_{\bpsi} H\sqrt{\beta_{k, \bphi}\bphi^\top(s, a)(\Ub_{h, \bphi}^k)^{-1}\bphi(s, a)} + [\PP_h V_{h+1}^k](s, a) \notag \\
    &\ge r(s, a) + [\PP_h V_{h+1}^k](s, a),\label{eq:temp2}
\end{align}
where the first inequality comes from the fact that $\la \xb, \yb \ra \ge -\|\xb\|_2 \|\yb\|_2$, the second inequality holds due to Lemma~\ref{lm:5} and $\|\bPsi^\top \vb_{h+1}^k\|_{\infty} \le C_{\bpsi}H$ since $\|\vb_{h+1}^k\|_{\infty} \le H$. Since $Q_{h+1}^k(s, a) \ge Q_{h+1}^*(s, a)$, then
\begin{align*}
    V_{h+1}^k(s) &= \min\{H, Q_{h+1}^k(s, \pi_h^k(s))\} \\
    &\ge \min\{H, Q_{h+1}^k(s, \pi_{h+1}^*(s))\} \\
    &\ge \min\{H, Q_{h+1}^*(s, \pi_{h+1}^*(s))\} \\
    &= V_{h+1}^*(s),
    \end{align*}
where the last inequality is due to the fact that $V_{h+1}^*(s) \le H$. Therefore,~\eqref{eq:temp2} yields $Q_{h, \bphi}^k(s, a) \ge r(s, a) + [\PP_h V^*_{h+1}](s, a) = Q^*_h(s, a)$ for all $\bphi \in \Phi$. Thus
\begin{align*}
    Q_h^k(s, a) = \min_{\bphi \in \Phi}\{Q_{h, \bphi}^k(s, a)\} \ge Q^*_h(s, a).
\end{align*}
Then we finish our proof by induction.
\end{proof}

\subsection{Proof of Lemma~\ref{lm:gap1}}
\begin{proof}[Proof of Lemma~\ref{lm:gap1}]
First, the update rule of $Q_{h, \bphi}^k$ and Bellman equation yield
\begin{align}
    Q_{h, \bphi}^k(s, a) - Q^{\pi}_h(s, a) &= \underbrace{\bphi^\top(s, a)\Mb_{h, \bphi}^k\bPsi^\top \vb_{h+1}^k}_{I_1} - [\PP_h V^\pi_{h+1}](s, a) \notag  \\
    &\quad + C_{\bpsi} H\sqrt{\beta_{k, \bphi}\bphi^\top(s, a)(\Ub_{h, \bphi}^k)^{-1}\bphi(s, a)}.\label{eq:t3}
\end{align}
Since $[\PP_h V_{h+1}^k] = \bphi^\top(s, a)\Mb_{h, \bphi}^*\bPsi^\top \vb_{h+1}^k$, $I_1$ can be decomposed as
\begin{align}
    \bphi^\top(s, a)\Mb_{h, \bphi}^k\bPsi^\top \vb_{h+1}^k &= \bphi^\top(s, a)(\Mb_{h, \bphi}^k - \Mb^*_{h, \bphi})\bPsi^\top \vb_{h+1}^k + [\PP_h V_{h+1}^k](s, a) \notag \\
    &\le \|\bPsi^\top\vb_{h+1}^k\|_2 \|\bphi^\top(s, a)(\Mb_{h, \bphi}^k - \Mb_{h, \bphi}^*)\|_2 + [\PP_h V_{h+1}^k](s, a) \notag \\
    &\le C_{\bpsi}H\sqrt{\beta_{k, \bphi}\bphi^\top(s, a)(\Ub_{h, \bphi}^k)^{-1}\bphi(s, a)} + [\PP_h V_{h+1}^k](s, a), \label{eq:t4}
\end{align}
where the inequality on the second line holds due to $\la \xb, \yb\ra \le \|\xb\|_2\|\yb\|_2$ and the inequality on the third line comes from Lemma~\ref{lm:5} with $\|\vb_{h+1}^k\|_{\infty} \le H$ and Definition~\ref{asm:lin}. Plugging~\eqref{eq:t4} into~\eqref{eq:t3} yields
\begin{align*}
    Q_{h, \bphi}^k(s, a) - Q^{\pi}_h(s, a) \le 2C_{\bpsi}H\sqrt{\beta_{k, \bphi}\bphi^\top(s, a)(\Ub_{h, \bphi}^k)^{-1}\bphi(s, a)} + [\PP_h (V_{h+1}^k - V^{\pi}_{h+1})](s, a).
\end{align*}
Since $Q_h^k(s, a) = \min_{\bphi \in \Phi}(s, a)$, we can get the claimed result in Lemma~\ref{lm:gap1}.
\end{proof}

\subsection{Proof of Lemmas~\ref{lm:he62}}
\begin{lemma}[Lemma 6.6,~\citet{he2020logarithmic}]\label{lm:66}
For any subset $C = \{c_1, \cdots, c_k\} \subseteq [K]$ and any $h \in [H]$,
\begin{align*}
    \sum_{i=1}^k \bphi_h^\top(s_h^{c_i}, a_h^{c_i})(\Ub_{h, \bphi}^{c_i})^{-1}\bphi_h(s_h^{c_i}, a_h^{c_i}) \le 2d_{\bphi}\log(1 + C_{\bphi}kd_{\bphi})
\end{align*}
\end{lemma}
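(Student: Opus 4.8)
The statement is the standard \emph{elliptical potential} (log-determinant) estimate, so the plan is to reproduce the argument behind Lemma~6.6 of \citet{he2020logarithmic}, with attention to one bookkeeping point: the sum runs over an arbitrary subset $C=\{c_1,\dots,c_k\}\subseteq[K]$, whereas each covariance matrix $\Ub_{h,\bphi}^{c_i}=\Ib+\sum_{j=1}^{c_i-1}\bphi(s_h^j,a_h^j)\bphi^\top(s_h^j,a_h^j)$ aggregates contributions from \emph{all} episodes before $c_i$, not just those in $C$. Relabeling so that $c_1<c_2<\cdots<c_k$, the inclusion $\{c_1,\dots,c_{i-1}\}\subseteq\{1,\dots,c_i-1\}$ and positive semidefiniteness of each rank-one term give the Loewner ordering
\begin{align*}
    \Ub_{h,\bphi}^{c_i}\ \succeq\ \Ub^{C,i}\ :=\ \Ib+\sum_{j=1}^{i-1}\bphi(s_h^{c_j},a_h^{c_j})\bphi^\top(s_h^{c_j},a_h^{c_j}),
\end{align*}
so that $(\Ub_{h,\bphi}^{c_i})^{-1}\preceq(\Ub^{C,i})^{-1}$ and it suffices to bound $\sum_{i=1}^k\bphi^\top(s_h^{c_i},a_h^{c_i})(\Ub^{C,i})^{-1}\bphi(s_h^{c_i},a_h^{c_i})$, a genuine self-normalized sum of the subsequence $\{\bphi(s_h^{c_i},a_h^{c_i})\}_{i=1}^k$ against its own regularized prefix Gram matrices.

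For that sum I would run the textbook potential argument. The matrix determinant lemma gives $\det\Ub^{C,i+1}=\det\Ub^{C,i}\bigl(1+z_i\bigr)$ with $z_i:=\bphi^\top(s_h^{c_i},a_h^{c_i})(\Ub^{C,i})^{-1}\bphi(s_h^{c_i},a_h^{c_i})\ge0$, and the elementary inequality $\min\{1,z\}\le 2\log(1+z)$ (valid for every $z\ge0$) yields $\min\{1,z_i\}\le 2\bigl(\log\det\Ub^{C,i+1}-\log\det\Ub^{C,i}\bigr)$. Telescoping over $i=1,\dots,k$ and using $\det\Ub^{C,1}=1$ bounds $\sum_{i=1}^k\min\{1,z_i\}$ by $2\log\det\Ub^{C,k+1}$; since $\Ub^{C,i}\succeq\Ib$ forces $\bphi^\top(s_h^{c_i},a_h^{c_i})(\Ub_{h,\bphi}^{c_i})^{-1}\bphi(s_h^{c_i},a_h^{c_i})\le z_i$ as well, this same bound controls the quantity appearing in the lemma, and the $\min\{1,\cdot\}$ truncation is exactly what keeps the leading constant at $2$ when $C_{\bphi}d_{\bphi}>1$.

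Finally I would bound $\det\Ub^{C,k+1}$ through its spectral norm: from $\Ub^{C,k+1}=\Ib+\sum_{i=1}^k\bphi(s_h^{c_i},a_h^{c_i})\bphi^\top(s_h^{c_i},a_h^{c_i})$ and $\|\bphi(s,a)\|_2^2\le C_{\bphi}d_{\bphi}$ (Definition~\ref{asm:lin}) we get $\|\Ub^{C,k+1}\|_2\le 1+kC_{\bphi}d_{\bphi}$, hence $\det\Ub^{C,k+1}\le(1+kC_{\bphi}d_{\bphi})^{d_{\bphi}}$ and $\log\det\Ub^{C,k+1}\le d_{\bphi}\log(1+kC_{\bphi}d_{\bphi})$, which combined with the previous step gives the claimed $2d_{\bphi}\log(1+C_{\bphi}kd_{\bphi})$. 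Essentially everything here is routine; the only step that goes beyond quoting the off-the-shelf elliptical potential lemma is the monotonicity reduction to the subsequence (so that the matrices driving the telescoping are the prefix Gram matrices of $C$ itself), and that is the main --- though mild --- obstacle, with the secondary subtlety being to keep the $\min\{1,\cdot\}$ truncation so the constant stays $2$ instead of degrading with $C_{\bphi}d_{\bphi}$.
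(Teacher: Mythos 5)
Your overall route is the same as the paper's intended one: the paper proves this lemma only by the remark that the argument of Lemma 6.6 in \citet{he2020logarithmic} goes through after replacing $\|\bphi\|_2^2\le 1$ by $\|\bphi\|_2^2\le C_{\bphi}d_{\bphi}$, and that argument is exactly your two steps --- drop the episodes outside $C$ to get $\Ub_{h,\bphi}^{c_i}\succeq \Ub^{C,i}$, hence $(\Ub_{h,\bphi}^{c_i})^{-1}\preceq(\Ub^{C,i})^{-1}$, then telescope $\log\det$ along the subsequence and bound $\log\det\Ub^{C,k+1}\le d_{\bphi}\log(1+C_{\bphi}kd_{\bphi})$.

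However, your final step has a genuine gap. The telescoping controls $\sum_{i}\min\{1,z_i\}$, while the lemma (and its later use in the Cauchy--Schwarz step of the proof of Lemma \ref{lm:he62}) concerns the untruncated sum $\sum_i\bphi^\top(s_h^{c_i},a_h^{c_i})(\Ub_{h,\bphi}^{c_i})^{-1}\bphi(s_h^{c_i},a_h^{c_i})$. The inequality $z_i'\le z_i$ (which, incidentally, follows from $\Ub_{h,\bphi}^{c_i}\succeq\Ub^{C,i}$, not from $\Ub^{C,i}\succeq\Ib$) does not give $z_i'\le\min\{1,z_i\}$; for that you would need $z_i'\le 1$, which holds when $\|\bphi\|_2\le1$ as in \citet{he2020logarithmic} but not under Definition \ref{asm:lin}, where $\|\bphi\|_2^2$ can be as large as $C_{\bphi}d_{\bphi}$ while the ridge term is only $\Ib$. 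Concretely, with $d_{\bphi}=1$, $k=1$, $c_1=1$ and $\|\bphi(s_h^1,a_h^1)\|_2^2=C_{\bphi}$, the left-hand side equals $C_{\bphi}$ while the claimed bound is $2\log(1+C_{\bphi})$, so the untruncated statement with constant $2$ cannot be proved once $C_{\bphi}d_{\bphi}$ is large; a correct version must either keep the $\min\{1,\cdot\}$ truncation in the statement, enlarge the regularization to order $C_{\bphi}d_{\bphi}\Ib$, or accept the constant $C_{\bphi}d_{\bphi}/\log(1+C_{\bphi}d_{\bphi})$ coming from $z\le\tfrac{B}{\log(1+B)}\log(1+z)$ for $z\in[0,B]$. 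To be fair, this defect is inherited from the paper itself: its remark that the proof ``remains the same'' after changing the norm bound glosses over precisely the point at which your otherwise correct argument gets stuck.
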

\begin{remark}
Proof of Lemma~\ref{lm:66} remains the same as~\citet{he2020logarithmic} by changing the norm of $\bphi$ from $\|\bphi\|_2^2 \le 1$ to $\|\bphi\|_2^2 \le C_{\bphi}d_{\bphi}$ as Definition~\ref{asm:lin}.
\end{remark}
\begin{lemma}[Azuma-Hoeffding's inequality,~\citealt{azuma1967weighted}]\label{lm:azuma}
Let $\{x_i\}_{i=1}^n$ be a martingale difference sequence with respect to a filtration $\{\cF_i\}_{i=1}^n$ (i.e. $\EE[x_i | \cF_i] = 0$ a.s. and $x_i$ is $\cF_{i+1}$ measurable) such that $|x_i| \le M$ a.s.. Then for any $0 < \delta < 1$, with probability at least $1 - \delta$, $\sum_{i=1}^n x_i \le M\sqrt{2n\log(1 / \delta)}$.
\end{lemma}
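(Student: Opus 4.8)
The plan is to establish Lemma \ref{lm:azuma} by the standard exponential-moment (Chernoff) method for martingales; it is exactly the one-sided Azuma--Hoeffding tail bound, so nothing beyond the classical argument is needed. Set $S_m = \sum_{i=1}^m x_i$. For any $\lambda > 0$, Markov's inequality applied to the nonnegative variable $e^{\lambda S_n}$ gives $\Pr(S_n \ge t) \le e^{-\lambda t}\,\EE[e^{\lambda S_n}]$, so the whole task reduces to controlling the moment generating function $\EE[e^{\lambda S_n}]$.

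To control it, I would peel off the increments one at a time using the tower property: since $S_{n-1}$ is $\cF_n$-measurable while $x_n$ is $\cF_{n+1}$-measurable with $\EE[x_n \mid \cF_n] = 0$,
\begin{align*}
    \EE[e^{\lambda S_n}] = \EE\!\left[e^{\lambda S_{n-1}}\,\EE[e^{\lambda x_n}\mid \cF_n]\right].
\end{align*}
Conditioned on $\cF_n$, the variable $x_n$ has mean zero and lies in the interval $[-M,M]$ of width $2M$, so Hoeffding's lemma gives $\EE[e^{\lambda x_n}\mid\cF_n] \le e^{\lambda^2 M^2/2}$ almost surely. Substituting this bound and iterating over $i = n, n-1, \dots, 1$ yields $\EE[e^{\lambda S_n}] \le e^{n\lambda^2 M^2/2}$.

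Combining the two bounds, $\Pr(S_n \ge t) \le e^{-\lambda t + n\lambda^2 M^2/2}$ for every $\lambda>0$; choosing the minimizer $\lambda = t/(nM^2)$ gives the sub-Gaussian tail $\Pr(S_n \ge t) \le e^{-t^2/(2nM^2)}$. Setting the right-hand side equal to $\delta$ and solving for $t$ gives $t = M\sqrt{2n\log(1/\delta)}$, which is precisely the claimed high-probability bound.

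The only substantive (but still routine) ingredient is Hoeffding's lemma for the conditional increment: for a mean-zero $Y \in [a,b]$, convexity of $y \mapsto e^{\lambda y}$ gives $e^{\lambda Y} \le \frac{b-Y}{b-a}e^{\lambda a} + \frac{Y-a}{b-a}e^{\lambda b}$; taking the conditional expectation kills the linear term, and a second-order Taylor estimate shows the resulting log-MGF is at most $\lambda^2(b-a)^2/8$, which equals $\lambda^2 M^2/2$ in our case. The one thing to keep straight is the filtration convention used in the statement ($x_i$ is $\cF_{i+1}$-measurable and $\EE[x_i\mid\cF_i]=0$), since that is exactly what legitimizes the factorization in the display above; apart from that the argument is a direct computation, and the same display with $t$ replaced by $-t$ handles the lower tail should a two-sided version ever be needed.
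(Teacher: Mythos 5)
Your proof is correct: it is the standard Chernoff/exponential-moment argument with Hoeffding's lemma applied to the conditional increments, which is precisely the classical proof of this inequality. The paper itself does not prove Lemma \ref{lm:azuma} but simply cites it as a known result, so your argument is exactly the canonical one being invoked.
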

\begin{proof}[Proof of Lemma~\ref{lm:he62}]
We fix $h$ and consider the first $k$ episodes in this proof. Let $k_0 = 0$, for any $j \in [k]$, we denote $k_j$ as the minimum index of the episode where the sub-optimality at time-step $h$ is no less than $\Delta$:
\begin{align*}
    k_j = \min\left\{\bar k : \bar k > k_{j-1}, V^*_h(s_h^{\bar k}) - Q_h^{\pi^{\bar k}}(s_h^{\bar k},a_h^{\bar k}) \ge \Delta\right\}.
\end{align*}
For simplicity, we denote $k'$ to be the number of episodes such that the sub-optimality of this episode at step $h$ is no less than $\Delta$, i.e.
\begin{align*}
    k' = \sum_{j=1}^{k} \ind[V_h^*(s_h^j) - Q_h^{\pi^j}(s_h^j, a_h^j) \ge \Delta].
\end{align*}
Then by the definition of $k'$, it is obvious that 
\begin{align}
    \sum_{j=1}^{k'} Q_h^{k_j}(s_h^{k_j}, a_h^{k_j}) - Q_h^{\pi^{k_j}}(s_h^{k_j}, a_h^{k_j}) &\ge \sum_{j=1}^{k'} Q_h^{k_j}(s_h^{k_j}, \pi^*_h(s_h^{k_j})) - Q_h^{\pi^{k_j}}(s_h^{k_j}, a_h^{k_j}) \notag \\
    &\ge \sum_{j=1}^{k'} Q_h^*(s_h^{k_j}, \pi^*_h(s_h^{k_j})) - Q_h^{\pi^{k_j}}(s_h^{k_j}, a_h^{k_j}) \notag \\
    &= \sum_{j=1}^{k'} V_h^*(s_h^{k_j}) - Q_h^{\pi^{k_j}}(s_h^{k_j}, a_h^{k_j}) \ge \Delta k' \label{eq:q1},
\end{align}
where the first inequality holds due to $a_h^k = \argmax_a Q_h^k(s_h^k, a)$ and the second inequality follows Lemma~\ref{lm:optim}. On the other hand, following Lemma~\ref{lm:gap1}, when $\cE_{\phi}^k$ holds, for all $i \in [H], j \le k$ 
\begin{align}
    Q_i^j(s_i^j, a_i^j) &\le 2C_{\bpsi}H\sqrt{\beta_{j, \bphi}\bphi^\top(s_i^j, a_i^j)(\Ub_{i, \bphi}^j)^{-1}\bphi(s_i^j, a_i^j)} + [\PP_i (V_{i+1}^j - V^{\pi^j}_{i+1})](s_i^j, a_i^j) \notag \\
    &= 2C_{\bpsi}H\sqrt{\beta_{j, \bphi}\bphi^\top(s_i^j, a_i^j)(\Ub_{i, \bphi}^j)^{-1}\bphi(s_i^j, a_i^j)} + V_{i+1}^j(s_{i+1}^j) - V^{\pi^j}_{i+1}(s_{i+1}^j) + \epsilon_i^j, \label{eq:t5}
\end{align}
where $\epsilon_i^j = [\PP_i (V_{i+1}^j - V^{\pi^j}_{i+1})](s_i^j, a_i^j) - \big(V_{i+1}^j(s_{i+1}^j) - V^{\pi^j}_{i+1}(s_{i+1}^j)\big)$. It is easy to verify that $|\epsilon_i^j| \le H$, $\epsilon_i^j$ is $\cF_{i+1}^j$ measurable with $\EE[\epsilon_i^j | \cF_{i}^j] = 0$. Taking the telescoping summation on~\eqref{eq:t5} over $h \le i \le H, j \in \{k_1, \cdots, k_{k'}\}$ using the fact that $V_i^j(s_i^j) = Q_i^j(s_i^j, a_i^j)$ and $V_i^{\pi^j}(s_i^j) = Q_i^{\pi^j}(s_i^j, a_i^j)$ we have 
\begin{align}
    \sum_{j=1}^{k'} Q_h^{k_j}(s_h^{k_j}, a_h^{k_j}) - Q_h^{\pi^{k_j}}(s_h^{k_j}, a_h^{k_j}) &\le I_1 + I_2, \label{eq:q2}
\end{align}
where
\begin{align}
    I_1 &= \sum_{j=1}^{k'}\sum_{i=h}^H 2C_{\bpsi}H\sqrt{\beta_{k_j, \bphi}\bphi^\top(s_h^{k_j}, a_h^{k_j})(\Ub_{i, \bphi}^{k_j})^{-1}\bphi(s_h^{k_j}, a_h^{k_j})} \notag \\
    I_2 &= \sum_{j=1}^{k'}\sum_{i=h}^H\epsilon_i^{k_j}. \notag 
\end{align}
To bound $I_1$, by Cauchy-Schwarz inequality, 
\begin{align*}
    I_1 &= \sum_{j=1}^{k'}\sum_{i=h}^H 2C_{\bpsi}H\sqrt{\beta_{k_j, \bphi}\bphi^\top(s_h^{k_j}, a_h^{k_j})(\Ub_{i, \bphi}^{k_j})^{-1}\bphi(s_h^{k_j}, a_h^{k_j})}\\
    &\le 2C_{\bpsi}H\sqrt{\beta_{k, \bphi}k'}\sum_{i=h}^H\sqrt{\sum_{j=1}^{k'}\bphi^\top(s_h^{k_j}, a_h^{k_j})(\Ub_{i, \bphi}^{k_j})^{-1}\bphi(s_h^{k_j}, a_h^{k_j})}\\
    &\le 2C_{\bpsi} H^2\sqrt{\beta_{k, \bphi}k'}\sqrt{2d_{\bphi}\log(1 + C_{\bphi}k'd_{\bphi})}\\
    &\le 2C_{\bpsi}H^2\sqrt{2\beta_{k, \bphi}d_{\bphi}k'\log(1 + C_{\bphi}kd_{\bphi})},
\end{align*}
where the second inequity in Line 3 is from Lemma~\ref{lm:66}. To bound $I_2$, by Lemma~\ref{lm:azuma}, with probability at least $1 - \delta / k$, we have 
\begin{align*}
    \sum_{j=1}^{k'}\sum_{i=h}^H \epsilon_i^{k_j} \le \sqrt{2k'H^3\log(k / \delta)},
\end{align*}
then taking union bound over all $k$ we can conclude that with probability at least $1 - \delta$,
\begin{align*}
    I_2 = \sum_{j=1}^{k'}\sum_{i=h}^H \epsilon_i^{k_j} \le \sqrt{2k'H^3\log(k / \delta)}
\end{align*}
Combining~\eqref{eq:q1} with~\eqref{eq:q2}, we can obtain
\begin{align}
    \Delta k' \le 2C_{\bpsi}H^2\sqrt{2\beta_{k, \bphi}d_{\bphi}k'\log(1 + C_{\bphi}kd_{\bphi})} + \sqrt{2k'H^3\log(k / \delta)}.\label{eq:temp1}
\end{align}
By $(a + b)^2 \le 2a^2 + 2b^2$,~\eqref{eq:temp1} immediately implies 
\begin{align}
    k' \le \frac{16C_{\bpsi}^2H^4d_{\bphi}\beta_{k, \bphi}\log(1 + C_{\bphi}kd) + 4H^3\log(k / \delta)}{\Delta^2}\label{eq:tt1}
\end{align}
Since event $\cE_{\bphi}^K$ directly implies $\cE_{\bphi}^k$ for all $k \le K$, we can get the claimed result~\eqref{eq:tt1} holds for all $k \le K$ with probability $1 - \delta$. Replace $\delta$ with $\delta / k(k+1)$ for different $k$, taking union bound for all possible $k$, we have with probability at least $1 - \delta$, for all possible $k$, 
\begin{align*}
    \sum_{j=1}^k \ind[V_h^*(s_h^j) - Q_h^{\pi^j}(s_h^j, a_h^j)] &\le \frac{16C_{\bpsi}^2H^4d_{\bphi}\beta_{k, \bphi}\log(1 + C_{\bphi}kd_{\bphi}) + 4H^3\log(k^2(k+1) / \delta)}{\Delta^2}\\
    &\le \frac{16C_{\bpsi}^2H^4d_{\bphi}\beta_{k, \bphi}\log(1 + C_{\bphi}kd_{\bphi}) + 12H^3\log(2k / \delta)}{\Delta^2}.
\end{align*}
\end{proof}

\subsection{Proof of Lemma~\ref{lm:subdec}}
\begin{proof}[Proof of Lemma~\ref{lm:subdec}]
For any state-action pair $(s, a)$ at step $h$, according to Assumption~\ref{asm:hls}, we consider the set $ \cZ_{h, \bphi}$ where $(s, a) \in \cZ_{h, \bphi}$ and the corresponding representation $\bphi$. By Lemma~\ref{lm:lingrow}, we denote $\Bb$ as 
\begin{align*}
        \Bb &:=  (k-1)\bLambda_{h, \bphi} - \iota\Ib_{d_{\bphi}} \preceq \Ub^k_{h, \bphi},\\
        \iota &= \frac{C_{\bphi}d_{\bphi}}{\gap{\min}}\sum_{i=1}^h\sum_{j=1}^{k-1} \gap{i}(s_i^j, a_i^j) + C_{\bphi}d_{\bphi}\sqrt{32H(k-1)\log(d_{\bphi}|\Phi|Hk(k+1) / \delta)} - 1.
    \end{align*}
Decomposing $\bLambda_{h, \bphi} = \Qb^\top \Db\Qb$ where $\Qb \in \RR^{d_{\bphi} \times d_{\bphi}}$ is the orthogonal matrix and $\Db$ is the diagonal matrix, we have $\Bb = \Qb^\top((k-1)\Db - \iota \Ib_{d_{\bphi}})\Qb$.

We first prove the non-singular property of $\Bb$. Considering the zero diagonal element $\Db_{[ii]}$, we have 
\begin{align*}
    ((k-1)\Db - \iota \Ib_{d_{\bphi}})_{[ii]} \le -\iota \le  - C_{\bphi}d_{\bphi}\sqrt{32H(k-1)\log(d_{\bphi}|\Phi|Hk(k+1) / \delta)} + 1,
\end{align*}
where the second inequality is due to $\gap{h}(s, a) \ge 0$. As a result, it is obvious to verify that there exists a constant $K_1$ such that once $k \ge K_1$, $((k-1)\Db - \iota \Ib_{d_{\bphi}})_{[ii]} < 0$ for all zero diagonal element $\Db_{[ii]}$ in $\Db$. Next we consider the non-zero diagonal value $\Db_{[jj]}$. By Assumption~\ref{asm:hls}, $\Db_{[jj]} \ge \sigma_{\bphi}$. Therefore, the corresponding diagonal value $((k-1)\Db - \iota \Ib_{d_{\bphi}})_{[jj]}$ could be bounded by
\begin{align*}
    ((k-1)\Db - \iota \Ib_{d_{\bphi}})_{[jj]} \ge \sigma_{\bphi}(k-1) - \iota.
\end{align*}
Removing the minimum operator in~\eqref{eq:hegap} in Lemma~\ref{lm:he}, we have 
\begin{align*}
    ((k-1)\Db - \iota \Ib_{d_{\bphi}})_{[jj]} &\ge 1 + \sigma_{\bphi}(k-1) - C_{\bphi}d_{\bphi}\sqrt{32H(k-1)\log(d_{\bphi}|\Phi|Hk(k+1) / \delta)} \\
    &\quad - \frac{64C_{\bpsi}^2H^4d_{\bphi}^2\beta_{k, \bphi}\log(1 + C_{\bphi}{k}d_{\bphi}) + 48H^3\log\big(2k (1 + \log(H / \gap{\min})/ \delta\big)}{\gap{\min}} 
\end{align*}
It's easy to verify that the increasing term $\sigma_{\bphi}k$ is $\cO(k)$ while the decreasing term is in the order of $\cO(\sqrt{k})$ and $\cO(\log(k))$ where $\beta_{k, \bphi} = \cO(\log(k))$ as shown in Lemma~\ref{lm:good}, thus there exists a constant threshold \begin{align*}
\tau_{\phi} = \text{poly}(d_{\bphi}, \sigma_{\bphi}^{-1}, H, \log(|\Phi| / \delta), \gap{\min}^{-1}, C_{\bphi}, C_{\bpsi}, C_{\Mb}, C_{\bpsi}')
\end{align*}
such that for any $k \ge \tau_{\bphi}$, $((k-1)\Db - \iota \Ib_{d_{\bphi}})_{[jj]} \ge \sigma_{\bphi} / 2$. Since we have shown that all of the diagonal value for $(k-1)\Db - \iota$ is either strictly smaller than zero or strictly greater than zero, $\Bb$ is invertible.

By the definition of $\cZ_{h, \bphi}$ in Assumption~\ref{asm:hls}, there exists a vector $\xb \in \RR^{d_{\bphi}}$ such that $\bLambda \xb = \bphi(s, a) / \|\bphi(s, a)\|_2$. Since $\Ub_{h, \bphi}^k \succeq \Bb$ and $\Ub_{h, \bphi}^k, \Bb$ are both invertible, it follows 
\begin{align}
    \bphi^\top(s, a)(\Ub_{h, \bphi}^k)^{-1}\bphi(s, a) \le \|\bphi(s, a)\|_2^2 \underbrace{\frac{\bphi^\top(s, a)}{\|\bphi(s, a)\|_2}\Bb^{-1}\frac{\bphi(s, a)}{\|\bphi(s, a)\|_2}}_{I_1}, \label{eq:s3}
\end{align}
where $I_1$ could be rewrote by
\begin{align}
    I_1 &= \xb^\top \bLambda \Bb^{-1}\bLambda\xb \notag \\
    &= \xb^\top \Qb^\top \Db \Qb \Qb^\top ((k - 1)\Db - \iota \Ib_{d_{\bphi}})^{-1}\Qb\Qb^\top \Db \Qb \xb \notag \\
    &= \xb^\top \Qb^\top \Db ((k - 1)\Db - \iota \Ib_{d_{\bphi}})^{-1} \Db \Qb \xb. \label{eq:s2}
\end{align}
Since $\|\bLambda\xb\|_2 = 1$, it is easy to verify that $\xb^\top \Qb^\top \Db \Db \Qb \xb = \xb^\top \bLambda \bLambda \xb = \|\bLambda \xb\|_2^2 = 1$. We hereby denote $\yb$ as $\Db \Qb \xb$ and we have $\|\yb\|_2^2 = 1$. Furthermore, it is obvious that $\yb_{[i]} = 0$ as long as $\Db_{[ii]} = 0$. Therefore, $\sum_{i=1, \Db_{[ii]} \neq 0}^{d_{\bphi}} \yb^2_{[i]} = 1$. Then plugging the notation of $\yb$ into~\eqref{eq:s2} yields
\begin{align*}
    I_1 = \yb^\top ((k - 1)\Db - \iota \Ib_{d_{\bphi}})^{-1} \yb = \sum_{i=1, \Db_{[ii]} \neq 0}^{d_{\bphi}} \frac{\yb^2_{[i]}}{((k - 1)\Db - \iota \Ib_{d_{\bphi}})_{[ii]}},
\end{align*}
since we have shown that the $((k - 1)\Db - \iota \Ib_{d_{\bphi}})_{[ii]} \ge \sigma_{\bphi}k / 2$ when $\Db_{[ii]} \neq 0$ and $k \ge \tau_{\bphi}$. Thus we can easily conclude that $I_1 \le 2/(\sigma_{\bphi}k)$, plugging this into~\eqref{eq:s3} we can get the claimed result.
\end{proof}

\section{Proof of Theorem~\ref{thm:offline}}\label{sec:app-offline-1}
In this section, we provide the proof of Theorem~\ref{thm:offline}, which bounds the sample complexity of the offline version algorithm \algname-LCB. The offline training process favors a similar ``good event'' with its online counterpart (Lemma~\ref{lm:good}) which is formalized as the lemma below:
\begin{lemma}[Lemma 15, \citet{yang2020reinforcement}, offline ver.]\label{lm:good-offline}
Define the following event as $\cE_{\bphi}^k$: 
\begin{align*}
    \left\{\tr\left[(\Mb_{h, \bphi} - \Mb^*_{h, \bphi})^\top \Ub_{h, \bphi}(\Mb_{h, \bphi} - \Mb^*_{h, \bphi})\right] \le \beta_{\bphi}, \forall h \in [H]\right\} =: \cE_{\bphi}.
\end{align*}
With $\beta_{\bphi} = Cd_{\bphi}\log(KH/\delta)$ for some absolute constant $C > 0$, we have $\Pr(\cE_{\bphi}) \ge 1 - \delta$ for all $\bphi \in \Phi$.
\end{lemma}
\begin{proof}
The proof is similar with the original proof in~\citet{yang2020reinforcement} by changing $k$ to $K$. The remaining part is unchanged given the offline training data.
\end{proof}

Then the next lemma is esentially the first part of Theorem~\ref{thm:offline}, which provides an upper bound of the sub-optimality planned by Algorithm~\ref{alg:main-offline}.

\begin{lemma}\label{lm:err-offline}
Let $\beta$ set as Lemma~\ref{lm:good-offline}. If the event $\cE_{\bphi}$ in Lemma~\ref{lm:good-offline} holds for all $\bphi \in \Phi$, for all state $s \in \cS$ and $h \in [H]$,
\begin{align*}
    V_h^*(s) - V_h^\pi(s) \le 2C_{\bpsi}H\sum_{h' = h}^H\EE_{\pi^*}\left[\min_{\bphi \in \Phi}\left\{\sqrt{\beta_{\bphi}}\|\bphi(s, a)\|_{\Ub_{h', \bphi}^{-1}}\right\}\middle | s_h = s\right],
\end{align*}
where the expectation is taken with respect to the trajectory induced by the optimal policy $\pi^*$ given the fixed covariance matrix $\Ub_{h, \bphi}$.
\end{lemma}

Comparing with~\citet{jin2021pessimism}, our results adapts the minimal uncertainty $\|\bphi\|_{\Ub^{-1}}$ over all representation $\bphi \in \Phi$. Therefore, even if each single representation $\bphi$ cannot satisfy Assumption~\ref{asm:offline}, we can still get sample complexity bound which~\citet{jin2021pessimism} failed to provide. 

Then the next lemma suggests that the uncertainty $\|\bphi\|_{\Ub^{-1}}$ is bounded by $\tilde \cO(1 / \sqrt{K})$ where the $K$ is the size of the offline dataset.

\begin{lemma}\label{lm:sublin-offline}
With probability at least $1 - \delta$, for $(s, a, h) \in \cS \times \cA \times [H]$, there exists a $\bphi \in \Phi$ such that when 
\begin{align}
    K > \frac{32C_{\bphi}^2d_{\bphi}^2\log(Hd_{\bphi}|\Phi|/\delta)}{\tilde \sigma^{2}_{h, \bphi}}\left(1 + \frac{C_{\bpsi}^2H^4\beta_{\bphi}C_{\bphi}\tilde \sigma_{h, \bphi}}{4\gap{\min}^2C_{\bphi}^2d_{\bphi}\log(Hd_{\bphi}|\Phi|/\delta)}\right),\label{eq:thres-offline}
\end{align}
we have $\|\bphi(s, a)\|_{\Ub_{h, \bphi}^{-1}} < \gap{\min}/ (2H^2C_{\bpsi}\sqrt{\beta_{\bphi}})$. Here $\tilde \sigma_{h, \bphi}$ is the minimum non-zero eigen value of expected offline matrix $\EE_{d_h^{\hat \pi}}[\bphi\bphi^\top]$ and $K$ is the number of trajectories in offline data. 
\end{lemma}

Equipped with these lemmas, we can start our proof.
\begin{proof}[Proof of Theorem~\ref{thm:offline}]
The proof for the first part of the theorem have been shown in Lemma~\ref{lm:err-offline}, where we assume the event $\cE_{\bphi}$ in Lemma~\ref{lm:good-offline} holds. Then suppose the event in Lemma~\ref{lm:sublin-offline} holds, let $K$ be greater than the threshold~\eqref{eq:thres-offline} provided in Lemma~\ref{lm:sublin-offline}, i.e. 
\begin{align}
    K > \max_{\bphi \in \Phi, h \in [H]} \left\{ \frac{32C_{\bphi}^2d_{\bphi}^2\log(Hd_{\bphi}|\Phi|/\delta)}{\tilde \sigma^{2}_{h, \bphi}}\left(1 + \frac{C_{\bpsi}^2H^4\beta_{\bphi}C_{\bphi}\tilde \sigma_{h, \bphi}}{4\gap{\min}^2C_{\bphi}^2d_{\bphi}\log(Hd_{\bphi}|\Phi|/\delta)}\right)\right\},\label{eq:k-cond}.
\end{align}
By Lemma~\ref{lm:sublin-offline}, for all $(s, a, h) \in \cS \times \cA \times [H]$, there exists a $\bphi \in \Phi$ such that $\|\bphi(s, a)\|_{\Ub_{h, \bphi}^{-1}} < \Delta/ (2H^2C_{\bpsi}\sqrt{\beta_{\bphi}})$. Then by Lemma~\ref{lm:err-offline}, for any state $s \in \cS$ at step $h \in [H]$, the sub-optimality is bounded by
\begin{align}
    V_h^*(s) - V_h^\pi(s) &\le 2C_{\bpsi}H\sum_{h' = h}^H \EE_{\pi^*}\left[\min_{\bphi \in \Phi}  \left\{\sqrt{\beta_{\bphi}}\|\bphi(s, a)\|_{\Ub_{h', \bphi}^{-1}}\right\}\middle | s_h = s\right] \notag \\
    &< \sum_{h' = h}^H\EE_{\pi^*}\left[ \frac{\gap{\min}}{H} \middle | s_h = s\right] \notag \\
    &< \gap{\min}.\label{eq:offline1}
\end{align}
On the other hand, since $Q^*_h(s, \pi(s)) \ge V_h^\pi(s)$, by the definition of sub-optimality gap in Definition~\ref{def:gap},
\begin{align}
    V_h^*(s) - V_h^\pi(s) \ge V_h^*(s) - Q_h^*(s, \pi(s)) \ge \ind[\pi_h(s) \neq \pi^*_h(s)]\gap{\min}, \label{eq:offline2}
\end{align}
where the last inequality follows the uniqueness of the optimal policy and all other action will lead to sub-optimality. Combining~\eqref{eq:offline1} and~\eqref{eq:offline2} together suggests that when $K$ satisfies the condition in~\eqref{eq:k-cond},
\begin{align*}
    \ind[\pi_h(s) \neq \pi^*_h(s)]\gap{\min} < \gap{\min},
\end{align*}
which yields that $\pi_h(s) = \pi_h^*(s)$. Applying this to all $(s, h) \in \cS \times [H]$ and replacing $\delta$ with $\delta / (2|\Phi|)$, we can get the claimed result in Theorem~\ref{thm:offline} by union bound.
\end{proof}

\section{Proof of Lemmas in Appendix~\ref{sec:app-offline-1}}
\subsection{Proof of Lemma~\ref{lm:err-offline}}
First we need to introduce the extended value difference lemma provided in~\citet{jin2021pessimism, cai2020provably}
\begin{lemma}[Extended value difference~\citet{cai2020provably}, Lemma A.1~\citet{jin2021pessimism}]\label{lm:evd}
Let $\{\pi\}_h, \{\pi'\}_h$ by any two policies and let $\{\hat Q\}_h$ be any estimated $Q$-function. For any $h \in [H]$, define the estimated value function as $\hat V_h(s) = \hat Q_h(s, \pi_h(s))$. For all $s \in \cS$ we have
\begin{align*}
    \hat V_h(s) - V_h^{\pi'}(s) &= \sum_{h' = h}^H\EE_{\pi'}\left[\hat Q_{h'}(s_{h'}, \pi_{h'}(s_{h'})) - \hat Q_{h'}(s_{h'}, \pi'_{h'}(s_{h'}))) \middle | s_h = s\right]\\
    &\quad + \sum_{h'=h}^H\EE_{\pi'}\left [\hat Q_{h'}(s_{h'}, a_{h'}) - r(s_{h'}, a_{h'}) - [\PP \hat V_{h' + 1}](s_{h'}, a_{h'}) \middle | s_h = s\right],
\end{align*}
where $\EE_{\pi'}$ is taken with respect to the trajectory generated by $\pi'$ using underlying MDP and $a_{h'}$ is defined by $a_{h'} = \pi'_{h'}(s_{h'})$.
\end{lemma}
\begin{proof}
The proof of this lemma is same with Section B.1 in~\citet{cai2020provably} by replacing the initial state from $1$ to any arbitrary step $h$.
\end{proof}

We also provide an error control lemma similar with Lemma~\ref{lm:5} in online setting
\begin{lemma}[Lemma 5 on $B_n^{(2)}$, pp. 23,~\citet{yang2020reinforcement}]\label{lm:4}
Suppose $\cE_{\bphi}$ holds, then for any $(s, a) \in \cS \times \cA$, we have 
\begin{align*}
    \|\bphi(s, a)^\top(\Mb_{h, \bphi} - \Mb_{h, \bphi}^*)\|_2 \le \sqrt{\beta_{\bphi}\bphi^\top(s, a)\Ub_{h, \bphi}^{-1}\bphi(s, a)}.
\end{align*}
\end{lemma}
\begin{proof}
The proof is similar with~\citet{yang2020reinforcement} by fixing $k$ to $K$.
\end{proof}

Then our proof starts by following the idea in~\citet{jin2021pessimism}.
\begin{proof}
First it is obvious that $V_h^*(s) - V_h^\pi(s) = (V_h^*(s) - V_h(s)) - (V_h^\pi(s) - V_h(s))$ where $V_h$ is the estimated value function in Line~\ref{ln:5-offline} in Algorithm~\ref{alg:main-offline}. Note that $V(s) = Q(s, \pi(s))$ where $\pi$ is the output policy from Algorithm~\ref{alg:main-offline}, Lemma~\ref{lm:evd} suggests that by setting $\pi' = \pi^*$, $V^*_h(s) - V_h(s)$ can be written by
\begin{align}
    V_h(s) - V_h^*(s) &= \sum_{h' = h}^H\EE_{\pi^*}\left[ Q_{h'}(s_{h'}, \pi_{h'}(s_{h'})) - Q_{h'}(s_{h'}, \pi^*_{h'}(s_{h'}))) \middle | s_h = s\right] \notag \\
    &\quad + \sum_{h'=h}^H\EE_{\pi^*}\left [Q_{h'}(s_{h'}, a_{h'}) - r(s_{h'}, a_{h'}) - [\PP V_{h' + 1}](s_{h'}, a_{h'}) \middle | s_h = s\right] \notag \\
    &\ge \sum_{h'=h}^H\EE_{\pi^*}\left [Q_{h'}(s_{h'}, a_{h'}) - r(s_{h'}, a_{h'}) - [\PP V_{h' + 1}](s_{h'}, a_{h'}) \middle | s_h = s\right],\label{eq:i1}
\end{align}
where the last inequality is due to the fact that we are executing the greedy policy i.e. $\pi_h(s) = \argmax Q_h(s, a)$ thus $Q_h(s, \pi_h(s)) \ge Q_h(s, \pi^*_h(s))$. Meanwhile, letting $\pi = \pi' = \pi$, Lemma~\ref{lm:evd} suggests that
\begin{align}
    V_h(s) - V_h^\pi(s) = \sum_{h'=h}^H\EE_{\pi}\left [Q_{h'}(s_{h'}, a_{h'}) - r(s_{h'}, a_{h'}) - [\PP V_{h' + 1}](s_{h'}, a_{h'}) \middle | s_h = s\right].\label{eq:i2}
\end{align}
Noticing that both~\eqref{eq:i1} and~\eqref{eq:i2} are the summation about the  $Q_h(s, a) - r(s, a) - [\PP V_{h + 1}](s, a)$, which we will bound next. Recall the calculation rule of $Q$-function in Line~\ref{ln:5-offline} suggests that
\begin{align}
    &Q_h(s, a) - r(s, a) - [\PP V_{h+1}](s, a) \notag \\
    &\quad = \max_{\bphi \in \Phi} \left\{r(s, a) + \sum_{s' \in \cS} \bphi(s, a)^\top\Mb_{h, \bphi}\bpsi(s')V_{h+1}(s') - \Gamma_{h, \bphi}(s, a)\right\} - r(s, a) \notag\\
    &\qquad- \sum_{s' \in \cS}\bphi(s, a)^\top\Mb^*_{h, \bphi}\bpsi(s')V_{h+1}(s')\notag\\
    &\quad = \max_{\bphi \in \Phi} \left\{\sum_{s' \in \cS}\bphi(s, a)^\top(\Mb_{h, \bphi} - \Mb^*_{\bphi})\bpsi(s')V_{h+1}(s') - \Gamma_{h, \bphi}(s, a) \right\}\notag\\
    &\quad = \max_{\bphi \in \Phi} \left\{\bphi(s, a)^\top(\Mb_{h, \bphi} - \Mb^*_{h, \bphi})\bPsi\vb_{h+1} - C_{\bpsi}H\sqrt{\beta_{\bphi}}\|\bphi(s, a)\|_{\Ub_{h,\phi}^{-1}}\right\}\label{eq:i3}
\end{align}
where the last inequality utilize the notation $\bPsi = \left( \bpsi(s_1), \bpsi(s_2), \cdots, \bpsi(s_{|\cS|})\right)^\top \in \RR^{|\cS| \times d'}$ and $\vb_{h+1} = \left(V_{h+1}(s_1), V_{h+1}(s_2), \cdots, V_{h+1}(s_{|\cS|})\right)^\top \in \RR^{|\cS|}$. For each $\bphi \in \Phi$, lemma~\ref{lm:4} suggests that 
\begin{align*}
    \left|\bphi(s, a)^\top(\Mb_{h, \bphi} - \Mb^*_{h, \bphi})\bPsi\vb_{h+1}\right| &\le \left\|(\Mb_{h, \bphi} - \Mb^*_{h, \bphi}) \bphi(s, a)\right\|_2 \left\|\bPsi\vb_{h+1}\right\|_2 \\
    &\le C_{\bpsi}H\sqrt{\beta_{\bphi}}\|\bphi(s, a)\|_{\Ub_{h, \bphi}^{-1}},
\end{align*}
where the first inequality follows the C-S inequality and the second inequality utilizes the fact that $\|\bPsi\vb_{h+1}\|_2 \le C_{\bpsi}\|\vb_{h+1}\|_{\infty} \le C_{\bpsi}H$. Therefore for all $\bphi \in \Phi$,  for any $(s, a, h) \in \cS \times \cA \times [H]$:
\begin{align*}
    -2C_{\bpsi}H\sqrt{\beta_{\bphi}}\|\bphi(s, a)\|_{\Ub_{h, \bphi}^{-1}} \le \bphi(s, a)^\top(\Mb_{h, \bphi} - \Mb^*_{h, \bphi})\bPsi\vb_{h+1} - C_{\bpsi}H\sqrt{\beta_{\bphi}}\|\bphi(s, a)\|_{\Ub_{h,\phi}^{-1}} \le 0.
\end{align*}
Plugging this into~\eqref{eq:i3} yields
\begin{align*}
    &Q_h(s, a) - r(s, a) - [\PP V_{h+1}](s, a) \\
    &\quad= \max_{\bphi \in \Phi} \left\{\bphi(s, a)^\top(\Mb_{h, \bphi} - \Mb^*_{h, \bphi})\bPsi\vb_{h+1} - C_{\bpsi}H\sqrt{\beta_{\bphi}}\|\bphi(s, a)\|_{\Ub_{h,\phi}^{-1}}\right\} \\
    &\quad\ge \max_{\bphi \in \Phi}\left\{-2C_{\bpsi}H\sqrt{\beta_{\bphi}}\|\bphi(s, a)\|_{\Ub_{h, \bphi}^{-1}}\right\}\\
    &\quad = -2C_{\bpsi}H\min_{\bphi \in \Phi}\left\{\sqrt{\beta_{\bphi}}\|\bphi(s, a)\|_{\Ub_{h, \bphi}^{-1}}\right\}
\end{align*}
and $Q_h(s, a) - r(s, a) - [\PP_h V_{h+1}](s, a) \le 0$ for all $(s, a) \in \cS \times \cA$. Plugging the bound back to~\eqref{eq:i1} yields
\begin{align}
    V_h(s) - V_h^*(s) &\ge \sum_{h'=h}^H\EE_{\pi^*}\left [Q_{h'}(s_{h'}, a_{h'}) - r(s_{h'}, a_{h'}) - [\PP V_{h' + 1}](s_{h'}, a_{h'}) \middle | s_h = s\right] \notag \\
    &\quad\ge -2C_{\bpsi}H\sum_{h' = h}^H\EE_{\pi^*}\left[\min_{\bphi \in \Phi}\left\{\sqrt{\beta_{\bphi}}\|\bphi(s, a)\|_{\Ub_{h', \bphi}^{-1}}\right\}\middle | s_h = s\right]\label{eq:ii1}
\end{align}
and back to~\eqref{eq:i2} yields
\begin{align}
    V_h(s) - V_h^{\pi}(s) & =  \sum_{h'=h}^H\EE_{\pi^*}\left [Q_{h'}(s_{h'}, a_{h'}) - r(s_{h'}, a_{h'}) - [\PP V_{h' + 1}](s_{h'}, a_{h'}) \middle | s_h = s\right] \le 0\label{eq:ii2}.
\end{align}
substituting~\eqref{eq:ii1} from~\eqref{eq:ii2} yields the claimed result:
\begin{align*}
    V^*_h(s) - V_h^\pi(s) \le 2C_{\bpsi}H\sum_{h' = h}^H\EE_{\pi^*}\left[\min_{\bphi \in \Phi}\left\{\sqrt{\beta_{\bphi}}\|\bphi(s, a)\|_{\Ub_{h', \bphi}^{-1}}\right\}\middle | s_h = s\right]
\end{align*}
\end{proof}
\subsection{Proof of Lemma~\ref{lm:sublin-offline}}
\begin{proof}
First we show that the covariance matrix $\Ub_{\bphi, h}$ is almost linearly growth with respect to the expectation $\EE_{d_h^{\hat \pi}}[\bphi\bphi^\top]$ and the size of offline data. Considering the formalization of covariance matrix
\begin{align*}
    \Ub_{\bphi, h} &= \Ib + \sum_{(s, a, s') \in \cD_h} \bphi(s, a)\bphi^\top(s, a) \\
    &= \Ib - \sum_{(s, a, s') \in \cD_h} \underbrace{\EE_{d_h^{\hat \pi}}[\bphi(s, a)\bphi^\top(s, a)] - \bphi(s, a)\bphi^\top(s, a)}_{\bepsilon_h} + |\cD_h|\EE_{d_h^{\hat \pi}}[\bphi(s, a)\bphi^\top(s, a)].
\end{align*}
One can verify that $\EE[\bepsilon_h] = \zero$ where the expectation is taken with respect to the randomness in the generation of the offline data. Since we have $\|\bphi(s, a)\|_2^2 \le C_{\bphi}d_{\bphi}$, it is obvious that $\|\bepsilon_h\|_2 \le \|\bphi\bphi^\top\|_2 + \|\EE[\bphi\bphi^\top]\|_2 \le 2\|\bphi\|_2^2 \le 2C_{\bphi}d_{\bphi}$ by triangle's inequality. Then by Lemma~\ref{lm:matazuma}, with probability at least $1 - \delta$, for $|\cD_h| = K$ data,
\begin{align*}
    \lambda_{\max} \left(\sum_{(s, a, s') \in \cD_h} \bepsilon_h\right) \le 4C_{\bphi}d_{\bphi}\sqrt{2K\log(d_{\bphi}/\delta)}.
\end{align*}
Therefore, it's suffice to show that
\begin{align}
    \Ub_{\bphi, h} \succeq K\EE_{d_h^{\hat \pi}}[\bphi(s, a)\bphi^\top(s, a)] + \left(1 - 4C_{\bphi}d_{\bphi}\sqrt{2K\log(d_{\bphi} / \delta)}\right)\Ib \label{eq:u1}
\end{align}

Furthermore, noticing that $\EE_{d_h^{\hat \pi}}[\bphi(s, a)\bphi^\top(s, a)]$ can be always written as 
\begin{align}
    \EE_{d_h^{\hat \pi}}[\bphi(s, a)\bphi^\top(s, a)] = \Qb^\top \text{diag}(\db_r, \zero_{d_{\bphi} - r})\Qb, \label{eq:u5}
\end{align}
where $\Qb$ is an orthogonal matrix, $\db_r \in \RR^r$ is the non-zero eigen values of $\EE_{d_h^{\hat \pi}}[\bphi(s, a)\bphi^\top(s, a)]$ its minimal element as $\tilde \sigma_{h, \bphi}$. $r = \text{rank}(\EE_{d_h^{\hat \pi}}[\bphi(s, a)\bphi^\top(s, a)])$. Then~\eqref{eq:u1} can be formalized as
\begin{align}
    \Qb \Ub_{\bphi, h} \Qb^\top \succeq \text{diag}\left(K\db_r - a\mathbf 1_r + \mathbf 1_r, -a \mathbf 1_{d_{\bphi} -r} + \mathbf 1_{d_{\bphi} - r} \right), \label{eq:u2}
\end{align}
where $a = 4C_{\bphi}d_{\bphi}\sqrt{2K \log(d_{\bphi} / \delta)}$ is in the order of $\sqrt K$. 
On the other hand, since $\Ub_{\bphi, h} \succeq \Ib$, then $\Qb\Ub_{\bphi, h}\Qb^\top \succeq \Qb\Qb^\top = \Ib$. Combining this with~\eqref{eq:u2} we can conclude that
\begin{align}
    \Qb\Ub_{\bphi, h}\Qb^\top \succeq \text{diag} \left(K\db_{r} - a\mathbf 1_r + \mathbf 1_r, \mathbf 1_{d_{\bphi} - r}\right).\label{eq:u3}
\end{align}
Noticing the minimal element of $\db_r$ is $\tilde \sigma_{h, \bphi}$, then when $K\tilde \sigma_{h, \bphi} \ge a$, which we will verify later, the RHS of~\eqref{eq:u3} is positive definite, which implies that 
\begin{align}
    \Qb\Ub^{-1}_{\bphi, h}\Qb^\top \preceq \text{diag} \left(K\db_{r} - a\mathbf 1_r + \mathbf 1_r, \mathbf 1_{d_{\bphi} - r}\right)^{-1}. \label{eq:u4}
\end{align}

By union bound~\eqref{eq:u4} holds for all $\bphi \in \Phi$ and $h \in [H]$ with probability at least $1 - H|\Phi|\delta$. 
Then for any $(s, a, h) \in \cS \times \cA \times [H]$, according to Assumption~\ref{asm:offline}, there exists $\bphi \in \Phi$ and $\yb \in \RR^d$ such that $\bphi(s, a) = \EE_{d_h^{\hat \pi}}[\bphi(s, a)\bphi^\top(s, a)] \yb$, combining this with~\eqref{eq:u5} and \eqref{eq:u4} yields that 
\begin{align}
    &\bphi^\top(s, a) \Ub_{h, \bphi}^{-1} \bphi(s, a) \notag \\
    &\quad= \yb^\top \EE_{d_h^{\hat \pi}}[\bphi(s, a)\bphi^\top(s, a)]^\top \Ub_{h, \bphi}^{-1}\EE_{d_h^{\hat \pi}}[\bphi(s, a)\bphi^\top(s, a)] \yb \notag \\
    &\quad= \yb^\top \Qb^\top \diag(\db_r, \zero_{d_{\bphi} - r}) \Qb \Ub_{h, \bphi}^{-1} \Qb^\top \diag(\db_r, \zero_{d_{\bphi} - r}) \Qb \yb \notag \\
    &\quad\le \yb^\top \Qb^\top \diag(\db_r, \zero_{d_{\bphi} - r})\text{diag}(K\db_r - a\mathbf 1_r + \mathbf 1_r, \mathbf 1_{d_{\bphi} - r})^{-1}\diag(\db_r, \zero_{d_{\bphi} - r}) \Qb\yb \label{eq:u6},
\end{align}
where the last inequality follows~\eqref{eq:u4}. Noticing that $\text{diag}(\db_r, \zero_{d_{\bphi - r}})\Qb \yb$ can be written as $\text{diag}(\db_r, \zero_{d_{\bphi - r}})\Qb \yb = \left(\zb_r^\top, \zero_{d_{\bphi} - r}^\top\right)^\top$. Therefore~\eqref{eq:u6} becomes
\begin{align}
    \bphi^\top(s, a) \Ub_{h, \bphi}^{-1} \bphi(s, a) &\le \left(\zb_r^\top, \zero_{d_{\bphi} - r}^\top\right)^\top\text{diag}(K\db_r - a\mathbf 1_r + \mathbf 1_r, \mathbf 1_{d_{\bphi} - r})^{-1}\left(\zb_r^\top, \zero_{d_{\bphi} - r}^\top\right)^\top \notag \notag \\
    &= \zb_r^\top \diag(K\db_r - a\mathbf 1_r + \mathbf 1_r)^{-1}\zb_r. \label{eq:u7}
\end{align}
Noticing that 
\begin{align*}
    \|\zb_r\|_2 = \left\|\text{diag}(\db_r, \zero_{d_{\bphi} - r}) \Qb \yb\right\| = \left\|\Qb\bphi(s, a)\right\|_2 = \left\|\bphi(s, a)\right\|_2 \le \sqrt{C_{\bphi}d_{\bphi}},
\end{align*}
where the last equality comes form Definition~\ref{asm:lin}, \eqref{eq:u7} finally becomes 
\begin{align}
    \bphi^\top(s, a) \Ub_{h, \bphi}^{-1} \bphi(s, a) \le \zb_r^\top \diag(K\db_r - a\mathbf 1_r + \mathbf 1_r)^{-1}\zb_r \le \frac{C_{\bphi}d_{\bphi}}{K\tilde \sigma_{h, \bphi} - a + 1} \le \frac{C_{\bphi}d_{\bphi}}{K\tilde \sigma_{h, \bphi} - a}, \label{eq:c1}
\end{align}
where the second last inequality is due to $\lambda_{\max} \left(\diag(K\db_r - a\mathbf 1_r + \mathbf 1_r)^{-1}\right) = (K\tilde \sigma_{h, \bphi} - a + 1)^{-1}$ and the last inequality utilizes the assumption that $K\tilde \sigma_{h, \bphi} \ge a 0$

Next, to control $\|\bphi(s, a)\|_{\Ub_{h, \bphi}^{-1}} \le \Delta / (2 C_{\bpsi} H^2\sqrt{\beta_{\bphi}}) =: B$, by~\eqref{eq:c1}, it suffices to control 
\begin{align}
    \frac{C_{\bphi}d_{\bphi}}{K\tilde \sigma_{h, \bphi} - 4C_{\bphi}d_{\bphi}\sqrt{2K\log(d_{\bphi}/ \delta)}} \le B^2.\label{eq:c2}
\end{align}
Denoting $\sqrt{K} = 4C_{\bphi}d_{\bphi}\sqrt{2\log(d_{\bphi} / \delta)}\tilde \sigma_{h, \bphi}^{-1} x$, then the constrain $K \tilde \sigma_{h, \bphi} \ge 4C_{\bphi}d_{\bphi}\sqrt{2K\log(d_{\bphi} / \delta)}$ is equivalent with $x \ge 1$, and~\eqref{eq:c2} becomes 
\begin{align}
    \frac{C_{\bphi}d_{\bphi}}{B^2} \le \frac{32C^2_{\bphi}d^2_{\bphi}\log(d_{\bphi} / \delta)}{\tilde \sigma_{h, \bphi}}(x^2 - x). \label{eq:c3}
\end{align}
Since the sufficient condition of inequality $x^2 - x - c > 0$ is $x > (1 + \sqrt{1 + 2c}) / 2$ which could be implied by $x > \sqrt{1 + 2c}$ by C-S inequality, the sufficient condition of~\eqref{eq:c3} could be written as
\begin{align}
    x > \sqrt{1 + \frac{C_{\bphi}d_{\bphi}\tilde \sigma_{h, \bphi}}{16B^2C_{\bphi}^2d_{\bphi}^2\log(d_{\bphi} / \delta)}}, \label{eq:c4}
\end{align}
which can imply the constrain that $x \ge 1$. Plugging the notations $B := \Delta / (2C_{\bpsi}H^2\sqrt{\beta_{\bphi}})$ and $\sqrt{K} = 4C_{\bphi}d_{\bphi}\sqrt{2\log(d_{\bphi} / \delta)}\tilde \sigma_{h, \bphi}^{-1} x$ back into~\eqref{eq:c4} yields for all $(s, a, h) \in \cS \times \cA \times [H]$, with probability at least $1 - H|\Phi|\delta$, there exists $\bphi \in \Phi$ such that when  
\begin{align*}
    K > \frac{32C_{\bphi}^2d_{\bphi}^2\log(d_{\bphi}/\delta)}{\tilde \sigma^{2}_{h, \bphi}}\left(1 + \frac{C_{\bpsi}^2H^4\beta_{\bphi}C_{\bphi}\tilde \sigma_{h, \bphi}}{4\Delta^2C_{\bphi}^2d_{\bphi}\log(d_{\bphi} / \delta)}\right),
\end{align*}
$\|\bphi(s, a)\|_{\Ub_{h, \bphi}} < \gap{\min} / (2C_{\bpsi}H^2\sqrt{\beta_{\bphi}})$. Replacing $\delta$ by $\delta / (H|\Phi|)$ we can get the claimed result.
\end{proof}
\end{document}